\newtheorem{thm}{Theorem}
\newtheorem{lem}{Lemma}
\newtheorem{cor}{Corollary}
\begin{document}

\title{Adversarial Bandits with Multi-User Delayed Feedback: Theory and Application}

\author{Yandi Li,
    Jianxiong Guo,~\IEEEmembership{Member,~IEEE},
    Yupeng Li,
    Tian Wang,
    and Weijia Jia,~\IEEEmembership{Fellow,~IEEE}
	\thanks{This work is an extended version of the paper \cite{li2023Modified}, which has been accepted at the 29th International Computing and Combinatorics Conference (COCOON 2023), Hawaii, USA.
 
    Yandi Li is with the Hong Kong Baptist University, Hong Kong, and also with the Guangdong Key Lab of AI and Multi-Modal Data Processing, Department of Computer Science, BNU-HKBU United International College, Zhuhai 519087, China. (E-mail: liyandi@uic.edu.cn)
	
	Jianxiong Guo, Tian Wang, and Weijia Jia are with the Advanced Institute of Natural Sciences, Beijing Normal University, Zhuhai 519087, China, and also with the Guangdong Key Lab of AI and Multi-Modal Data Processing, BNU-HKBU United International College, Zhuhai 519087, China. (E-mail: jianxiongguo@bnu.edu.cn; cs\_tianwang@163.com; jiawj@bnu.edu.cn)

    Yupeng Li is with the Department of Interactive Media, Hong Kong Baptist University, Hong Kong. (E-mail: ivanypli@gmail.com)
	
	\textit{(Corresponding author: Jianxiong Guo.)}
	}
\thanks{Manuscript received April xxxx; revised August xxxx.}}

\markboth{}%
{Shell \MakeLowercase{\textit{et al.}}: A Sample Article Using IEEEtran.cls for IEEE Journals}


\maketitle

\begin{abstract}
The multi-armed bandit (MAB) models have attracted significant research attention due to their applicability and effectiveness in various real-world scenarios such as resource allocation, online advertising, and dynamic pricing. As an important branch, the adversarial MAB problems with delayed feedback have been proposed and studied by many researchers recently where a conceptual adversary strategically selects the reward distributions associated with each arm to challenge the learning algorithm and the agent experiences a delay between taking an action and receiving the corresponding reward feedback. However, the existing models restrict the feedback to be generated from only one user, which makes models inapplicable to the prevailing scenarios of multiple users (e.g. ad recommendation for a group of users). In this paper, we consider that the delayed feedback results are from multiple users and are unrestricted on internal distribution. In contrast, the feedback delay is arbitrary and unknown to the player in advance. Also, for different users in a round, the delays in feedback have no assumption of latent correlation. Thus, we formulate an adversarial MAB problem with multi-user delayed feedback and design a modified EXP3 algorithm MUD-EXP3, which makes a decision at each round by considering the importance-weighted estimator of the received feedback from different users. On the premise of known terminal round index $T$, the number of users $M$, the number of arms $N$, and upper bound of delay $d_{max}$, we prove a regret of $\mathcal{O}(\sqrt{TM^2\ln{N}(N\mathrm{e}+4d_{max})})$. Furthermore, for the more common case of unknown $T$, an adaptive algorithm AMUD-EXP3 is proposed with a sublinear regret with respect to $T$. Finally, extensive experiments are conducted to indicate the correctness and effectiveness of our algorithms.
\end{abstract}

\begin{IEEEkeywords}
Adversarial Bandit, Multi-User Delayed Feedback, EXP3, Regret Analysis, Online Learning, Applications.
\end{IEEEkeywords}

\section{Introduction}
\IEEEPARstart{T}{he} multi-armed bandit (MAB) problems are a collection of sequential decision-making problems that attract increasing attention for substantial application scenarios such as task offloading in edge computing \cite{zhou2020online}, edge resource scheduling \cite{han2021cache}, worker selection for mobile crowdsourcing \cite{chen2022learning}, recommendation systems \cite{liu2018contextual} and online advertising \cite{avadhanula2021stochastic}. They refer to adopting an action at each round and collecting feedback information for subsequent action selection. 

It is known that stochastic bandits and adversarial bandits are two underpinnings of multi-armed bandits. In conventional stochastic bandits, the feedback generated from actions is assumed to follow a fixed but unknown distribution, where the player can gradually estimate the expected feedback through continuous interaction with the environment. Yet, the potential feedback distribution tends to alter with round, which gives rise to adversarial bandits where the feedback distribution for each arm is chosen adversarially. Moreover, in a more practical situation, the feedback can suffer variable delays before being observed by the player in the real world. Thus, it is not uncommon for people to face adversarial bandit problems with delayed feedback settings \cite{wan2022bounded,masoudian2022best,bistritz2019online,cesa2019delay,thune2019nonstochastic}. However, the existing works only consider single-user feedback situations but not feedback from multiple users at a round. As an example, for online advertising, an advertisement is taken out for multiple users at a round and the delays in receiving those users' feedback are different. The new advertisement has to be put in before receiving all the user feedback of the last-round advertisement. In this situation, the user group can vary with rounds, corresponding to arbitrarily changed feedback in adversarial bandits.

In this paper, we focus on oblivious adversary bandits with multi-user obliviously delayed feedback, where the feedback and delays for all arms, all users, and all rounds are arbitrarily chosen in advance. Specifically, the player executes an arm out of total $N$ arms on $M$ distinct users at round $t$. Then, the feedback from user $j$ is observed at round $t+d_{t}^{j}$, where $d_{t}^{j}$ is the delay of the feedback. This model can be adopted in many real-world applications. For example, as shown in Figure \ref{fig_eg}, think of a scenario where a recommendation agent chooses an item to recommend to a group of users at a round, such as recommending an online advertisement to the audience. Assume there are a total of $N$ items that can be selected, and the recommendation agent selects only one item each round to recommend based on the received feedback from users. In the online advertising case, for instance, the feedback of a user can be his or her comments or rating scores on the recommended items, reflecting the interest the user has in this advertisement. Due to the fluctuation introduced by the randomness in user preference, a group of users can be considered to share a feedback distribution at a round and each single-user feedback is equivalently sampled from it \cite{yuan2014generative,li2023online}. The user group can be time-varying in terms of its constitution and preference of users (i.e. the constituting users of the target group might change with rounds and so does their preference for those recommended items), which results in dynamic feedback distributions even for a fixed item. Not only do different items correspond to different feedback distributions, but even the same item could result in different feedback distributions at different rounds due to the time-varying nature of the user group. Specifically, the feedback distribution with respect to taking item $i$ at round $t_1$ could evolve and be different from the distribution of taking the same item at round $t_2$. Besides, the users who receive the recommendation may not return the feedback instantly but after variably delayed rounds that are particular to the users and can hardly be known by the agent in advance. Thus the agent can only make recommendation decisions based on the already received feedback. Therefore, the time-varying user group along with delayed feedback brings great challenges to developing optimal policy. This case can be in essence formulated as the proposed multi-user delayed-feedback adversarial bandit models. That is, the recommendation agent is seen as the player, while the candidate items act as the arms in our model.

\begin{figure}[!t]
	\centering
    \includegraphics[width=\linewidth]{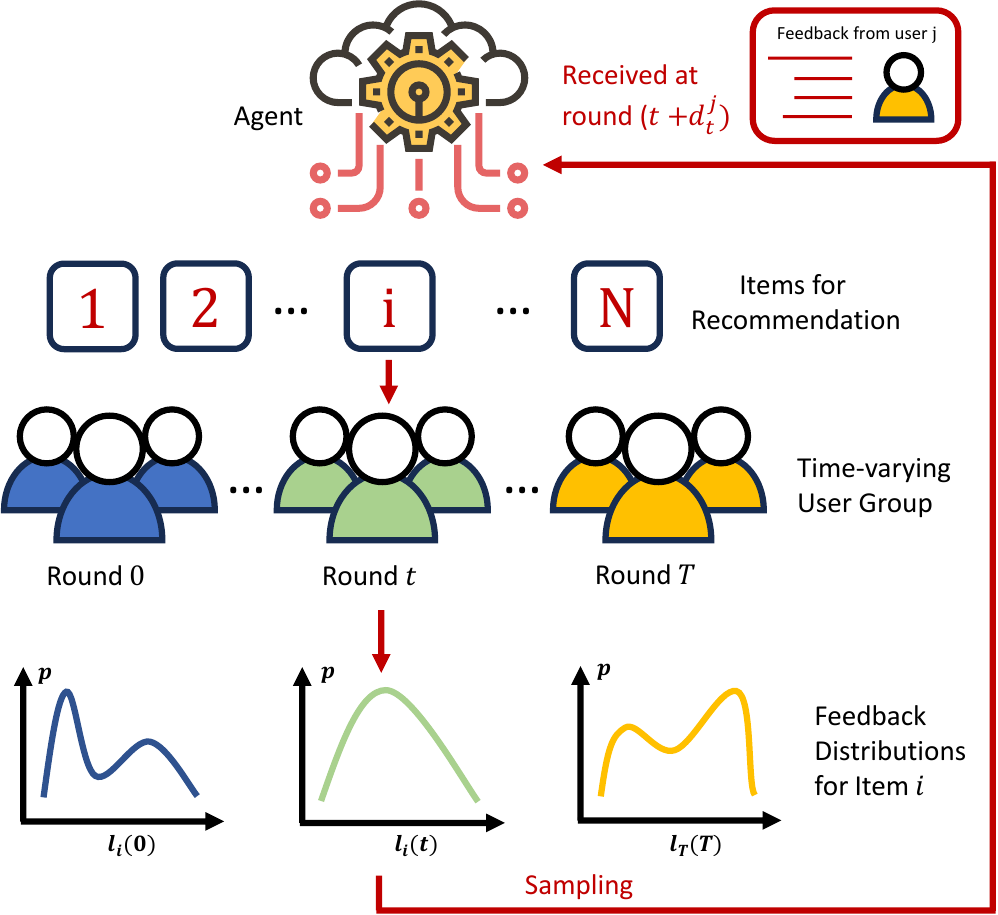}
	\caption{The illustrative example of recommending items to the time-varying user group. At round $t$, the agent takes the item $i$ as the action, and then each user $j$ in user group (green group) gives feedback $l^j_i(t)$ according to the feedback distribution for item $i$ at round $t$. Finally, this feedback is received by the agent at round $t+d_{t}^{j}$ with delay $d_{t}^{j}$.}
	\label{fig_eg}
\end{figure}

To solve this adversarial multi-armed bandit problem with multi-user delayed feedback, we propose a modified EXP3 algorithm \cite{auer1995gambling} named MUD-EXP3 which effectively distinguishes the potential optimal arms as the terminal round index $T$ and the upper bound of delay $d_{max}$ are known in advance, corresponding to those informed situations in reality. At each round, the player chooses an arm according to the importance-weighted estimator of the received feedback from different users. In addition, we conduct a detailed theoretical analysis and prove the upper bound of regret for the proposed algorithm. Moreover, for less-informed cases of unknown $T$, which is pervasive in the real world, MUD-EXP3 cannot guarantee a sublinear regret upper bound according to Theorem \ref{thm1}. For example, the duration of a recommendation project would usually not be fixed at the very beginning but relies on the budget and the subsequent commercial effects, hence the terminal index $T$ is undetermined. To overcome this, we adapt MUD-EXP3 by using a doubling trick method \cite{cesa1997use,bistritz2019online} with a dynamic learning rate to achieve a sublinear regret, which removes the dependence upon $T$. Finally, we design and conduct simulated experiments with diverse environment settings to investigate the performance of the algorithms under different cases.

In the case above, the variation in the feedback distribution can resort to the problem of adversarial bandit \cite{auer1995gambling}, a.k.a. the non-stochastic bandits, of which the feedback of a certain action can arbitrarily change over rounds as if they are selected by an adversary. Furthermore, it experiences delays between conducting actions and receiving feedback. This triggers off the problem of adversarial bandits with delayed feedback \cite{wan2022bounded,masoudian2022best,bistritz2019online,cesa2019delay,thune2019nonstochastic}. However, the existing works only consider single-user feedback situations but not feedback from multiple users at a round. As an example, for online advertising, an advertisement is taken out for multiple users at a round and the delays in receiving those users' feedback are different. The new advertisement has to be put in before receiving all the user feedback of the last-round advertisement. In this situation, the user group can vary with rounds, corresponding to arbitrarily changed feedback in adversarial bandits. In this paper, we focus on oblivious adversary bandits with multi-user obliviously delayed feedback, where the feedback and delays for all arms, all users, and all rounds are arbitrarily chosen in advance. Specifically, the player executes an arm out of total $N$ arms on $M$ distinct users at round $t$. Then, the feedback from user $j$ is observed at round $t+d_{t}^{j}$, where $d_{t}^{j}$ is the delay of the feedback. To solve this problem, we propose a modified EXP3 algorithm \cite{auer1995gambling} named MUD-EXP3 which effectively distinguishes the potential optimal arms as the terminal round index $T$ and the upper bound of delay $d_{max}$ are known in advance, corresponding to those informed situations in reality. At each round, the player chooses an arm according to the importance-weighted estimator of the received feedback from different users. In addition, we conduct a detailed theoretical analysis and prove the upper bound of regret for the proposed algorithm. Moreover, for less-informed cases of unknown $T$, which is pervasive in the real world, MUD-EXP3 cannot guarantee a sublinear regret upper bound according to Theorem \ref{thm1}. For example, the duration of a recommendation project would usually not be fixed at the very beginning but relies on the budget and the subsequent commercial effects, hence the terminal index $T$ is undetermined. To overcome this, we adapt MUD-EXP3 by using a doubling trick method \cite{cesa1997use,bistritz2019online} with a dynamic learning rate to achieve a sublinear regret, which removes the dependence upon $T$. Finally, we design and conduct simulated experiments with diverse environment settings to investigate the performance of the algorithms under different cases.

This manuscript is a journal extension to our previous conference article \cite{li2023Modified}. Compared with our conference version, in the present paper, we further extend our theory in order to cover a more practical situation, and we evaluate the ability of our methods by conducting numerical simulations. First of all, realizing the limitation that the algorithm proposed in our conference paper relies on the prior information of the round horizon, we propose an adaptive algorithm (i.e. Algorithm \ref{AMUD-EPX3}) accommodated to the less-informed cases in the present paper. Also, a detailed analysis of the regret is enunciated (Subsection \ref{Reg of AMUD-EXP3}). In addition, extensive numerical evaluations of the two proposed algorithms are demonstrated along with the state-of-the-art baselines (Section \ref{sec6}). The main contributions of the paper are summarized as the following points:
\begin{itemize}
    \item We introduce a sequential decision-making problem with multi-user delayed feedback where no assumptions are made for the distributions of feedback and delays, which is ubiquitous in real-life situations. Then, we model it by using an adversarial bandit framework considering the trait of varying individual loss.
    \item We propose a modified EXP3 algorithm to adapt to our problem setting, which adopts an importance-weighted estimating method for the received feedback from different users in order to effectively balance exploration and exploitation.
    \item For the case where the terminal round index $T$ is unknown as a prior, we propose an adaptive algorithm to address this problem based on a doubling trick considering delays.
    \item Sound and detailed theoretical analysis are presented to derive the regret upper bound of the proposed algorithms, achieving sublinear properties with regard to the terminal round index $T$.
    \item We conduct extensive simulations and demonstrate the superior performance of the proposed algorithms over several state-of-the-art non-stochastic bandit baselines. 
\end{itemize}

The remainder of this paper is organized as follows. Section \ref{sec2} reviews the previous related works from the classic multi-armed bandit algorithms to recent ones with delayed feedback settings. Section \ref{sec3} formulates the problem we propose into an adversarial multi-armed bandit problem and Section \ref{sec4} introduces two algorithms we propose to solve this problem in different cases. Section \ref{sec5} provides a detailed analysis of the regret upper bound for our proposed algorithms. Section \ref{sec6} presents and discusses the numerical evaluation results and Section \ref{sec7} concludes.

\section{Related Work}\label{sec2}
In this section, we first briefly review the development of general multi-armed bandit approaches in terms of stochastic bandits, adversarial bandits, and contextual bandits. Then, we focus on surveying the bandit problems with delayed feedback settings which are closely related to our work in this article. Finally, a more complex setting derived from the delayed feedback, named ``composite anonymous feedback'', is covered in short.

\subsection{Multi-Armed Bandit}
Originating from the field of probability theory, Multi-Armed Bandit algorithms present a class of reinforcement learning techniques designed to address the exploration-exploitation dilemma. These algorithms have found applications in various domains, including online advertising \cite{auer2002finite} and clinical trials \cite{kaufmann2013information}. The classical MAB problem assumes stochastic rewards, where each arm provides rewards drawn from an unknown probability distribution. Seminal works like \cite{robbins1952some} laid the groundwork for understanding exploration-exploitation trade-offs in sequential decision-making. 

In the stochastic bandit setting, the reward distributions associated with arms remain stationary over rounds. Pioneering algorithms such as UCB1 \cite{auer2002finite} and Thompson Sampling \cite{agrawal2012analysis} have demonstrated near-optimal strategies for balancing exploration and exploitation in stochastic environments. Based on UCB1, Audibert \textit{et al.} \cite{audibert2009minimax} proposed MOSS, also known as the Minimax Optimal Strategy in the Stochastic scenario, which incorporated a minimax approach to ensure that it performs well in the worst-case scenario among a class of stochastic bandit problems. Maillard \textit{et al.} \cite{maillard2011finite} then proposed the KL-UCB algorithm that incorporated the Kullback-Leibler divergence, a measure of information loss in probability distributions. It provided a more nuanced exploration-exploitation balance, allowing it to better handle uncertainties in the reward probabilities of different arms and leading to improved decision-making in scenarios where accurate estimation of these probabilities was crucial. Alongside KL-UCB, Kaufmann \textit{et al.} \cite{kaufmann2012bayesian} created a Bayesian version of UCB that was explicitly designed, proving to be an advanced algorithm with asymptotic efficiency. Empirical results demonstrated its superior performance over KL-UCB.

In the adversarial bandit setting, the rewards for each arm are chosen adversarially, posing a significant challenge. Algorithms like Exp3 \cite{auer1995gambling} have been developed to handle the adversarial nature of the problem, showcasing the adaptability of MAB algorithms. Auer \textit{et al.} \cite{auer2002nonstochastic} developed an enhanced version of Exp3, named Exp4 (Exp3 with expert advice), incorporating specific advice, represented by a selection coefficient, for each arm." Then Zoghi \textit{et al.} \cite{zoghi2014relative} proposed the RUCB algorithm (Relative Upper Confidence Bound) extending UCB by incorporating pairwise probabilities to select promising arms, achieving a sharp finite-time regret bound on a broad class of dueling bandit problems.

Moreover, contextual bandit extends the MAB framework by incorporating contextual information. Each arm is associated with a context, and the reward depends not only on the arm but also on the context. Algorithms such as LinUCB \cite{li2010contextual}, NeuralBandit \cite{allesiardo2014neural}, Thompson Sampling for Contextual Bandits \cite{agrawal2013thompson} and KernelUCB \cite{valko2013finite} address the contextual nature of the problem, enabling personalized decision-making based on available context information. 

Overall, researchers have extensively investigated the theoretical foundations and practical implementations of Multi-Armed Bandit algorithms, showcasing their effectiveness in adaptive decision-making scenarios. 

\subsection{Delayed Feedback}\label{sec2-2}
For stochastic MAB, Dudik \textit{et al.} \cite{dudik2011efficient} first introduced the delay mechanism in the contextual bandit setting and proposed an algorithm achieving $\mathcal{O}(\sqrt{K\log(NT)}(d+\sqrt{T}))$ regret bound. Joulani \textit{et al.} \cite{joulani2013online} investigated the impact of delayed feedback in online learning, particularly in scenarios like web advertisement and distributed learning, where feedback arrives with delays. The study revealed that delays lead to increased regret in a multiplicative manner for adversarial problems and additively for stochastic problems. The paper presented meta-algorithms that adapt existing non-delayed algorithms to handle delays efficiently, and also introduced modified versions of the UCB algorithm specifically designed for stochastic bandit problems with delayed feedback, offering lower complexity than the general meta-algorithms. Joulani \textit{et al.} \cite{joulani2016delay} studied delayed feedback under full information setting rather than adversarial bandits, and proved regret bound of $\mathcal{O}(\sqrt{(T+D)\ln{N}})$ by reducing the problem from non-delayed feedback full information setting, where $D=\sum_{t=1}^{T}d_t$ is delay sum. Vernade \textit{et al.} \cite{vernade2017stochastic} proposed the delayed stochastic bandit model based on the framework by Chapelle \cite{chapelle2014modeling} with a partially observed feedback setting, assuming known delay distribution and bounded expected delay. Gael \textit{et al.} \cite{gael2020stochastic} weakened the strong assumptions on the delay distributions with only a bound on the tail of the delay, and designed a UCB-based algorithm to solve it.

For non-stochastic MAB, there also exist studies considering delayed feedback. Cesa-Bianchi \textit{et al.} \cite{cesa2019delay} proposed a cooperative version of the EXP3 for delayed feedback under the bandit setting. They achieved regret upper bound of $\mathcal{O}(\sqrt{(NT+D)\ln{N}})$. Based on the study by Cesa-Bianchi \textit{et al.} \cite{cesa2019delay}, Thune \textit{et al.} \cite{thune2019nonstochastic} proposed a wrapper algorithm to eliminate the restriction on $T$ and $D$ for adversarial bandit setting with delayed feedback, allowing regret bound of $\mathcal{O}(\sqrt{(NT+D)\ln{N}})$ for unknown $T$ and $D$. Bistritz \textit{et al.} \cite{bistritz2019online} also proposed a modified EXP3 algorithm for the same problem as Thune \textit{et al.} \cite{thune2019nonstochastic} and proved the Nash Equilibrium for a two-player zero-sum game with this algorithm. Zimmert and Seldin \cite{zimmert2020optimal} presented a Follow the Regularized Leader algorithm for adversarial bandits with arbitrary delays and achieved the upper bound of $\mathcal{O}(\sqrt{NT}+\sqrt{DT\log N})$ on the regret, requiring no prior knowledge of $D$ or $T$. In recent years there has been an increasing interest in algorithms that perform well in both regimes with no prior knowledge of stochastic or adversarial regimes \cite{bubeck2012best,wei2018more}, known as best-of-both-worlds problems. Masoudian \textit{et al.} \cite{masoudian2022best} followed Zimmert and Seldin \cite{zimmert2020optimal} and introduce delayed feedback setting to the best-of-both-worlds problem. They proposed a slightly modified algorithm and achieved a best-of-both-worlds regret guarantee for both adversarial and stochastic bandits.

\subsection{Composite Anonymous Feedback}
On the basis of delayed feedback setting, Cesa-Bianchi \textit{et al.} \cite{cesa2018nonstochastic} raised a more complex problem, where the feedback is both anonymous and delayed, and can be partly observed at different rounds. It means the player can only observe the sum of partial feedback generated from different rounds. They proposed a general reduction technique that enables the conversion of a standard bandit algorithm to operate in this harder setting. Pike-Burke \textit{et al.} \cite{pike2018bandits} also investigated the setting in which the learner observed the
sum of rewards that arrive at the same round, but they assumed that the expected delay is known. Wang \textit{et al.} \cite{wang2021adaptive} proposed a modified EXP3 algorithm for the problem in \cite{cesa2018nonstochastic} with non-oblivious delays, requiring no knowledge of delays in advance and achieving an $\mathcal{O}(\sqrt{d+N\log{N}}T^{\frac{2}{3}})$ regret. Wan \textit{et al.} \cite{wan2022bounded} extended the problem into anonymous delayed feedback with non-oblivious loss and delays and proved both the lower and upper bounds.

\subsection{Discussion}
Our work is distinct from the existing works. The previous works mainly consider the bandit problem where the feedback arrives instantly. A few of them incorporate delayed feedback in an adversarial bandit environment, such as the representative works reviewed in Section \ref{sec2-2}, which are more challenging than stochastic bandit environments. However, all of them study the single-user setting, while the real applications are commonly associated with multiple users which would certainly complicate the problem and bring more challenges, let alone many of them assume distributions of delays. Our work solves the multi-user adversarial bandit problem with delayed feedback, where each user is likely to generate delayed feedback that is independent of others with no assumption of the delay distribution. Therefore, we largely approximate real-world applications by addressing this new challenging problem. With the application-oriented enhancements of this problem, our work can be further employed in real-world scenarios such as complex network management in mobile communication \cite{zhang2022optimal} and task offloading in intelligent edge computing \cite{ma2023reliability} with dynamic environments.

\section{Problem Formulation}\label{sec3}
In this section, we formulate the problem as an adversarial bandit. Suppose the time is discretized into consecutive rounds. We consider an adversarial multi-armed bandit environment with $N$ arms where the player selects an arm $A_t$ at round $t$ and the corresponding feedback is generated from $M$ individual users. We use the notation $[K]=\{1,2,\cdots,K\}$ for brevity, then we define the set of arm indexes as $\mathcal{N}=\{i\vert i\in [N]\}$, the set of user indexes as $\mathcal{M}=\{j\vert j\in [M]\}$ and the set of round indexes as $\mathcal{T}=\{t\vert t\in [T]\}$. We use the \textit{loss} rather than the \textit{reward} to represent the feedback, denoted as $l_{i}^{j}(t)$ for loss from user $j$ by selecting arm $i$ at round $t$. The loss $l_{A_t}^{j}(t)$ is observed by the player after $d_t^j$ rounds where the delay $d_t^j$ is a positive integer (i.e., $d_t^j\geq 1$) and $d_{max}=\max\{d_t^j\}$. In other words, the player will observe a bunch of feedback losses $\{l_{A_s}^{j}(s)\vert s+d_s^j=t\}$ at round $t$. Without loss of generality, we assume $l_{i}^{j}(t)\in [0,1]$. Note that there is no restriction on the distribution of $d_t^j$ for generality. The losses of arms and the delays are arbitrarily chosen by an adversary prior to the start of the game, which is known as an oblivious adversary. 

The objective is to find a policy of the player for the sequential arm selection in order to approximate the performance of the best fixed arm in hindsight. We use the expected \textit{regret} as the measure of the approximation, which is defined as the difference between the expected cumulative loss induced by the player and the cumulative loss of the best arm in hindsight, as shown below.
\begin{equation}\label{ori_regret}
    \mathcal{R} = \mathbb{E}\left[\sum_{t=1}^{T}\sum_{j=1}^{M}l_{A_t}^{j}(t)\right] - \min_{i\in\mathcal{N}}\sum_{t=1}^{T}\sum_{j=1}^{M}l_{i}^{j}(t).
\end{equation}

\begin{algorithm}[!t]
    \caption{\text{MUD-EXP3}}
    \begin{algorithmic}[1]\label{MUD-EPX3}
        \renewcommand{\algorithmicrequire}{\textbf{Input:}}
    	\renewcommand{\algorithmicensure}{\textbf{Output:}}
        \REQUIRE $\mathcal{N}$, $\mathcal{M}$, $\mathcal{T}$, learning rate $\eta$, upper bound on the delays $d_{max}$;
        \STATE Truncate the learning rate: $\eta'=\min_{i\in\mathcal{N}}\left\{\eta, \frac{1}{MN\mathrm{e}(\Delta+1)}\right\}$;
        \STATE Initialize $\hat{L}_{i}(1)=0$, $p_i(1)=\frac{1}{N}$ for any arm $i\in\mathcal{N}$;
        \FOR{$t=1,2,\cdots,T$}
            \STATE Draw an arm $A_t\in\mathcal{N}$ according to the distribution $\boldsymbol{p}(t)$;
            \STATE Observe a set of delayed losses $\{l_{A_s}^{j}(s)\vert (s,j)\in \Phi_t\}$;
            \STATE Update the cumulative estimated loss $\hat{L}_{i}(t)$ for all $i\in \mathcal{N}$:
            \STATE \quad $\hat{l}_{i}^{j}(s) = \frac{\mathbb{I}\{A_s=i\}\cdot l_{i}^{j}(s)}{p_i(s)}$, $(s,j)\in \Phi_t$; 
            \STATE \quad $\ell_{i}(t)=\sum_{(s,j)\in \Phi_t}\hat{l}_{i}^{j}(s)$;
            \STATE \quad $\hat{L}_{i}(t)=\hat{L}_{i}(t-1)+\ell_{i}^{j}(t)$
            \STATE Update the distribution $\boldsymbol{p}(t+1)$:
            \STATE \quad $W_{i}(t)=\exp (-\eta^{\prime}\hat{L}_{i}(t))$; 
            \STATE \quad $p_{i}(t+1)=\frac{W_{i}(t)}{\sum_{k=1}^{N}W_{k}(t)}$ for all $i\in \mathcal{N}$;
            \STATE \quad$\boldsymbol{p}(t+1)=[p_1(t+1),\cdots,p_N(t+1)]$;
        \ENDFOR
    \end{algorithmic}
\end{algorithm}

\section{Algorithm}\label{sec4}
In this section, two algorithms are proposed for our problem. Depending on whether the parameter of $T$ is known or not, we tactically choose one of them to utilize.

\subsection{Multi-User Delayed EXP3}
The first algorithm we propose is named MUD-EXP3 (\textbf{M}ulti-\textbf{U}ser \textbf{D}elayed \textbf{EXP3}). MUD-EXP3 is devised based on the well-known EPX3 algorithm and takes into account the multi-user delayed feedback. The detailed algorithm is laid out in Algorithm \ref{MUD-EPX3}. We assume the terminal round $T$ and the upper bound on the delays $d_{max}$ are known. The input learning rate $\eta$ has to be truncated first if it is over $1/(MN\mathrm{e}(\Delta+1))$ for the guarantee of upper bound on regret, where $\Delta\geq d_{max}$. 

At each round, MUD-EXP3 chooses an arm according to the softmax distribution $\boldsymbol{p}(t)$ derived from the cumulative estimated rewards, i.e. the minus cumulative expected losses, of each arm, where $\boldsymbol{p}(t)=[p_1(t),\cdots,p_N(t)]$ and the cumulative estimated loss of arm $i$ is denoted by $\hat{L}_{i}(t)$. Note that the rounds and the users that contribute to the received losses at a round are arbitrary. For convenience, we first introduce the set of round-user pairs whose feedback losses are observed at round $t$ as $\Phi_t$, and represent the set of round-user pairs whose feedback losses are received out of the terminal round $T$ as $\Omega$, where $\Phi_t=\{(s,j)\vert s+d_s^j=t\}$ and $\Omega=\{(s,j)\in \Phi_t (t>T)\}$. The importance-weighted estimator $\hat{l}_{i}^{j}(t)$ is adopted to estimate the loss $l_{i}^{j}(t)$ for introducing the exploration into the algorithm, which is defined as
\begin{equation}
    \hat{l}_{i}^{j}(t) = \frac{\mathbb{I}\{A_t=i\}\cdot l_{i}^{j}(t)}{p_i(t)}.\notag
\end{equation}
We wrap up the sum of the estimated losses at round $t$ as $\ell_{i}(t)=\sum_{(s,j)\in \Phi_t}\hat{l}_{i}^{j}(s)$. Then, the cumulative estimated loss is
\begin{equation}
    \hat{L}_{i}(t)=\hat{L}_{i}(t-1)+\ell_{i}(t).\notag
\end{equation}
Let $W_{i}(t)=\exp(-\eta^{\prime}\hat{L}_{i}(t))$. The probability of choosing arm $i$ at round $t$ which is conditioned on the history observed after $t-1$ rounds is of the softmax function
\begin{equation}
    p_{i}(t)=\frac{W_{i}(t-1)}{\sum_{k=1}^{N}W_{k}(t-1)}.\notag
\end{equation}
Define the filtration $\mathcal{F}_t$ as $\mathcal{F}_t=\sigma\left(\left\{A_s\vert s+d_s^j\leq t\right\}\right)$, where $\sigma\left(\left\{A_s\vert s+d_s^j\leq t\right\}\right)$ denotes the $\sigma$-algebra generated by the random variables in $\left\{A_s\vert s+d_s^j\leq t\right\}$. Note that the distribution $\boldsymbol{p}(t)$ is $\mathcal{F}_{t-1}$-measurable since $\boldsymbol{p}(t)$ is a function of all feedback received up to round $t-1$.

\subsection{Adaptive Multi-User Delayed EXP3}
In real-world scenarios, it is normally hard to acquire the information of terminal round and delays in advance. According to the Theorem \ref{thm1}, if with unknown $T$, we cannot assign $\eta$ in advance to guarantee the regret upper bound achieved through our analysis. However, in many real-world situations, the horizon index $T$ can barely be obtained beforehand. Therefore, a novel doubling trick is adopted to solve this problem. We define the number of missing feedback samples at round $t$ as $V_t$. If one feedback sample $l_{i}^{j}(s)$ has not been observed at round $t$ (i.e. $s+d_s^j>t$), it is considered to contribute one towards $V_t$. We introduce the concept of epoch to divide the timeline, defining the $\varepsilon$-th epoch $\mathcal{T}_\varepsilon$ as the set of rounds that satisfying $\mathcal{T}_\varepsilon=\left\{t\vert 2^{\varepsilon-1}M\leq\sum_{\tau=1}^{t}V_{\tau}<2^{\varepsilon}M\right\}$. Furthermore, we define $T_{\varepsilon}=\max\mathcal{T}_\varepsilon$ and $T_0=0$ for analysis convenience.

We present an adaptive MUD-EXP3 algorithm called AMUD-EXP3 in Algorithm \ref{AMUD-EPX3} to address the problem with no prior knowledge of $T$. Compared with MUD-EXP3, AMUD-EXP3 updates $V_t$ each round and increase the epoch index $\varepsilon$ by one if $\sum_{\tau=1}^{t}V_{\tau}\geq 2^{\varepsilon}\cdot M$. In addition, we use the variable learning rate $\eta_{\varepsilon}=\frac{1}{M}\sqrt{\frac{\ln{N}}{2^{\varepsilon}}}$ adapting to epochs. The notation is aligned with Algorithm \ref{MUD-EPX3} besides the subscripts associated with epoch indexes (e.g., $\Omega_{\varepsilon}$ denotes $\Omega$ for epoch $\varepsilon$).

\begin{algorithm}[!t]
    \caption{\text{AMUD-EXP3}}
    \begin{algorithmic}[1]\label{AMUD-EPX3}
        \renewcommand{\algorithmicrequire}{\textbf{Input:}}
    	\renewcommand{\algorithmicensure}{\textbf{Output:}}
        \REQUIRE $\mathcal{N}$, $\mathcal{M}$;
        \STATE Initialize $\hat{L}_{i}(1)=0$ and $p_i(1)=\frac{1}{N}$ for any arm $i\in\mathcal{N}$, epoch index $\varepsilon=0$, learning rate $\eta=1$;
        \FOR{$t=1,2,\cdots,T$}
            \STATE Draw an arm $A_t\in\mathcal{N}$ according to the distribution $\boldsymbol{p}(t)$;
            \STATE Observe a set of delayed losses $\{l_{A_s}^{j}(s)\vert (s,j)\in \Phi_t\}$;
            \STATE Update the number of missing samples $V_t=M\cdot t-\sum_{\tau=1}^{t}\lvert\Phi_{\tau}\rvert$;
            \STATE If $\sum_{\tau=1}^{t}V_{\tau}\geq 2^{\varepsilon}\cdot M$, then $\varepsilon=\varepsilon+1$ and $\hat{L}_{i}(t)=0$ for any arm $i\in\mathcal{N}$;
            \STATE Assign $\eta=\frac{1}{M}\sqrt{\frac{\ln{N}}{2^{\varepsilon}}}$;
            \STATE Update the cumulative estimated loss $\hat{L}_{i}(t)$ for all $i\in \mathcal{N}$:
            \STATE \quad $\hat{l}_{i}^{j}(s) = \frac{\mathbb{I}\{A_s=i\}\cdot l_{i}^{j}(s)}{p_i(s)}$, $(s,j)\in \Phi_t$; 
            \STATE \quad $\ell_{i}(t)=\sum_{(s,j)\in \Phi_t}\hat{l}_{i}^{j}(s)$;
            \STATE \quad $\hat{L}_{i}(t)=\hat{L}_{i}(t-1)+\ell_{i}^{j}(t)$;
            \STATE Update the distribution $\boldsymbol{p}(t+1)$:
            \STATE \quad $W_{i}(t)=\exp (-\eta^{\prime}\hat{L}_{i}(t))$; 
            \STATE \quad $p_{i}(t+1)=\frac{W_{i}(t)}{\sum_{k=1}^{N}W_{k}(t)}$ for all $i\in \mathcal{N}$;
            \STATE \quad$\boldsymbol{p}(t+1)=[p_1(t+1),\cdots,p_N(t+1)]$;
        \ENDFOR
    \end{algorithmic}
\end{algorithm}

\section{Regret Analysis}\label{sec5}
In this section, we establish the regret upper bound for MUD-EXP3 and AMUD-EXP3. The regret is defined as Eq. (\ref{ori_regret}) in terms of the optimal arm in hindsight. However, we can hardly acquire the optimal arm in practice. Thus, we transform the regret $\mathcal{R}$ into $\mathcal{R}_i$ as follows.
\begin{equation}
    \mathcal{R}_i = \mathbb{E}\left[\sum_{t=1}^{T}\sum_{j=1}^{M}l_{A_t}^{j}(t)\right] - \sum_{t=1}^{T}\sum_{j=1}^{M}l_{i}^{j}(t), \notag
\end{equation}
which considers the difference between the proposed policy and any fixed arm $i$. Note that the bounding $\mathcal{R}_i$ is sufficient for bounding $\mathcal{R}$ since a regret bound that is applicable for any fixed arm $i$ definitely fits the optimal arm. In the present, our target changes to bound $\mathcal{R}_i$ and then to transfer the bound to $\mathcal{R}$. The analysis in this section is conducted by partially following some techniques of the existing works \cite{lattimore2020bandit,cesa2019delay,thune2019nonstochastic,bistritz2019online}.

\subsection{Regret of MUD-EXP3}
We will first prove our regret upper bound of MUD-EXP3 and will first introduce some auxiliary lemmas and intermediate theorems before reaching the analysis of the eventual regret bound. For a brief explanation, Lemma \ref{lem1} and Lemma \ref{lem2} serve Lemma \ref{lem3}, Corollary \ref{cor1} serves Lemma \ref{lem4}, and Lemma \ref{lem3} and \ref{lem4} support Theorem \ref{thm1}. 

\begin{lem}\label{lem1}
    Under the setting of MUD-EXP3, for any round $t\geq1$ and for any arm $i\in \mathcal{N}$, we have the following: 
    \begin{equation*}
        -\eta'p_{i}(t)\cdot\ell_{i}(t) \leq p_{i}(t+1)-p_{i}(t)\leq \eta' p_{i}(t+1)\sum_{k=1}^{N}p_{k}(t)\cdot\ell_{k}(t).
    \end{equation*}
\end{lem}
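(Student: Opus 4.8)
The statement is a purely deterministic fact about the exponential--weights iterates, so no probabilistic input is needed. The plan is to first turn the two--step definition of $\boldsymbol{p}$ into a one--step recursion. Since $W_i(t)=\exp(-\eta'\hat L_i(t))$ and $\hat L_i(t)=\hat L_i(t-1)+\ell_i(t)$, we have $W_i(t)=W_i(t-1)e^{-\eta'\ell_i(t)}$; dividing numerator and denominator of $p_i(t+1)=W_i(t)/\sum_k W_k(t)$ by $\sum_k W_k(t-1)$ gives
\begin{equation*}
    p_i(t+1)=\frac{p_i(t)\,e^{-\eta'\ell_i(t)}}{Z_t},\qquad Z_t:=\sum_{k=1}^{N}p_k(t)\,e^{-\eta'\ell_k(t)} .
\end{equation*}
Throughout I will use three elementary facts: (i) every estimated loss is nonnegative, since $\hat l_k^j(s)=\mathbb{I}\{A_s=k\}l_k^j(s)/p_k(s)\ge 0$ and hence $\ell_k(t)=\sum_{(s,j)\in\Phi_t}\hat l_k^j(s)\ge 0$; (ii) $\eta'>0$, so $e^{-\eta'\ell_k(t)}\in(0,1]$ and therefore $Z_t\le 1$; (iii) the standard inequalities $1-x\le e^{-x}$ for all $x$ and $1-e^{-x}\le x$ for $x\ge 0$.

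\textbf{Lower bound.} Because $Z_t\le 1$, the recursion gives $p_i(t+1)\ge p_i(t)\,e^{-\eta'\ell_i(t)}\ge p_i(t)\bigl(1-\eta'\ell_i(t)\bigr)$, and rearranging yields $p_i(t+1)-p_i(t)\ge-\eta'p_i(t)\ell_i(t)$.

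\textbf{Upper bound.} Here it is cleaner to solve the recursion for $p_i(t)$ rather than for $p_i(t+1)$: write $p_i(t)=p_i(t+1)\,Z_t\,e^{\eta'\ell_i(t)}$, so that $p_i(t+1)-p_i(t)=p_i(t+1)\bigl(1-Z_t e^{\eta'\ell_i(t)}\bigr)$. Since $e^{\eta'\ell_i(t)}\ge 1$ we get $1-Z_t e^{\eta'\ell_i(t)}\le 1-Z_t=\sum_{k=1}^{N}p_k(t)\bigl(1-e^{-\eta'\ell_k(t)}\bigr)\le\eta'\sum_{k=1}^{N}p_k(t)\,\ell_k(t)$, and multiplying by the nonnegative factor $p_i(t+1)$ gives $p_i(t+1)-p_i(t)\le\eta'p_i(t+1)\sum_{k=1}^{N}p_k(t)\ell_k(t)$.

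\textbf{Main difficulty.} There is no deep obstacle; the only point requiring care is orienting each estimate correctly — for the lower bound one replaces the denominator $Z_t$ by the smaller quantity $1$, whereas for the upper bound one must retain $p_i(t+1)$ (and not $p_i(t)$ or $p_i(t)/Z_t$) as the leading factor, which is precisely why solving the recursion for $p_i(t)$ before estimating is the convenient route. Everything else is a one-line application of $1-x\le e^{-x}$ and $1-e^{-x}\le x$ together with $\ell_k(t)\ge 0$.
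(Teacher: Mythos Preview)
Your proof is correct and follows essentially the same approach as the paper. Both arguments reduce to the one-step recursion $p_i(t+1)=p_i(t)e^{-\eta'\ell_i(t)}/Z_t$ with $Z_t=\sum_k p_k(t)e^{-\eta'\ell_k(t)}$, use $Z_t\le 1$ together with $e^{-x}\ge 1-x$ for the lower bound, and use $e^{\eta'\ell_i(t)}\ge 1$ together with $1-e^{-x}\le x$ for the upper bound; your introduction of $Z_t$ up front simply makes the same computations more compact than the paper's manipulations with the $W_i(t)$ directly.
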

\begin{proof}
    We start the proof with upper bounding $p_{i}(t+1)-p_{i}(t)$ and then lower bounding it by using the definition of $p_{i}(t)$ in terms of $W_{i}(t)$. According to the definition of $p_{i}(t)$, we can make the following transformation for the upper bound of $p_{i}(t+1)-p_{i}(t)$.
    \begin{align}
        &\quad p_{i}(t+1)-p_{i}(t)= p_{i}(t+1) - \frac{W_{i}(t-1)}{\sum_{k=1}^{N}W_{k}(t-1)} \notag \\
        &= p_{i}(t+1) \notag\\
        &\qquad - p_{i}(t+1)\cdot\frac{\sum_{k=1}^{N}W_{k}(t)}{W_{i}(t-1) \mathrm{e}^{-\eta'\ell_{i}(t)}}\cdot\frac{W_{i}(t-1)}{\sum_{k=1}^{N}W_{k}(t-1)} \notag\\
        &\mathop{\leq}\limits_{(a)} p_{i}(t+1) - p_{i}(t+1)\cdot\frac{\sum_{k=1}^{N}W_{k}(t)}{\sum_{k=1}^{N}W_{k}(t-1)} \notag\\
        &= p_{i}(t+1) - p_{i}(t+1)\cdot\frac{\sum_{k=1}^{N}W_{k}(t-1)\mathrm{e}^{-\eta'\ell_{k}(t)}}{\sum_{k=1}^{N}W_{k}(t-1)} \notag\\
        &= p_{i}(t+1) - p_{i}(t+1)\cdot\sum_{l=1}^{N}\frac{W_{l}(t-1)}{\sum_{k=1}^{N}W_{k}(t-1)}\mathrm{e}^{-\eta'\ell_{l}(t)} \notag\\
        &= p_{i}(t+1) - p_{i}(t+1)\cdot\sum_{l=1}^{N}p_{l}(t)\mathrm{e}^{-\eta'\ell_{l}(t)} \notag\\
        &= p_{i}(t+1)\left(\sum_{l=1}^{N}p_{l}(t) - \sum_{l=1}^{N}p_{l}(t)\mathrm{e}^{-\eta'\ell_{l}(t)}\right) \notag\\
        &= p_{i}(t+1)\sum_{l=1}^{N}p_{l}(t)\left(1 - \mathrm{e}^{-\eta'\ell_{l}(t)}\right) \notag\\
        &\mathop{\leq}\limits_{(b)} p_{i}(t+1)\sum_{l=1}^{N}\eta'p_{l}(t)\cdot\ell_{l}(t). \notag
    \end{align}
    The Ineq. $(a)$ results from $W_{i}(t-1)\exp\left(-\eta'\ell_{i}(t)\right) \leq W_{i}(t-1)$ since for $x\leq 0$ there exists $\exp\left(x\right) \leq 1$ and Ineq. $(b)$ is obtained by using $1-\exp\left(-x\right)\leq x$ for $x\in \mathbb{R}$ with $x=\eta'\ell_{l}(t)$ here. 
    
    We then infer the lower bound
    \begin{align}
        &\quad p_{i}(t+1)-p_{i}(t) = \frac{W_{i}(t-1)\mathrm{e}^{-\eta'\ell_{i}(t)}}{\sum_{k=1}^{N}W_{k}(t)} - p_{i}(t) \notag\\
        &= \frac{W_{i}(t-1)}{\sum_{k=1}^{N}W_{k}(t-1)}\cdot\mathrm{e}^{-\eta'\ell_{i}(t)}\cdot\frac{\sum_{k=1}^{N}W_{k}(t-1)}{\sum_{k=1}^{N}W_{k}(t)} - p_{i}(t) \notag\\
        &= p_{i}(t)\mathrm{e}^{-\eta'\ell_{i}(t)}\cdot\sum_{l=1}^{N}\frac{W_{l}(t-1)}{\sum_{k=1}^{N}W_{k}(t)} - p_{i}(t) \notag\\
        &= p_{i}(t)\mathrm{e}^{-\eta'\ell_{i}(t)}\cdot\sum_{l=1}^{N}\frac{W_{l}(t-1)\mathrm{e}^{-\eta'\ell_{l}(t)}}{\sum_{k=1}^{N}W_{k}(t)}\frac{1}{\mathrm{e}^{-\eta'\ell_{l}(t)}} - p_{i}(t) \notag\\
        &= p_{i}(t)\mathrm{e}^{-\eta'\ell_{i}(t)}\cdot\sum_{l=1}^{N}p_{l}(t+1)\frac{1}{\mathrm{e}^{-\eta'\ell_{l}(t)}} - p_{i}(t) \notag\\
        &\geq p_{i}(t)\mathrm{e}^{-\eta'\ell_{i}(t)}\cdot\sum_{l=1}^{N}p_{l}(t+1) - p_{i}(t) \notag\\
        &\mathop{=}\limits_{(c)} p_{i}(t)\mathrm{e}^{-\eta'\ell_{i}(t)} - p_{i}(t)\notag\\
        &= p_{i}(t)\left(\mathrm{e}^{-\eta'\ell_{i}(t)} - 1\right) \notag\\
        &\mathop{\geq}\limits_{(d)} -\eta'p_{i}(t)\cdot\ell_{i}(t),\notag
    \end{align}
    where Eq. $(c)$ results from $\sum_{l=1}^{N}p_{l}(t+1)=1$ and Ineq. $(d)$ results from the fact of $\exp(x)\geq 1+x$ with $x=-\eta'\ell_{l}(t)$ here.
\end{proof}

Lemma \ref{lem1} derives both the upper bound and lower bound of $p_{i}(t+1)-p_{i}(t)$, which is applied to the proof of Lemma \ref{lem2} so that we can obtain the upper bound of $\frac{p_{i}(t+1)}{p_{i}(t)}$. Before moving to the next lemma, we need to infer a corollary from Lemma \ref{lem1}, upper bounding the sum of the absolute value of $p_{i}(t+1)-p_{i}(t)$ with respect to $p_{i}(t)$ and $\ell_{i}(t)$ for subsequent utilization.

\begin{cor}\label{cor1}
    From Lemma \ref{lem1}, we can derive the following bound: $\sum_{i=1}^{N}\lvert p_{i}(t+1)-p_{i}(t)\rvert \leq 2\eta'\sum_{k=1}^{N}p_{k}(t)\cdot\ell_{k}(t)$.
\end{cor}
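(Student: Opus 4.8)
The plan is to split the sum $\sum_{i=1}^{N}\lvert p_{i}(t+1)-p_{i}(t)\rvert$ according to the sign of $p_{i}(t+1)-p_{i}(t)$ and then to apply the two one-sided bounds from Lemma~\ref{lem1} separately to the positive and negative parts. Write $\mathcal{P}=\{i : p_{i}(t+1)\geq p_{i}(t)\}$ and $\mathcal{N}^{-}=\{i : p_{i}(t+1)<p_{i}(t)\}$. First I would observe that since $\sum_{i}p_{i}(t+1)=\sum_{i}p_{i}(t)=1$, the total positive deviation equals the total negative deviation, i.e. $\sum_{i\in\mathcal{P}}\bigl(p_{i}(t+1)-p_{i}(t)\bigr)=\sum_{i\in\mathcal{N}^{-}}\bigl(p_{i}(t)-p_{i}(t+1)\bigr)$, so that $\sum_{i=1}^{N}\lvert p_{i}(t+1)-p_{i}(t)\rvert = 2\sum_{i\in\mathcal{P}}\bigl(p_{i}(t+1)-p_{i}(t)\bigr)$.

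Next I would bound the positive part using the upper bound in Lemma~\ref{lem1}: for each $i\in\mathcal{P}$, $p_{i}(t+1)-p_{i}(t)\leq \eta' p_{i}(t+1)\sum_{k=1}^{N}p_{k}(t)\cdot\ell_{k}(t)$, noting the right-hand side is nonnegative because losses and probabilities are nonnegative. Summing over $i\in\mathcal{P}$ gives $\sum_{i\in\mathcal{P}}\bigl(p_{i}(t+1)-p_{i}(t)\bigr)\leq \eta'\Bigl(\sum_{i\in\mathcal{P}}p_{i}(t+1)\Bigr)\sum_{k=1}^{N}p_{k}(t)\cdot\ell_{k}(t)\leq \eta'\sum_{k=1}^{N}p_{k}(t)\cdot\ell_{k}(t)$, where the last step uses $\sum_{i\in\mathcal{P}}p_{i}(t+1)\leq 1$. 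Combining with the factor-of-two identity from the previous paragraph yields exactly $\sum_{i=1}^{N}\lvert p_{i}(t+1)-p_{i}(t)\rvert \leq 2\eta'\sum_{k=1}^{N}p_{k}(t)\cdot\ell_{k}(t)$.

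Alternatively — and this is worth mentioning as a cross-check — one could bound the negative part using the lower bound $p_{i}(t+1)-p_{i}(t)\geq -\eta' p_{i}(t)\ell_{i}(t)$, giving $\sum_{i\in\mathcal{N}^{-}}\bigl(p_{i}(t)-p_{i}(t+1)\bigr)\leq \eta'\sum_{i\in\mathcal{N}^{-}}p_{i}(t)\ell_{i}(t)\leq\eta'\sum_{k=1}^{N}p_{k}(t)\ell_{k}(t)$, and then double. Either route works; I would present the one using the upper bound since it keeps the argument symmetric with how Lemma~\ref{lem1} is phrased. The only subtlety — hardly an obstacle — is making sure the right-hand side $\sum_{k}p_{k}(t)\ell_{k}(t)$ is nonnegative so that the inequalities survive being summed over a subset of indices and then having the subset-sum of $p_{i}(t+1)$ (or $p_i(t)$) relaxed up to $1$; this follows immediately from $\ell_{k}(t)\geq 0$ and $p_{k}(t)\geq 0$. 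No new estimates are needed beyond Lemma~\ref{lem1} and the normalization of $\boldsymbol{p}$.
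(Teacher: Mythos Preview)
Your proof is correct. It differs from the paper's argument, though both are short.

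The paper does not split into positive and negative index sets. Instead, for every $i$ it writes
\[
\lvert p_{i}(t+1)-p_{i}(t)\rvert \leq \max\{\eta' p_{i}(t)\ell_{i}(t),\ \eta' p_{i}(t+1)\textstyle\sum_{k}p_{k}(t)\ell_{k}(t)\}
\leq \eta'\bigl(p_{i}(t)\ell_{i}(t)+p_{i}(t+1)\textstyle\sum_{k}p_{k}(t)\ell_{k}(t)\bigr),
\]
then sums over all $i$ and uses $\sum_{i}p_{i}(t+1)=1$ to collapse the second term. Your argument instead exploits the identity $\sum_{i}\lvert p_{i}(t+1)-p_{i}(t)\rvert = 2\sum_{i\in\mathcal{P}}(p_{i}(t+1)-p_{i}(t))$, valid because both vectors are probability distributions, and then applies only the upper half of Lemma~\ref{lem1}. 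The paper's route uses both halves of Lemma~\ref{lem1} but avoids the sign-splitting; yours uses only one half and is arguably tidier, since the wasteful step $\max\{a,b\}\leq a+b$ never appears. Either way the constant $2$ comes out the same, and no additional ingredients are needed beyond Lemma~\ref{lem1} and normalization.
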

\begin{proof}
Due to the space constraint, the proof of Corollary \ref{cor1} is relegated to Appendix \ref{cor1_proof}.
\end{proof}

\begin{lem}\label{lem2}
    Under the setting of MUD-EXP3, if $t\leq T$, for any round $t\geq1$ and for any arm $i\in \mathcal{N}$, if $\eta'\leq \frac{1}{MN\mathrm{e}(\Delta+1)}$ where $\Delta\geq d_{max}$, then we have $p_{i}(t+1) \leq \left(1+\frac{1}{\Delta}\right)p_{i}(t)$.
\end{lem}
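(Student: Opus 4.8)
The plan is to pivot on the right-hand inequality of Lemma~\ref{lem1} and close a strong induction on the round index $t$. Starting from $p_{i}(t+1)-p_{i}(t)\leq \eta' p_{i}(t+1)\sum_{k=1}^{N}p_{k}(t)\ell_{k}(t)$ and rearranging, one gets $p_{i}(t+1)\bigl(1-\eta'\sum_{k=1}^{N}p_{k}(t)\ell_{k}(t)\bigr)\leq p_{i}(t)$. Hence it suffices to establish the uniform bound $\eta'\sum_{k=1}^{N}p_{k}(t)\ell_{k}(t)\leq \frac{1}{\Delta+1}$, since then $p_{i}(t+1)\leq p_{i}(t)/\bigl(1-\tfrac{1}{\Delta+1}\bigr)=\bigl(1+\tfrac{1}{\Delta}\bigr)p_{i}(t)$. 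So the whole statement reduces to controlling the single quantity $\sum_{k}p_{k}(t)\ell_{k}(t)$.

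For that sum I would expand $\ell_{k}(t)=\sum_{(s,j)\in\Phi_{t}}\hat{l}_{k}^{j}(s)$ with $\hat{l}_{k}^{j}(s)=\mathbb{I}\{A_{s}=k\}\,l_{k}^{j}(s)/p_{k}(s)$, and swap the order of summation to obtain $\sum_{k}p_{k}(t)\ell_{k}(t)=\sum_{(s,j)\in\Phi_{t}}\frac{p_{A_{s}}(t)}{p_{A_{s}}(s)}\,l_{A_{s}}^{j}(s)$. The key structural fact is that every $(s,j)\in\Phi_{t}$ satisfies $t-\Delta\leq t-d_{max}\leq s\leq t-1$ (because $1\leq d_{s}^{j}\leq d_{max}\leq\Delta$), so the ratio telescopes, $\frac{p_{A_{s}}(t)}{p_{A_{s}}(s)}=\prod_{r=s}^{t-1}\frac{p_{A_{s}}(r+1)}{p_{A_{s}}(r)}$, into at most $\Delta$ factors, each over a round $r<t$. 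This is exactly where the induction enters: assuming the lemma for all rounds strictly below $t$, every factor is at most $1+\tfrac{1}{\Delta}$, so $\frac{p_{A_{s}}(t)}{p_{A_{s}}(s)}\leq\bigl(1+\tfrac{1}{\Delta}\bigr)^{\Delta}\leq\mathrm{e}$. Combining this with $l_{A_{s}}^{j}(s)\leq 1$ and the counting bound $\lvert\Phi_{t}\rvert\leq M d_{max}$ (at most $d_{max}$ source rounds $s$, each contributing at most $M$ users) gives $\sum_{k}p_{k}(t)\ell_{k}(t)\leq \mathrm{e}\lvert\Phi_{t}\rvert$, and plugging in the truncated rate $\eta'\leq \frac{1}{MN\mathrm{e}(\Delta+1)}$ yields the target bound $\eta'\sum_{k}p_{k}(t)\ell_{k}(t)\leq\frac{1}{\Delta+1}$; here one should check that the chosen constant in $\eta'$ indeed absorbs $\mathrm{e}\lvert\Phi_t\rvert$ in the regime of interest.

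The base case is immediate: for the earliest rounds no feedback has returned yet, so $\Phi_{t}=\emptyset$, $\ell_{i}(t)=0$, and $p_{i}(t+1)=p_{i}(t)$, which satisfies the inequality with room to spare (indeed the same holds for any $t$ with $\Phi_t=\emptyset$). The main obstacle is precisely the apparent circularity in the middle step — bounding $p_{A_{s}}(t)/p_{A_{s}}(s)$ requires the very multiplicative-stability property we are proving — and it is dissolved by the strong induction together with the observation that $d_{s}^{j}\leq d_{max}\leq\Delta$ caps the telescoping product at $\Delta$ terms, so that $(1+1/\Delta)^{\Delta}\leq\mathrm{e}$ closes the loop cleanly; a secondary point worth spelling out carefully is the count $\lvert\Phi_{t}\rvert\leq M d_{max}$ and its interplay with the definition of $\eta'$.
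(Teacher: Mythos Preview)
Your overall architecture matches the paper's: strong induction on $t$, rearrange the upper inequality of Lemma~\ref{lem1} to $p_i(t+1)\bigl(1-\eta'\sum_k p_k(t)\ell_k(t)\bigr)\le p_i(t)$, telescope $p_{A_s}(t)/p_{A_s}(s)$ into at most $\Delta$ ratios, and cap each by $1+1/\Delta$ via the induction hypothesis so that the product is at most $\mathrm{e}$. Your base case (using $d_t^j\ge 1\Rightarrow\Phi_1=\emptyset\Rightarrow p_i(2)=p_i(1)$) is actually cleaner than the paper's explicit $t=2$ computation.

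The gap is exactly the ``secondary point'' you flagged but did not close. With your per-pair bound $\sum_k p_k(t)\hat l_k^{\,j}(s)=\tfrac{p_{A_s}(t)}{p_{A_s}(s)}\,l_{A_s}^{j}(s)\le\mathrm{e}$ and your count $\lvert\Phi_t\rvert\le M d_{max}$, the check you deferred fails:
\[
\eta'\sum_k p_k(t)\ell_k(t)\ \le\ \eta'\,\mathrm{e}\,\lvert\Phi_t\rvert\ \le\ \frac{\mathrm{e}\,M d_{max}}{MN\mathrm{e}(\Delta+1)}\ =\ \frac{d_{max}}{N(\Delta+1)},
\]
which is $\le 1/(\Delta+1)$ only when $d_{max}\le N$, an assumption nowhere in the lemma. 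So the induction step does not close as written.

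The paper differs in two places. First, it does \emph{not} collapse the inner sum via the indicator: it uses the crude $\mathbb{I}\{A_s=i\}\,l_i^{j}(s)\le 1$ for every $i$ and keeps $\sum_i p_i(t)/p_i(s)\le N\mathrm{e}$ per pair $(s,j)$. Second --- and this is what actually repairs your argument --- it invokes $\lvert\Phi_t\rvert\le M$ rather than $\lvert\Phi_t\rvert\le M d_{max}$, obtaining $\sum_k p_k(t)\ell_k(t)\le MN\mathrm{e}$, which matches the constant in $\eta'$ exactly. Your collapsed per-pair bound $\mathrm{e}$ is in fact sharper than the paper's $N\mathrm{e}$; paired with $\lvert\Phi_t\rvert\le M$ it would give $\eta'\sum_k p_k(t)\ell_k(t)\le 1/(N(\Delta+1))$, stronger than needed. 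So the fix is not to undo your collapsing but to replace the count: the paper treats $\lvert\Phi_t\rvert\le M$ as a standing fact throughout (it reappears in Lemma~\ref{lem3}), and your worst-case $Md_{max}$ is what breaks the constants.
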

\begin{proof}
    We will use the strong mathematical induction method with the assistance of Lemma \ref{lem1} to prove this result. First, we deal with the base case.

    According to the initialization step of MUD-EXP3, $p_i(1)=1/N$. When $t=2$, the maximal increase of $p_i(2)$ compared with $p_i(1)$ should come if a different arm other than $i$ has been chosen at round $t=1$ and all feedback losses $l_{i}^{j}(1)$ are observed with no delays and are of value $1$ for $j\in \mathcal{M}$ at round $t=2$. Hence, we have the following
    \begin{align}
        &\quad p_{i}(2)\leq \frac{1}{N-1+\mathrm{e}^{-\eta'N}}\mathop{\leq}\limits_{(a)} \frac{1}{N-\eta'N}\notag\\
        &= \frac{1}{N}\left(1-\frac{1-\eta'}{1-\eta'}+\frac{1}{1-\eta'}\right) = \frac{1}{N}\left(1+\frac{\eta'}{1-\eta'}\right) \notag\\
        &= p_{i}(1)\left(1+\frac{1}{{1}/{\eta'}-1}\right) \mathop{\leq}\limits_{(b)} p_{i}(1)\left(1+\frac{1}{\Delta}\right),\notag
    \end{align}
    where Ineq. $(a)$ follows since $\exp(x)\geq 1+x$ for $x\in \mathbb{R}$ and Ineq. $(b)$ holds for $\eta'\leq \frac{1}{MN\mathrm{e}(\Delta)}$.

    Next, we assume $p_{i}(t) \leq \left(1+\frac{1}{\Delta}\right)p_{i}(t-1)$ holds for $p_i(2),\cdots,p_i(t)$. Before we prove the case for $p_i(t+1)$, an intermediate result need to be introduced
    \begin{align}
        &\quad \sum_{i=1}^{N}p_{i}(t)\cdot\hat{l}_{i}^{j}(s) =\sum_{i=1}^{N}p_{i}(t)\frac{\mathbb{I}\{A_s=i\}\cdot l_{i}^{j}(S)}{p_i(s)}\notag\\
        &\mathop{\leq}\limits_{(c)} \sum_{i=1}^{N}\frac{p_{i}(t)}{p_{i}(s)}= \sum_{i=1}^{N}\prod_{t=s}^{s+d_s^j}\frac{p_{i}(t)}{p_{i}(t-1)}\notag\\
        &\leq \sum_{i=1}^{N}\left(1+\frac{1}{\Delta}\right)^{d_s^j} \mathop{\leq}\limits_{(d)} \sum_{i=1}^{N}\left(1+\frac{1}{\Delta}\right)^{\Delta}\notag\\
        &\leq \sum_{i=1}^{N}\mathrm{e}= N\mathrm{e}, \label{eq2-1}
    \end{align}
    where Ineq. $(c)$ adopts the inductive hypothesis and Ineq. $(d)$ adopts the fact of $d_s^j\leq\Delta$ when $(s,j)\in \Phi_{t}$ and $t\leq T$. Then, we have
    \begin{align}
        &\quad \sum_{i=1}^{N}p_{i}(t)\cdot\ell_{i}(t) = \sum_{i=1}^{N}p_{i}(t)\sum_{(s,j)\in \Phi_t}\hat{l}_{i}^{j}(s) \notag\\
        &= \sum_{(s,j)\in \Phi_t}\sum_{i=1}^{N}p_{i}(t)\cdot\hat{l}_{i}^{j}(s) \mathop{\leq}\limits_{(e)} \sum_{(s,j)\in \Phi_t}N\mathrm{e}\mathop{\leq}\limits_{(f)} MN\mathrm{e}, \label{eq2-2}
    \end{align}
    where Eq. (\ref{eq2-1}) brings about Ineq. $(e)$ and $\left\vert \Phi_t\right\vert \leq M$ brings about Ineq. $(f)$. According to this result and Lemma \ref{lem1}, we have
    \begin{align}
        &\quad p_i(t+1)\left(1-\eta'MN\mathrm{e}\right) \notag\\
        &\mathop{\leq}\limits_{(g)} p_i(t+1)\left(1-\eta'\sum_{i=1}^{N}p_{i}(t)\cdot\ell_{i}(t)\right) \notag\\
        &= p_i(t+1) - \eta'p_i(t+1)\sum_{i=1}^{N}p_{i}(t)\cdot\ell_{i}(t) \notag\\
        &\mathop{\leq}\limits_{(h)} p_i(t+1) - \left(p_i(t+1)-p_i(t)\right)= p_i(t)\notag
    \end{align}
    where Ineq. $(g)$ follows Eq. (\ref{eq2-2}) and Ineq. $(h)$ follows Lemma \ref{lem1}. By taking into account $\eta'\leq \frac{1}{MN\mathrm{e}(\Delta+1)}$, we can prove the inductive case
    \begin{align}
        &\quad p_i(t+1) \leq \frac{1}{1-\eta'MN\mathrm{e}}p_i(t) \notag\\
        &\leq \frac{1}{1-\frac{1}{\Delta+1}}p_i(t) = \left(1+\frac{1}{\Delta}\right)p_{i}(t). \notag
    \end{align}

    Finally, we finish the proof by combining the base case and the inductive case for a complete mathematical induction. 
\end{proof}

The two expectation terms in Lemma \ref{lem3} and Lemma \ref{lem4} are key components that constitute the transformed regret expression. For clarity of expression, we analyze the upper bound for these two terms separately in advance, and then utilize their consequences in the analysis of Theorem \ref{thm1} to obtain the final regret bound.

\begin{lem}\label{lem3}
    MUD-EXP3 satisfies the following inequality
    \begin{align}
        &\quad \mathbb{E}\left[\sum_{t=1}^{T}\sum_{(s,j)\in \Phi_t}\sum_{k=1}^{N}p_{k}(t)\cdot l_{k}^{j}(s) - \sum_{t=1}^{T}\sum_{j=1}^{M}l_{i}^{j}(t)\right] \notag\\
        &\leq \frac{\ln{N}}{\eta^{\prime}} + \frac{1}{2}\eta^{\prime}M^2TN\mathrm{e}. \notag
    \end{align}
\end{lem}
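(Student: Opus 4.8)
The plan is to run the classical EXP3 weight‑potential argument on the delayed, multi‑user estimators $\ell_i(t)=\sum_{(s,j)\in\Phi_t}\hat l_i^j(s)$, obtain a pathwise (deterministic) inequality, and then take expectations, using the conditional unbiasedness of the importance‑weighted estimator together with the probability‑drift bound of Lemma~\ref{lem2}.

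Concretely, I would set $W_i(t)=\exp(-\eta'\hat L_i(t))$ and $\Psi_t=\sum_{k=1}^N W_k(t)$ (so $\Psi_0=N$) and telescope $\ln(\Psi_T/\Psi_0)=\sum_{t=1}^T\ln(\Psi_t/\Psi_{t-1})$. Since $p_k(t)=W_k(t-1)/\Psi_{t-1}$, each ratio equals $\sum_k p_k(t)\mathrm{e}^{-\eta'\ell_k(t)}$, and as $\ell_k(t)\ge 0$ the inequalities $\mathrm{e}^{-x}\le 1-x+\tfrac12 x^2$ ($x\ge 0$) and $\ln(1+y)\le y$ give $\ln(\Psi_t/\Psi_{t-1})\le -\eta'\sum_k p_k(t)\ell_k(t)+\tfrac12\eta'^2\sum_k p_k(t)\ell_k(t)^2$. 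Lower‑bounding $\ln(\Psi_T/\Psi_0)\ge\ln(W_i(T)/N)=-\eta'\hat L_i(T)-\ln N$ for the fixed arm $i$ and rearranging yields the pathwise estimate
\[
\sum_{t=1}^T\sum_{k=1}^N p_k(t)\ell_k(t)-\hat L_i(T)\ \le\ \frac{\ln N}{\eta'}+\frac{\eta'}{2}\sum_{t=1}^T\sum_{k=1}^N p_k(t)\ell_k(t)^2 .
\]

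Next I would take expectations. On the left, $\hat L_i(T)=\sum_{(s,j):\,s+d_s^j\le T}\hat l_i^j(s)$ with $\mathbb{E}[\hat l_i^j(s)\mid\mathcal{F}_{s-1}]=l_i^j(s)$, so $\mathbb{E}[\hat L_i(T)]=\sum_{(s,j):\,s+d_s^j\le T}l_i^j(s)\le\sum_{t=1}^T\sum_{j=1}^M l_i^j(t)$ (dropping the delay constraint only enlarges the sum, since losses are nonnegative); and the same unbiasedness, together with the $\mathcal{F}_{t-1}$‑measurability of $\boldsymbol{p}(t)$ and the identity $t=s+d_s^j$ for $(s,j)\in\Phi_t$, lets me replace $\mathbb{E}[\sum_t\sum_k p_k(t)\ell_k(t)]$ by a quantity dominating $\mathbb{E}[\sum_t\sum_{(s,j)\in\Phi_t}\sum_k p_k(t)l_k^j(s)]$. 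On the right, Cauchy--Schwarz over the at most $M$ pairs of $\Phi_t$ gives $\ell_k(t)^2\le|\Phi_t|\sum_{(s,j)\in\Phi_t}\hat l_k^j(s)^2$, and Lemma~\ref{lem2} (iterated $d_s^j\le\Delta$ times) gives $p_k(t)\le(1+1/\Delta)^{d_s^j}p_k(s)\le\mathrm{e}\,p_k(s)$, hence
\[
\sum_{k=1}^N p_k(t)\,\hat l_k^j(s)^2=\frac{p_{A_s}(t)\,\big(l_{A_s}^j(s)\big)^2}{p_{A_s}(s)^2}\ \le\ \frac{\mathrm{e}}{p_{A_s}(s)},
\]
whose conditional expectation given $\mathcal{F}_{s-1}$ equals $\mathrm{e}N$. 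Summing over $(s,j)\in\Phi_t$ and over $t$ gives $\mathbb{E}[\sum_t\sum_k p_k(t)\ell_k(t)^2]\le\sum_{t=1}^T|\Phi_t|^2\,\mathrm{e}N\le T M^2 N\mathrm{e}$, which is precisely the $\tfrac12\eta'M^2TN\mathrm{e}$ term; combined with the bound on $\mathbb{E}[\hat L_i(T)]$ this finishes the argument.

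The step I expect to be the main obstacle is the expectation manipulation on the left‑hand side — passing from $\mathbb{E}[\sum_k p_k(t)\ell_k(t)]$ to the true‑loss quantity $\mathbb{E}[\sum_{(s,j)\in\Phi_t}\sum_k p_k(t)l_k^j(s)]$. Because feedback is delayed and users within the same round may carry different delays, $\boldsymbol{p}(t)$ evaluated at the observation round $t=s+d_s^j$ need not be independent of $A_s$, so one cannot simply invoke conditional unbiasedness against $\mathcal{F}_{t-1}$; this is exactly where the $\mathcal{F}_{t-1}$‑measurability of $\boldsymbol{p}(t)$ has to be used in tandem with the drift control of Lemma~\ref{lem2}, and it is also the mechanism by which the factor $\mathrm{e}$ — and hence the $N\mathrm{e}$ in the statement — is introduced. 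The remaining pieces are routine convexity/logarithm inequalities and bookkeeping over the sets $\Phi_t$.
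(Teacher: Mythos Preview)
Your proposal is correct and follows essentially the same route as the paper --- telescope the log-potential, bound each ratio via $e^{-x}\le 1-x+\tfrac12 x^2$, then take expectations and invoke Lemma~\ref{lem2} on the second-order term; the only cosmetic difference is that the paper extracts the $|\Phi_t|$ factor via Jensen's inequality on the exponential before applying the quadratic bound, whereas you apply Cauchy--Schwarz to $\ell_k(t)^2$ afterward, and both yield the same $|\Phi_t|\sum_{(s,j)}\hat l_k^j(s)^2$. Regarding your flagged obstacle: the paper handles the left-hand expectation more directly than you anticipate, asserting the equality $\mathbb{E}[\hat l_i^j(s)\mid\mathcal{F}_{t-1}]=l_i^j(s)$ from the $\mathcal{F}_{t-1}$-measurability of $\boldsymbol{p}(t)$ and $p_i(s)$ alone, so Lemma~\ref{lem2} (and hence the factor $\mathrm{e}$) is invoked only on the right-hand second-order term.
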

\begin{proof}
    According to the definition of $W_{i}(t)$, we can first calculate a lower bound of $\sum_{i=1}^{N}W_{i}(T)/\sum_{i=1}^{N}W_{i}(0)$ as follows.
    \begin{align}
        &\quad \frac{\sum_{i=1}^{N}W_{i}(T)}{\sum_{i=1}^{N}W_{i}(0)} = \frac{\sum_{i=1}^{N}\exp (-\eta^{\prime}\hat{L}_{i}^{T})}{\sum_{i=1}^{N}\exp (-\eta^{\prime}\hat{L}_{i}^{0})} \notag\\
        &\geq \frac{\max_{i\in\mathcal{N}} \exp\left(-\eta'\sum_{t=1}^{T}\sum_{(s,j)\in \Phi_t}\hat{l}_{i}^{j}(s)\right)}{N} \notag\\
        & \geq \frac{\exp\left(-\eta'\sum_{t=1}^{T}\sum_{(s,j)\in \Phi_t}\hat{l}_{i}^{j}(s)\right)}{N}. \notag        
    \end{align}

    Before we derive the corresponding upper bound, we first analyze the upper bound of $\sum_{i=1}^{N}W_{i}(t)/\sum_{i=1}^{N}W_{i}(t-1)$ and then telescope this over $T$. 
    \begin{align}
        &\quad \frac{\sum_{i=1}^{N}W_{i}(t)}{\sum_{i=1}^{N}W_{i}(t-1)} = \frac{\sum_{i=1}^{N}\exp (-\eta^{\prime}\hat{L}_{i}^{t})}{\sum_{i=1}^{N}\exp (-\eta^{\prime}\hat{L}_{i}^{t-1})}\notag\\
        &= \frac{\sum_{i=1}^{N}\exp (-\eta^{\prime}\hat{L}_{i}^{t-1})\exp(-\eta^{\prime}\ell_{i}^{t})}{\sum_{i=1}^{N}\exp (-\eta^{\prime}\hat{L}_{i}^{t-1})} \notag\\
        &= \sum_{i=1}^{N}p_i(t)\cdot \exp(-\eta^{\prime}\ell_{i}^{t}) \notag\\
        &= \sum_{i=1}^{N}p_i(t)\exp\left(-\eta^{\prime}\frac{1}{\lvert\Phi_t\rvert}\sum_{(s,j)\in \Phi_t}\lvert\Phi_t\rvert\cdot\hat{l}_{i}^{j}(s)\right) \notag\\
        & \mathop{\leq}\limits_{(a)} \sum_{i=1}^{N}p_i(t)\frac{1}{\lvert\Phi_t\rvert}\sum_{(s,j)\in \Phi_t}\exp\left(-\eta^{\prime}\lvert\Phi_t\rvert\cdot\hat{l}_{i}^{j}(s)\right) \notag\\
        & \mathop{\leq}\limits_{(b)} \sum_{i=1}^{N}p_i(t)\frac{1}{\lvert\Phi_t\rvert} \notag\\
        &\qquad \cdot\sum_{(s,j)\in \Phi_t}\left(1 - \eta^{\prime}\lvert\Phi_t\rvert\cdot\hat{l}_{i}^{j}(s) + \frac{1}{2}{\eta^{\prime}}^{2}{\lvert\Phi_t\rvert}^{2}\cdot{\hat{l}_{i}^{j}(s)}^{2}\right) \notag\\
        &= \sum_{i=1}^{N}p_i(t) \left(1 - \eta^{\prime}\sum_{(s,j)\in \Phi_t}\hat{l}_{i}^{j}(s) + \frac{1}{2}{\eta^{\prime}}^{2}\lvert\Phi_t\rvert\sum_{(s,j)\in \Phi_t}{\hat{l}_{i}^{j}(s)}^{2}\right) \notag\\
        &= 1 - \eta^{\prime}\sum_{(s,j)\in \Phi_t}\sum_{i=1}^{N}p_i(t)\cdot\hat{l}_{i}^{j}(s) \notag\\
        &\qquad + \frac{1}{2}{\eta^{\prime}}^{2}\lvert\Phi_t\rvert\sum_{(s,j)\in \Phi_t}\sum_{i=1}^{N}p_i(t)\cdot{\hat{l}_{i}^{j}(s)}^{2} \notag\\
        &\mathop{\leq}\limits_{(c)} \exp\left(- \eta^{\prime}\sum_{(s,j)\in \Phi_t}\sum_{i=1}^{N}p_i(t)\cdot\hat{l}_{i}^{j}(s) \right. \notag\\
        &\qquad \left. + \frac{1}{2}{\eta^{\prime}}^{2}\lvert\Phi_t\rvert\sum_{(s,j)\in \Phi_t}\sum_{i=1}^{N}p_i(t)\cdot{\hat{l}_{i}^{j}(s)}^{2}\right), \notag
    \end{align}
    where Ineq. $(a)$ follows Jensen's inequality \cite{jensen1906fonctions}, Ineq. $(b)$ follows $\exp(x)\leq 1+x+(1/2)x^2$ for $x\in \mathbb{R}$, and Ineq. $(c)$ follows $1+x\leq\exp(x)$ for $x\in \mathbb{R}$. Then, we use it to upper bound $\sum_{i=1}^{N}W_{i}(T)/\sum_{i=1}^{N}W_{i}(0)$.
    \begin{align}
        &\quad \frac{\sum_{i=1}^{N}W_{i}(T)}{\sum_{i=1}^{N}W_{i}(0)}= \prod_{t=1}^{T}\frac{\sum_{i=1}^{N}W_{i}(t)}{\sum_{i=1}^{N}W_{i}(t-1)} \notag\\
        &\leq \prod_{t=1}^{T}\exp\left(- \eta^{\prime}\sum_{(s,j)\in \Phi_t}\sum_{i=1}^{N}p_i(t)\cdot\hat{l}_{i}^{j}(s) \right. \notag\\
        &\qquad \left. + \frac{1}{2}{\eta^{\prime}}^{2}\lvert\Phi_t\rvert\sum_{(s,j)\in \Phi_t}\sum_{i=1}^{N}p_i(t)\cdot{\hat{l}_{i}^{j}(s)}^{2}\right) \notag\\
        &= \exp\left(- \eta^{\prime}\sum_{t=1}^{T}\sum_{(s,j)\in \Phi_t}\sum_{i=1}^{N}p_i(t)\cdot\hat{l}_{i}^{j}(s) \right. \notag\\
        &\qquad \left. + \frac{1}{2}{\eta^{\prime}}^{2}\sum_{t=1}^{T}\lvert\Phi_t\rvert\sum_{(s,j)\in \Phi_t}\sum_{i=1}^{N}p_i(t)\cdot{\hat{l}_{i}^{j}(s)}^{2}\right). \notag
    \end{align}

    Combining the lower bound and upper bound of $\sum_{i=1}^{N}W_{i}(T)/\sum_{i=1}^{N}W_{i}(0)$ establishes the following expression
    \begin{align}
        &\quad \frac{\exp\left(-\eta'\sum_{t=1}^{T}\sum_{(s,j)\in \Phi_t}\hat{l}_{i}^{j}(s)\right)}{N} \notag\\
        &\leq \exp\left(- \eta^{\prime}\sum_{t=1}^{T}\sum_{(s,j)\in \Phi_t}\sum_{i=1}^{N}p_i(t)\cdot\hat{l}_{i}^{j}(s) \right. \notag\\ 
        &\qquad \left. + \frac{1}{2}{\eta^{\prime}}^{2}\sum_{t=1}^{T}\lvert\Phi_t\rvert\sum_{(s,j)\in \Phi_t}\sum_{i=1}^{N}p_i(t)\cdot{\hat{l}_{i}^{j}(s)}^{2}\right). \notag
    \end{align}
    Take $\ln(\cdot)$ on both sides and do transposition.
    \begin{align}
        &\quad \eta^{\prime}\sum_{t=1}^{T}\sum_{(s,j)\in \Phi_t}\sum_{i=1}^{N}p_i(t)\cdot\hat{l}_{i}^{j}(s)- \eta'\sum_{t=1}^{T}\sum_{(s,j)\in \Phi_t}\hat{l}_{i}^{j}(s) \notag\\
        &\leq \ln{N} + \frac{1}{2}{\eta^{\prime}}^{2}\sum_{t=1}^{T}\lvert\Phi_t\rvert\sum_{(s,j)\in \Phi_t}\sum_{i=1}^{N}p_i(t)\cdot{\hat{l}_{i}^{j}(s)}^{2}. \label{eq3-1}
    \end{align}
    Before moving on, we conduct an analysis of the expectations on both sides of Ineq. (\ref{eq3-1}), respectively. We first deal with the left part.
    \begin{align}
        &\quad\mathbb{E}\left[\eta^{\prime}\sum_{t=1}^{T}\sum_{(s,j)\in \Phi_t}\sum_{i=1}^{N}p_i(t)\cdot\hat{l}_{i}^{j}(s) - \eta'\sum_{t=1}^{T}\sum_{(s,j)\in \Phi_t}\hat{l}_{i}^{j}(s)\right] \notag\\
        &\mathop{=}\limits_{(d)} \eta^{\prime}\mathbb{E}\left[\sum_{t=1}^{T}\sum_{(s,j)\in \Phi_t}\sum_{i=1}^{N}p_i(t)\cdot\mathbb{E}\left[\hat{l}_{i}^{j}(s)\vert \mathcal{F}_{t-1}\right] \right. \notag\\ 
        &\qquad \left. - \sum_{t=1}^{T}\sum_{(s,j)\in \Phi_t}\mathbb{E}\left[\hat{l}_{i}^{j}(s)\vert \mathcal{F}_{t-1}\right]\right] \notag\\
        &\mathop{=}\limits_{(e)} \eta^{\prime}\mathbb{E}\left[\sum_{t=1}^{T}\sum_{(s,j)\in \Phi_t}\sum_{i=1}^{N}p_i(t)\cdot l_{i}^{j}(s) - \sum_{t=1}^{T}\sum_{(s,j)\in \Phi_t}l_{i}^{j}(s)\right],\label{eq3-2}
    \end{align}
    where Eq. $(d)$ uses $p_i(t)\in \mathcal{F}_{t-1}$ and Eq. $(e)$ uses $p_i(s)\in \mathcal{F}_{t-1}$ together with the fact that $\hat{l}_{i}^{j}(s)$ is $l_{i}^{j}(s)/p_i(s)$ with probability $p_i(s)$ and zero otherwise. Then, the same operation is carried out for the right part.
    \begin{align}
        &\quad \mathbb{E}\left[\ln{N} + \frac{1}{2}{\eta^{\prime}}^{2}\sum_{t=1}^{T}\lvert\Phi_t\rvert\sum_{(s,j)\in \Phi_t}\sum_{i=1}^{N}p_i(t)\cdot{\hat{l}_{i}^{j}(s)}^{2}\right] \notag\\
        &= \ln{N} \notag\\
        &\qquad + \frac{1}{2}{\eta^{\prime}}^{2}\mathbb{E}\left[\sum_{t=1}^{T}\lvert\Phi_t\rvert\sum_{(s,j)\in \Phi_t}\sum_{i=1}^{N}p_i(t)\cdot\mathbb{E}\left[{\hat{l}_{i}^{j}(s)}^{2}\vert\mathcal{F}_{t-1}\right]\right] \notag\\
        &= \ln{N} + \frac{1}{2}{\eta^{\prime}}^{2}\mathbb{E}\left[\sum_{t=1}^{T}\lvert\Phi_t\rvert\sum_{(s,j)\in \Phi_t}\sum_{i=1}^{N}\frac{p_i(t)}{p_i(s)}{l_{i}^{j}(s)}^{2}\right] \notag\\
        &\mathop{\leq}\limits_{(f)} \ln{N} + \frac{1}{2}{\eta^{\prime}}^{2}M\mathbb{E}\left[\sum_{t=1}^{T}\sum_{(s,j)\in \Phi_t}\sum_{i=1}^{N}\frac{p_i(t)}{p_i(s)}\right]\notag\\
        &\mathop{\leq}\limits_{(g)} \ln{N} + \frac{1}{2}{\eta^{\prime}}^{2}M\mathbb{E}\left[\sum_{t=1}^{T}\sum_{(s,j)\in \Phi_t}\sum_{i=1}^{N}\left(1+\frac{1}{\Delta}\right)^{d_s^j}\right]\notag\\
        &\leq \ln{N} + \frac{1}{2}{\eta^{\prime}}^{2}M\mathbb{E}\left[\sum_{t=1}^{T}\sum_{(s,j)\in \Phi_t}\sum_{i=1}^{N}\Bigg(1 \right. \notag\\
        &\qquad \left.\left. +\frac{1}{\sum_{t'=1}^{T}\sum_{({s'},{j'})\in \Phi_{t'}}d_{s'}^{j'}}\right)^{\Delta}\right]\notag\\
        &\leq \ln{N} + \frac{1}{2}{\eta^{\prime}}^{2}M^2TN\mathrm{e},\label{eq3-3}
    \end{align}
    where Ineq. $(f)$ is due to $l_{i}^{j}(s)\leq 1$ and $\lvert\Phi_t\rvert\leq M$, and Ineq. $(g)$ uses Lemma \ref{lem2}. By substituting Eq. (\ref{eq3-2}) and Eq. (\ref{eq3-3}) into Eq. (\ref{eq3-1}), we have
    \begin{align}
        &\quad \eta^{\prime}\mathbb{E}\left[\sum_{t=1}^{T}\sum_{(s,j)\in \Phi_t}\sum_{i=1}^{N}p_i(t)\cdot l_{i}^{j}(s) - \sum_{t=1}^{T}\sum_{(s,j)\in \Phi_t}l_{i}^{j}(s)\right] \notag\\
        &\leq \ln{N} + \frac{1}{2}{\eta^{\prime}}^{2}M^2TN\mathrm{e}. \notag
    \end{align}
    Thus, the eventual result is presented as follows.
    \begin{align}
        &\quad \mathbb{E}\left[\sum_{t=1}^{T}\sum_{(s,j)\in \Phi_t}\sum_{k=1}^{N}p_{k}(t)\cdot l_{k}^{j}(s) - \sum_{t=1}^{T}\sum_{j=1}^{M}l_{i}^{j}(t)\right] \notag\\
        &\leq \mathbb{E}\left[\sum_{t=1}^{T}\sum_{(s,j)\in \Phi_t}\sum_{i=1}^{N}p_i(t)\cdot l_{i}^{j}(s) - \sum_{t=1}^{T}\sum_{(s,j)\in \Phi_t}l_{i}^{j}(s)\right] \notag\\
        &\leq \frac{\ln{N}}{\eta^{\prime}} + \frac{1}{2}\eta^{\prime}M^2TN\mathrm{e}. \notag
    \end{align}
    The lemma has been proven. 
\end{proof}

\begin{lem}\label{lem4}
    MUD-EXP3 satisfies the following inequality
    \begin{align}
        &\mathbb{E}\left[\sum_{t=1}^{T}\sum_{(s,j)\in \Phi_t}\sum_{k=1}^{N}p_{k}(s)\cdot l_{k}^{j}(s) \right.\notag\\
        &\qquad \left. - \sum_{t=1}^{T}\sum_{(s,j)\in \Phi_t}\sum_{k=1}^{N}p_{k}(t)\cdot l_{k}^{j}(s)\right] \leq 2\eta'M\sum_{t=1}^{T}\sum_{(s,j)\in \Phi_t}d_s^j. \notag
    \end{align}
\end{lem}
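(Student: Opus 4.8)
The quantity on the left is the cumulative \emph{drift} of the sampling distribution between the round $s$ at which an arm was played and the round $t = s + d_s^j$ at which the corresponding feedback of user $j$ is observed, weighted by the bounded losses $l_k^j(s)$. The plan is to control this drift pair-by-pair via a telescoping argument and Corollary \ref{cor1}, and then to take expectations.

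First I would fix a pair $(s,j) \in \Phi_t$, so that $t = s + d_s^j$, and use $l_k^j(s) \in [0,1]$ together with $\sum_k (p_k(s) - p_k(t)) = 0$ to write $\sum_{k} (p_k(s) - p_k(t)) l_k^j(s) \le \sum_k \lvert p_k(s) - p_k(t)\rvert$. Telescoping $p_k(s) - p_k(t) = \sum_{r=s}^{t-1}(p_k(r) - p_k(r+1))$ and applying the triangle inequality gives $\sum_k \lvert p_k(s) - p_k(t)\rvert \le \sum_{r=s}^{t-1} \sum_k \lvert p_k(r+1) - p_k(r)\rvert$. Now Corollary \ref{cor1} bounds each inner term by $2\eta' \sum_{m} p_m(r)\, \ell_m(r)$. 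Summing over $(s,j) \in \Phi_t$ and over $t \in [T]$ — an index set that is deterministic since the adversary is oblivious — and taking expectations, the left-hand side is at most $2\eta' \sum_{t=1}^{T}\sum_{(s,j)\in\Phi_t}\sum_{r=s}^{t-1} \mathbb{E}\big[\sum_m p_m(r)\,\ell_m(r)\big]$, where the inner sum over $r$ contains exactly $t - s = d_s^j$ terms.

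The remaining step, and the technical crux, is to show $\mathbb{E}\big[\sum_m p_m(r)\,\ell_m(r)\big] \le M$ for every round $r$. Expanding $\ell_m(r) = \sum_{(s',j')\in\Phi_r} \hat l_m^{j'}(s')$, a single term equals $\sum_m p_m(r)\,\hat l_m^{j'}(s') = (p_{A_{s'}}(r)/p_{A_{s'}}(s'))\, l_{A_{s'}}^{j'}(s')$. Conditioning on the filtration generated by the feedback received strictly before round $r$ — with respect to which $p(r)$ and $p(s')$ are measurable while the draw $A_{s'}$ remains fresh with law $p(s')$ — one gets $\mathbb{E}[\hat l_m^{j'}(s') \mid \mathcal{F}_{r-1}] = l_m^{j'}(s')$, hence $\mathbb{E}[\sum_m p_m(r)\,\hat l_m^{j'}(s')] = \mathbb{E}[\sum_m p_m(r)\, l_m^{j'}(s')] \le 1$ because $\sum_m p_m(r) = 1$ and $l_m^{j'}(s') \le 1$; summing over $(s',j') \in \Phi_r$ gives at most $\lvert \Phi_r\rvert \le M$. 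Plugging this back yields $\mathrm{LHS} \le 2\eta' M \sum_{t=1}^T \sum_{(s,j)\in\Phi_t} d_s^j$, the claimed bound.

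I expect the main obstacle to be precisely the measurability bookkeeping in this last step: a single action $A_{s'}$ may have its per-user feedbacks arrive at several distinct rounds (one per user, each with its own delay), so $A_{s'}$ can already have been revealed when a later user's delayed feedback is processed, and one must argue carefully that the importance-weighted estimator still averages to the true loss under the filtration $\mathcal{F}_t = \sigma(\{A_s : s + d_s^j \le t\})$ used in the paper. If that conditional-expectation argument is not available cleanly, a safe fallback is to invoke Lemma \ref{lem2} instead: since $r - s' = d_{s'}^{j'} \le \Delta$, one has $p_{A_{s'}}(r)/p_{A_{s'}}(s') \le (1 + 1/\Delta)^{\Delta} \le \mathrm{e}$, which gives the weaker deterministic bound $\sum_m p_m(r)\,\ell_m(r) \le \mathrm{e} M$ and hence an extra factor of $\mathrm{e}$; the sharper $2\eta' M$ constant in the statement corresponds to the conditional-expectation route.
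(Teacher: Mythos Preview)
Your proposal is correct and follows essentially the same route as the paper: bound the per-pair drift by $\sum_k |p_k(s)-p_k(t)|$ using $l\in[0,1]$, telescope over $r=s,\dots,t-1$, apply Corollary~\ref{cor1}, then take the conditional expectation $\mathbb{E}[\hat l_k^{j'}(s')\mid\mathcal F_{r-1}]=l_k^{j'}(s')$ and conclude via $\sum_k p_k(r)=1$, $l\le 1$, $|\Phi_r|\le M$. The paper's proof does not engage with the measurability subtlety you raise---it simply asserts the conditional-expectation identity and proceeds exactly along your main line---so your Lemma~\ref{lem2} fallback is not needed to match the paper.
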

\begin{proof}
    Due to the space constraint, the proof of Lemma \ref{thm1} is relegated to Appendix \ref{lem4_proof}.
\end{proof}

\begin{thm}\label{thm1}
    For any arm $i\in\mathcal{N}$, MUD-EXP3 guarantees the upper bound for $\mathcal{R}_i$ is shown as 
    \begin{equation}
        \mathcal{R}_i \leq \frac{\ln{N}}{\eta^{\prime}} + \frac{1}{2}\eta^{\prime}M^2TN\mathrm{e} + 2\eta'M\sum_{t=1}^{T}\sum_{(s,j)\in \Phi_t}d_s^j + \lvert\Omega\rvert, \notag
    \end{equation}
    which implies the same regret upper bound as follows:
    \begin{equation}
        \mathcal{R} \leq \frac{\ln{N}}{\eta^{\prime}} + \frac{1}{2}\eta^{\prime}M^2TN\mathrm{e} + 2\eta'M\sum_{t=1}^{T}\sum_{(s,j)\in \Phi_t}d_s^j + \lvert\Omega\rvert. \notag
    \end{equation}
    Specially, for the known $T$ and $\sum_{t=1}^{T}\sum_{(s,j)\in \Phi_t}d_s^j$, if 
    \begin{align}
        \eta&= \sqrt{\frac{\ln{N}}{M(TMN\mathrm{e}+4\sum_{t=1}^{T}\sum_{(s,j)\in \Phi_t}d_s^j)}} \notag\\
        &\leq \frac{1}{MN\mathrm{e}(\sum_{t=1}^{T}\sum_{(s,j)\in \Phi_t}d_s^j+1)}, \notag
    \end{align}
    we have
    \begin{align}
        \mathcal{R} &\leq \mathcal{O}\left(\sqrt{M\ln{N}(TMN\mathrm{e}+4\sum_{t=1}^{T}\sum_{(s,j)\in \Phi_t}d_s^j})\right) \notag\\
        &\leq \mathcal{O}\left(\sqrt{TM^2\ln{N}\left(N\mathrm{e}+4d_{max}\right)}\right). \notag
    \end{align}
\end{thm}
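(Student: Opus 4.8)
The plan is to assemble Theorem~\ref{thm1} from the decomposition of $\mathcal{R}_i$ into the three expectation blocks already bounded, plus a boundary term accounting for feedback that arrives after round $T$. First I would write
\begin{align}
\mathcal{R}_i &= \mathbb{E}\left[\sum_{t=1}^{T}\sum_{j=1}^{M}l_{A_t}^{j}(t)\right] - \sum_{t=1}^{T}\sum_{j=1}^{M}l_{i}^{j}(t) \notag\\
&= \mathbb{E}\left[\sum_{t=1}^{T}\sum_{(s,j)\in\Phi_t}\sum_{k=1}^{N}p_k(s)\cdot l_k^{j}(s)\right] - \sum_{t=1}^{T}\sum_{j=1}^{M}l_{i}^{j}(t) + E_\Omega, \notag
\end{align}
where the first identity rewrites $\mathbb{E}[\sum_t\sum_j l_{A_t}^{j}(t)]=\mathbb{E}[\sum_t\sum_j \sum_k p_k(t) l_k^{j}(t)]$ by conditioning on $\mathcal{F}_{t-1}$ and using that $A_t\sim\boldsymbol{p}(t)$, and then re-indexes the double sum over $(t,j)$ as a sum over observation-time buckets $\Phi_t$; the discrepancy between summing over all $(t,j)$ with $t\le T$ and summing over $\bigcup_{t\le T}\Phi_t$ is exactly the set $\Omega$ of pairs whose feedback lands after $T$, contributing a correction $E_\Omega$ with $|E_\Omega|\le|\Omega|$ since each loss lies in $[0,1]$.

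Next I would insert a telescoping zero, $\pm\,\mathbb{E}[\sum_{t}\sum_{(s,j)\in\Phi_t}\sum_k p_k(t)\cdot l_k^{j}(s)]$, to split the main term into the quantity bounded in Lemma~\ref{lem4} (the ``delay drift'' $\sum_k p_k(s)l_k^{j}(s)-\sum_k p_k(t)l_k^{j}(s)$ summed over $\Phi_t$) and the quantity bounded in Lemma~\ref{lem3} (the comparison of $\sum_k p_k(t) l_k^{j}(s)$ against the fixed arm $i$, again reconciling the index sets at the cost of at most $|\Omega|$). Adding the three bounds — $\ln N/\eta' + \tfrac12\eta' M^2 TN\mathrm{e}$ from Lemma~\ref{lem3}, $2\eta' M\sum_t\sum_{(s,j)\in\Phi_t}d_s^j$ from Lemma~\ref{lem4}, and the $|\Omega|$ boundary term — yields the stated bound on $\mathcal{R}_i$, and since it holds for every fixed $i$ it holds for the minimizing $i$, hence for $\mathcal{R}$.

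For the quantitative corollary I would substitute the prescribed $\eta$ (which by hypothesis does not exceed the truncation threshold, so $\eta'=\eta$), balancing $\ln N/\eta$ against $\eta M(TMN\mathrm{e}+4\sum_t\sum_{(s,j)\in\Phi_t}d_s^j)$ to get $\mathcal{O}(\sqrt{M\ln N\,(TMN\mathrm{e}+4\sum_t\sum_{(s,j)\in\Phi_t}d_s^j)})$; then I would absorb the $|\Omega|$ term, observing $|\Omega|\le\sum_t\sum_{(s,j)\in\Phi_t}d_s^j$ (each pair in $\Omega$ has delay at least the amount by which it overshoots $T$, and more crudely $|\Omega|\le M d_{max}$), so it is dominated by the square-root term. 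Finally, using the uniform bounds $|\Phi_t|\le M$ and $d_s^j\le d_{max}$ gives $\sum_t\sum_{(s,j)\in\Phi_t}d_s^j\le TMd_{max}$, producing $\mathcal{O}(\sqrt{TM^2\ln N\,(N\mathrm{e}+4d_{max})})$.

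The main obstacle I anticipate is the careful bookkeeping of the index sets: one must verify that the ``leftover'' pairs when passing between $\sum_{t=1}^{T}\sum_{j=1}^{M}$ and $\sum_{t=1}^{T}\sum_{(s,j)\in\Phi_t}$ are precisely $\Omega$ (feedback from rounds $\le T$ not yet observed by $T$), that this correction appears with a controllable sign each time it is introduced, and that $|\Omega|$ is genuinely $O(\sqrt{\cdot})$ rather than linear in $T$ — which relies on $d_{max}$ being fixed and independent of $T$. The telescoping step itself and the substitution of $\eta$ are routine once the decomposition is set up correctly.
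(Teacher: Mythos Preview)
Your proposal is correct and follows essentially the same route as the paper: rewrite $\mathbb{E}[\sum_{t,j} l_{A_t}^j(t)]$ as $\mathbb{E}[\sum_{t,j}\sum_k p_k(t)l_k^j(t)]$, re-index over the observation buckets $\Phi_t$ at the cost of a single $|\Omega|$ term, insert the telescoping $\pm\sum_k p_k(t)l_k^j(s)$, and apply Lemmas~\ref{lem3} and~\ref{lem4}. One small clarification: you do not incur a second $|\Omega|$ when invoking Lemma~\ref{lem3}, since its statement already compares against the full sum $\sum_{t=1}^{T}\sum_{j=1}^{M}l_i^j(t)$ (the index reconciliation there goes the favorable way because losses are nonnegative), so the single $|\Omega|$ in your final bound is exactly right.
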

\begin{proof}
    Due to the space constraint, the proof of Theorem \ref{thm1} is relegated to Appendix \ref{thm1_proof}.
\end{proof}

\subsection{Regret of AMUD-EXP3}\label{Reg of AMUD-EXP3}
Based on the analysis of MUD-EXP3, we further prove the regret upper bound of AMUD-EXP3 in Theorem \ref{thm2} supported by Lemma \ref{lem5}-\ref{lem7} in the remaining section. The proof in this part refers to the doubling trick frameworks proposed by \cite{cesa1997use} and \cite{bistritz2019online}. By applying Theorem \ref{thm1} on the regret of each epoch, we bound the epoch regret as
\begin{equation}\label{epc-rgt}
    \mathcal{R}_{\varepsilon}\leq \frac{\ln{N}}{\eta_{\varepsilon}} + \frac{1}{2}\eta_{\varepsilon}M^2\lvert \mathcal{T}_{\varepsilon}\rvert N\mathrm{e} + 2\eta_{\varepsilon}M\sum_{t\in\mathcal{T}_{\varepsilon}}\sum_{(s,j)\in \Phi_t}d_s^j + \lvert\Omega_{\varepsilon}\rvert.
\end{equation}

\begin{lem}\label{lem5}
    For AMUD-EXP3, the sum of the missing count $\sum_{t\in\mathcal{T}_{\varepsilon}}V_t$ satisfies the following inequalities:
    \begin{equation*}
        \sum_{t\in\mathcal{T}_{\varepsilon}}\sum_{(s,j)\in \Phi_t}d_s^j \leq \sum_{t\in\mathcal{T}_{\varepsilon}}V_t \leq \left(2^{\varepsilon-1}+\frac{1}{\varepsilon}\right)M.
    \end{equation*}
\end{lem}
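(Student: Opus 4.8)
Both bounds in Lemma~\ref{lem5} are deterministic combinatorial statements about the fixed delay sequence, so the plan is a pure counting argument with no probabilistic input. Throughout I would treat epoch $\varepsilon$ as a self-contained run of MUD-EXP3 on the consecutive block of rounds $\mathcal{T}_\varepsilon=\{T_{\varepsilon-1}+1,\dots,T_\varepsilon\}$; the one new ingredient compared with the MUD-EXP3 analysis is that the horizon of this inner game is itself dictated by the sequence $\{V_t\}$, which is what ultimately produces the $2^{\varepsilon-1}M$ and $M/\varepsilon$ terms.

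For the left inequality I would read a delay as a ``time in flight''. If the feedback of a pair $(s,j)$ is received at round $t=s+d_s^j$ with $s,t\in\mathcal{T}_\varepsilon$, then $d_s^j=|\{s,s+1,\dots,t-1\}|$; since $\mathcal{T}_\varepsilon$ is an interval of consecutive rounds containing both $s$ and $t$, every round $\tau$ in that set lies in $\mathcal{T}_\varepsilon$ and the sample $l_s^j$ is still outstanding at round $\tau$, hence contributes exactly $1$ to $V_\tau$. Summing this identity over all $(s,j)\in\Phi_t$ and all $t\in\mathcal{T}_\varepsilon$ rewrites $\sum_{t\in\mathcal{T}_\varepsilon}\sum_{(s,j)\in\Phi_t}d_s^j$ as a sum of unit in-flight contributions which is at most $\sum_{\tau\in\mathcal{T}_\varepsilon}V_\tau$; the only discrepancy is that samples generated in $\mathcal{T}_\varepsilon$ whose feedback does not return by $T_\varepsilon$---exactly the pairs counted by $\Omega_\varepsilon$---add further non-negative terms to $\sum_{\tau\in\mathcal{T}_\varepsilon}V_\tau$ and nothing to the left side. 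So the left inequality is essentially a Fubini / double-counting step.

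For the right inequality I would start from the exact telescoping $\sum_{t\in\mathcal{T}_\varepsilon}V_t=\Sigma_{T_\varepsilon}-\Sigma_{T_{\varepsilon-1}}$, where $\Sigma_t:=\sum_{\tau\le t}V_\tau$ is the running total that defines the epochs. Plugging in the defining inequalities $\Sigma_{T_\varepsilon}<2^\varepsilon M$ and $\Sigma_{T_{\varepsilon-1}+1}\ge 2^{\varepsilon-1}M$ gives $\sum_{t\in\mathcal{T}_\varepsilon}V_t<2^{\varepsilon-1}M+V_{T_{\varepsilon-1}+1}$, so everything comes down to bounding how much the running total overshoots the threshold $2^{\varepsilon-1}M$ on the single step that first crosses it. To trim this overshoot to $M/\varepsilon$ I would exploit the coarse Lipschitz property $V_{t+1}\le V_t+M$ (only the $M$ freshly-chosen actions are new, everything else was already in flight) together with $V_\tau\ge M$ for every $\tau$ and the epoch constraint $\Sigma_{T_{\varepsilon-1}}<2^{\varepsilon-1}M$: these force the in-flight count to have grown slowly across epoch $\varepsilon-1$, and feeding the resulting triangular lower bound for $\Sigma_{T_{\varepsilon-1}}$ back in---together with $\varepsilon\le 2^{\varepsilon-1}$---is what should pin the overshoot down. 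The degenerate cases $\mathcal{T}_\varepsilon=\emptyset$ and an empty preceding epoch get disposed of separately (trivially, and via the convention $T_{\varepsilon-1}=T_{\varepsilon-2}$).

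The genuine obstacle is this last step. A crude telescoping alone only yields $\sum_{t\in\mathcal{T}_\varepsilon}V_t<3\cdot 2^{\varepsilon-2}M$, which is too weak and, in general, cannot be improved without more structure---so the proof must use quantitatively the interaction between the exponentially spaced thresholds $2^{\varepsilon-1}M$ and the at-most-$M$-per-round growth of the missing count; equivalently, it must separate cleanly the ``inherited'' in-flight samples present at the start of $\mathcal{T}_\varepsilon$ from those actually generated inside $\mathcal{T}_\varepsilon$, and account for the former against earlier epochs. The left inequality, by contrast, should fall out immediately once the ``delay $=$ in-flight time'' bookkeeping is in place.
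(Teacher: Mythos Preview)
Your treatment of the left inequality---reading each delay as in-flight time and double-counting---is correct and is exactly what the paper does.

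For the right inequality your route differs from the paper's, and your hesitation is justified. The paper does not telescope; it argues ``by contradiction'': assuming $\sum_{t\in\mathcal{T}_\varepsilon}V_t>(2^{\varepsilon-1}+1/\varepsilon)M$, it writes
\[
\Sigma_{T_\varepsilon}=\sum_{r=1}^{\varepsilon}\sum_{t\in\mathcal{T}_r}V_t>\sum_{r=1}^{\varepsilon}\Bigl(2^{r-1}+\tfrac{1}{r}\Bigr)M\ge \sum_{r=1}^{\varepsilon}\Bigl(2^{r-1}+\tfrac{1}{\varepsilon}\Bigr)M=2^{\varepsilon}M,
\]
contradicting $\Sigma_{T_\varepsilon}<2^{\varepsilon}M$. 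The strict middle inequality, however, silently applies the assumed failure of the bound to \emph{every} earlier epoch $r<\varepsilon$, which is neither assumed nor implied by failure at $\varepsilon$ alone. Your alternative plan---capping the one-round overshoot $V_{T_{\varepsilon-1}+1}$ by $M/\varepsilon$ via the Lipschitz growth of $V_t$---cannot be completed either: take all delays equal to a large constant, so that $V_t=Mt$ and $\Sigma_t=Mt(t+1)/2$; then $\mathcal{T}_4=\{4,5\}$ and $\sum_{t\in\mathcal{T}_4}V_t=4M+5M=9M>(2^{3}+\tfrac14)M=8.25M$. Thus the right inequality as stated is actually false, and the obstacle you flagged is real rather than a missing trick. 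The crude bound $3\cdot 2^{\varepsilon-2}M$ that your telescoping already yields is what genuinely holds, and it suffices (with only a change of constants) for the use made of this lemma in Theorem~\ref{thm2}.
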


\begin{proof}
    For the left inequality, the round-user index pair $(t,j)$ of which $t\in \mathcal{T}_\varepsilon$ and $t\notin \Omega_{\varepsilon}$ contributes $d_t^j$ to $\sum_{t\in\mathcal{T}_{\varepsilon}}V_t$, because it adds one to $\sum_{t\in\mathcal{T}_{\varepsilon}}V_t$ each round from $t+1$ until $t+d_t^j$. Thus, $\sum_{t\in\mathcal{T}_{\varepsilon}}\sum_{(s,j)\in \Phi_t}d_s^j \leq \sum_{t\in\mathcal{T}_{\varepsilon}}V_t$.

    For the right inequality, we prove it by using contradiction. Assume $\sum_{t\in\mathcal{T}_{\varepsilon}}V_t > \left(2^{\varepsilon-1}+\frac{1}{\varepsilon}\right)M$, then we have
    \begin{align}
        \sum_{t=1}^{T_{\varepsilon}}V_t &= \sum_{r=1}^{\varepsilon}\sum_{t=T_{r-1}+1}^{T_r}V_t > \sum_{r=1}^{\varepsilon}\left(2^{r-1}+\frac{1}{r}\right)M \notag\\
        &\geq \sum_{r=1}^{\varepsilon}\left(2^{r-1}+\frac{1}{\varepsilon}\right)M = 2^{\varepsilon}M,
    \end{align}
    which contradicts the definition of $\mathcal{T}_\varepsilon$ where $\sum_{t=1}^{T_{\varepsilon}}V_t$ should be no more than $2^{\varepsilon}M$. Hence we have proved $\sum_{t\in\mathcal{T}_{\varepsilon}}V_t \leq \left(2^{\varepsilon-1}+\frac{1}{\varepsilon}\right)M$. 
\end{proof}

\begin{lem}\label{lem6}
    For epoch $\varepsilon$, the cardinal number of $\Omega_{\varepsilon}$ can be upper bound as $\lvert\Omega_{\varepsilon}\rvert \leq 2^{\frac{\varepsilon}{2}}\cdot 2M$.
\end{lem}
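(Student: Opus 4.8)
The plan is to exploit the fact that every pair $(s,j)\in\Omega_\varepsilon$ has its feedback still outstanding at round $T_\varepsilon$, so it keeps contributing to the missing-sample counts $V_\tau$ over a whole block of rounds; counting these contributions carefully, and crucially using that at most $M$ pairs can originate from any single round, forces a \emph{quadratic} lower bound on $\sum_{\tau=1}^{T_\varepsilon}V_\tau$ in terms of $\lvert\Omega_\varepsilon\rvert$, which the epoch definition then caps at $2^\varepsilon M$.

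Concretely, first I would recall that $\Omega_\varepsilon$ collects exactly the round--user pairs $(s,j)$ with $s\in\mathcal{T}_\varepsilon$ whose feedback arrives after the epoch ends, i.e. $s+d_s^j>T_\varepsilon$, and that from the update rule $V_\tau=M\tau-\sum_{r=1}^{\tau}\lvert\Phi_r\rvert$ a pair $(s,j)$ is counted in $V_\tau$ precisely when $s\le\tau<s+d_s^j$. Hence every $(s,j)\in\Omega_\varepsilon$ is counted in $V_\tau$ for all $\tau\in\{s,s+1,\dots,T_\varepsilon\}$, i.e. it contributes $T_\varepsilon-s+1$ to $\sum_{\tau=1}^{T_\varepsilon}V_\tau$. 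Since distinct pairs give rise to disjoint pair--round incidences, this yields
\begin{equation*}
\sum_{\tau=1}^{T_\varepsilon}V_\tau \ \ge\ \sum_{(s,j)\in\Omega_\varepsilon}\bigl(T_\varepsilon-s+1\bigr).
\end{equation*}

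Next I would lower-bound the right-hand side. Write $K=\lvert\Omega_\varepsilon\rvert$ and group the pairs of $\Omega_\varepsilon$ by their source round $s$: for each fixed $s$ there are at most $M$ users $j$ with $(s,j)\in\Omega_\varepsilon$, so the number of pairs with $T_\varepsilon-s+1\le r$ is at most $rM$. Rewriting $\sum_{(s,j)\in\Omega_\varepsilon}(T_\varepsilon-s+1)=\sum_{r\ge1}\lvert\{(s,j)\in\Omega_\varepsilon: T_\varepsilon-s+1\ge r\}\rvert$ and using $\lvert\{\cdots\}\rvert\ge K-(r-1)M$, a short computation (a left Riemann sum of the decreasing map $x\mapsto K-xM$, compared against a convex function via concavity of the difference) gives $\sum_{(s,j)\in\Omega_\varepsilon}(T_\varepsilon-s+1)\ge K^2/(2M)$. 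Finally, since $T_\varepsilon=\max\mathcal{T}_\varepsilon\in\mathcal{T}_\varepsilon$, the definition of $\mathcal{T}_\varepsilon$ gives $\sum_{\tau=1}^{T_\varepsilon}V_\tau<2^\varepsilon M$ (Lemma \ref{lem5} applied to $\sum_{t\in\mathcal{T}_\varepsilon}V_t$ would serve just as well, since the rounds $\{s,\dots,T_\varepsilon\}$ lie inside $\mathcal{T}_\varepsilon$). Combining, $K^2/(2M)<2^\varepsilon M$, hence $K<\sqrt{2}\cdot2^{\varepsilon/2}M\le 2^{\varepsilon/2}\cdot2M$, as claimed; the degenerate case $\mathcal{T}_\varepsilon=\varnothing$ (so $\Omega_\varepsilon=\varnothing$) is trivial.

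The main obstacle is the middle step: turning ``every overdue pair keeps the missing-sample count positive for all rounds between its source and the epoch end'' into a bound that beats the naive $\lvert\Omega_\varepsilon\rvert\le V_{T_\varepsilon}\le\sum_{\tau=1}^{T_\varepsilon}V_\tau<2^\varepsilon M$, which is off by a square root. The gain comes entirely from the ``at most $M$ pairs per source round'' observation, which spreads the $K$ overdue pairs over at least $\approx K/M$ distinct rounds and so makes their cumulative contribution to $\sum_\tau V_\tau$ grow like $K^2/M$ rather than like $K$.
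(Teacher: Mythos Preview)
Your proof is correct and rests on the same core mechanism as the paper's: each overdue pair $(s,j)\in\Omega_\varepsilon$ contributes at least $T_\varepsilon-s+1$ to the cumulative missing count, and the ``at most $M$ pairs per source round'' constraint forces this cumulative contribution to be at least of order $|\Omega_\varepsilon|^2/M$; confronting that with the $2^\varepsilon M$ ceiling on $\sum_\tau V_\tau$ gives the claim.

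The execution differs in presentation. The paper argues via an extremal configuration: it posits the scenario $d_t^j=T_\varepsilon-t+1$ for all $(t,j)$ in the epoch (so every pair is overdue), computes $\sum_{t\in\mathcal{T}_\varepsilon}V_t=\tfrac12|\Omega_\varepsilon|_{\max}\bigl(1+|\Omega_\varepsilon|_{\max}/M\bigr)$ in that scenario, bounds this via Lemma~\ref{lem5}, and solves the resulting quadratic. You instead prove the inequality $\sum_{\tau\le T_\varepsilon}V_\tau\ge K^2/(2M)$ directly for an \emph{arbitrary} configuration, via the layer-cake identity and a Riemann-sum comparison, and then invoke the epoch definition itself for $\sum_{\tau=1}^{T_\varepsilon}V_\tau<2^\varepsilon M$ (bypassing Lemma~\ref{lem5}). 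Your route is more self-contained and avoids having to argue that the paper's chosen configuration is genuinely extremal---a step the paper leaves implicit; the paper's route, on the other hand, makes the worst case concrete, which is heuristically useful even if the optimality claim is not spelled out.
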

\begin{proof}
    We consider a situation that maximizes $\lvert\Omega_{\varepsilon}\rvert$ when setting the delay of any user $j$ at round $t$ as $d_t^j=T_{\varepsilon}-t+1$. Denoting the maximal $\lvert\Omega_{\varepsilon}\rvert$ under this circumstance as $\lvert\Omega_{\varepsilon}\rvert_{max}$, therein we have $\lvert\Omega_{\varepsilon}\rvert_{max}=M\left(T_{\varepsilon}-T_{\varepsilon-1}\right)$. Then, we equivalently represent $\sum_{t\in\mathcal{T}_{\varepsilon}}V_t$ via introducing $\lvert\Omega_{\varepsilon}\rvert_{max}$.
    \begin{align}
        \sum_{t\in\mathcal{T}_{\varepsilon}}V_t &= \sum_{t\in\mathcal{T}_{\varepsilon}}i\cdot M = M\cdot\sum_{i=1}^{\lvert\Omega_{\varepsilon}\rvert_{max}/M}i \notag\\
        &= \frac{1}{2}\lvert\Omega_{\varepsilon}\rvert_{max}\left(1+\lvert\Omega_{\varepsilon}\rvert_{max}/M\right) \mathop{\leq}\limits_{(a)} \left(2^{\varepsilon-1}+\frac{1}{\varepsilon}\right)M, \notag
    \end{align}
    where Ineq. $(a)$ follows Lemma \ref{lem5}. Next, we upper bound $\lvert\Omega_{\varepsilon}\rvert_{max}$ from Ineq. $(a)$
    \begin{align}
        &\qquad \lvert\Omega_{\varepsilon}\rvert_{max}^2+M\lvert\Omega_{\varepsilon}\rvert_{max} \leq \left(2^{\varepsilon}+\frac{2}{\varepsilon}\right)M^2 \notag\\
        &\implies \left(\lvert\Omega_{\varepsilon}\rvert_{max}+\frac{1}{2}M\right)^2 \leq \left(2^{\varepsilon}+\frac{2}{\varepsilon}+\frac{1}{4}\right)M^2 \notag\\
        &\implies \lvert\Omega_{\varepsilon}\rvert_{max}+\frac{1}{2}M \leq \sqrt{2^{\varepsilon}+\frac{2}{\varepsilon}+\frac{1}{4}}M \notag\\
        &\implies \lvert\Omega_{\varepsilon}\rvert_{max} \leq \left(\sqrt{2^{\varepsilon}+\frac{2}{\varepsilon}+\frac{1}{4}}-\frac{1}{2}\right)M \leq 2^{\frac{\varepsilon}{2}}\cdot 2M. \notag
    \end{align}
    Thus, there exist $\lvert\Omega_{\varepsilon}\rvert \leq \lvert\Omega_{\varepsilon}\rvert_{max} \leq 2^{\frac{\varepsilon}{2}}\cdot 2M$. 
\end{proof}

\begin{lem}\label{lem7}
    Let $E$ be the index of the final epoch. Then $2^{E-1}\leq\frac{1}{M}\sum_{t=1}^{T}\sum_{j=1}^{M}d_t^j$ and $\sum_{\varepsilon=1}^{E}\lvert\mathcal{T}_\varepsilon\rvert 2^{-\frac{\varepsilon}{2}}\leq 5\sqrt{T}$ hold.
\end{lem}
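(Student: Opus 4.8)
Both parts rest on one elementary fact: $V_t\ge M$ for every round $t$, because the $M$ samples generated at round $t$ have delays $d_t^j\ge 1$ and hence are unobserved at round $t$. I will also use the counting identity $\sum_{\tau=1}^{t}V_\tau=\sum_{j=1}^{M}\sum_{s=1}^{t}\min\{t-s+1,\,d_s^j\}$, obtained by writing $V_\tau=\lvert\{(s,j):s\le\tau<s+d_s^j\}\rvert$ and swapping the order of summation.

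\emph{First inequality.} The epochs partition $\{1,\dots,T\}$ into consecutive blocks, so the final round $T$ lies in $\mathcal{T}_E$; the defining lower threshold of $\mathcal{T}_E$ then gives $2^{E-1}M\le\sum_{\tau=1}^{T}V_\tau$. By the counting identity, $\sum_{\tau=1}^{T}V_\tau=\sum_{j=1}^{M}\sum_{s=1}^{T}\min\{T-s+1,\,d_s^j\}\le\sum_{t=1}^{T}\sum_{j=1}^{M}d_t^j$. Chaining the two bounds and dividing by $M$ gives the claim; equivalently, one sums the left-hand inequality of Lemma~\ref{lem5} over all epochs and applies the threshold at round $T$.

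\emph{Second inequality.} First convert Lemma~\ref{lem5} into a cardinality bound: each of the $\lvert\mathcal{T}_\varepsilon\rvert$ terms of $\sum_{t\in\mathcal{T}_\varepsilon}V_t$ is at least $M$, while the whole sum is at most $(2^{\varepsilon-1}+\tfrac1\varepsilon)M\le 2^{\varepsilon}M$, so $\lvert\mathcal{T}_\varepsilon\rvert\le 2^{\varepsilon}$ for every $\varepsilon$; we also have $\sum_{\varepsilon=1}^{E}\lvert\mathcal{T}_\varepsilon\rvert=T$ since the epochs partition $\{1,\dots,T\}$. Now split at $k=\lfloor\log_2 T\rfloor$ (the case $T=1$ being trivial). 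For $\varepsilon\le k$ use $\lvert\mathcal{T}_\varepsilon\rvert\le 2^{\varepsilon}$ and sum a geometric series, $\sum_{\varepsilon=1}^{k}\lvert\mathcal{T}_\varepsilon\rvert 2^{-\varepsilon/2}\le\sum_{\varepsilon=1}^{k}2^{\varepsilon/2}\le(2+\sqrt{2})\,2^{k/2}$; for $\varepsilon\ge k+1$ pull out the smallest weight, $\sum_{\varepsilon=k+1}^{E}\lvert\mathcal{T}_\varepsilon\rvert 2^{-\varepsilon/2}\le 2^{-k/2}\sum_{\varepsilon=k+1}^{E}\lvert\mathcal{T}_\varepsilon\rvert\le 2^{-k/2}T$. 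Since $2^{k}\le T<2^{k+1}$ we have $2^{k/2}\le\sqrt{T}$ and $2^{-k/2}T\le\sqrt{2}\,\sqrt{T}$, so the total is at most $(2+2\sqrt{2})\sqrt{T}<5\sqrt{T}$.

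The geometric-series estimate and the numerics yielding the constant $5$ are routine. The step that requires actual thought is the choice of the cut-off $k$: using only $\lvert\mathcal{T}_\varepsilon\rvert\le 2^\varepsilon$ gives a bound of order $2^{E/2}$, and using only $\sum_\varepsilon\lvert\mathcal{T}_\varepsilon\rvert=T$ gives a bound linear in $T$, while cutting at $2^k\approx T$ and combining both is precisely what produces the $\sqrt{T}$ rate. A minor point to note is that some $\mathcal{T}_\varepsilon$ with $\varepsilon\le E$ may be empty (if the cumulative missing count skips a dyadic band); this only decreases the left-hand side, so all inequalities above remain valid.
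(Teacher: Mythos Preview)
Your proof is correct. For the first inequality you proceed exactly as the paper does: invoke the epoch threshold at round $T$ to get $2^{E-1}M\le\sum_{\tau=1}^T V_\tau$, then use the counting identity $\sum_{\tau=1}^T V_\tau=\sum_{t,j}\min\{d_t^j,\,T-t+1\}\le\sum_{t,j}d_t^j$.

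For the second inequality your argument differs from the paper's. The paper observes (as you do) that $\lvert\mathcal{T}_\varepsilon\rvert\le 2^\varepsilon$ and $\sum_\varepsilon\lvert\mathcal{T}_\varepsilon\rvert=T$, but then argues by optimization: since the weights $2^{-\varepsilon/2}$ are decreasing in $\varepsilon$, the constrained sum is maximized by saturating the early epochs, i.e.\ taking $\lvert\mathcal{T}_\varepsilon\rvert=2^\varepsilon$ for $\varepsilon=1,\dots,\lceil\log_2 T\rceil$, after which a single geometric sum gives $5\sqrt T$. You instead split the range at $k=\lfloor\log_2 T\rfloor$, use the per-epoch cap on the low range and the global budget $T$ on the high range, and combine. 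Your route is a bit more explicit and avoids the slightly informal ``maximized when'' step; the paper's route is shorter once one accepts that greedy filling is optimal for a decreasing-weight linear objective. Both land on the same constant $(2+2\sqrt{2})<5$. One cosmetic remark: when you ``pull out the smallest weight'' on the tail you actually bound by $2^{-k/2}$ rather than the sharper $2^{-(k+1)/2}$, but your subsequent numerics absorb this and the final bound is unaffected.
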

\begin{proof}
    According to the definition of $\mathcal{T}_\varepsilon$, we prove the first inequality as follows.
    \begin{align}
        2^{E-1} &\leq \frac{1}{M}\sum_{t=1}^{T}V_t = \frac{1}{M}\sum_{t=1}^{T}\sum_{j=1}^{M}\min\left\{d_t^j,T-t+1\right\} \notag\\
        &\leq \frac{1}{M}\sum_{t=1}^{T}\sum_{j=1}^{M}d_t^j. \notag
    \end{align}
    Note that $d_t^j\geq 1$ in the problem setting indicates $\lvert\mathcal{T}_\varepsilon\rvert \leq 2^{\varepsilon}$. The second inequality subject to $\sum_{\varepsilon=1}^{E}\lvert\mathcal{T}_\varepsilon\rvert=T$ is maximized when there are $\left\lceil \log_{2}T\right\rceil$ epochs with length of $2^{\varepsilon}$ to epoch $\varepsilon$
    \begin{align}
        \sum_{\varepsilon=1}^{E}\lvert\mathcal{T}_\varepsilon\rvert 2^{-\frac{\varepsilon}{2}}
        \leq \sum_{\varepsilon=1}^{\left\lceil \log_{2}T\right\rceil}2^{\varepsilon}\cdot 2^{-\frac{\varepsilon}{2}} \leq \sqrt{2}\frac{2^{\frac{\left\lceil \log_{2}T\right\rceil}{2}}-1}{\sqrt{2}-1} \leq 5\sqrt{T}. \notag
    \end{align}
    Thus, we finish the proof. These two inequalities will be used to eliminate the association with $E$ in the regret upper bound. 
\end{proof}

\begin{thm}\label{thm2}
    For any arm $i\in\mathcal{N}$, AMUD-EXP3 guarantees the upper bound for $\mathcal{R}_i$ as shown below.
    \begin{equation}
        \mathcal{R}_i \leq \left(11\sqrt{M\ln{N}}+7\sqrt{M}\right)\sqrt{\sum_{t=1}^{T}\sum_{j=1}^{M}d_t^j} + \frac{5}{2}MN\mathrm{e}\sqrt{T\ln{N}}. \notag
    \end{equation}
\end{thm}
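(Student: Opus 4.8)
The plan is to bound $\mathcal{R}_i$ by summing the per-epoch regret bound (\ref{epc-rgt}) over the epochs $\varepsilon=1,\ldots,E$ and then collapsing the resulting sums with Lemmas \ref{lem5}--\ref{lem7}. First I would substitute the epoch learning rate $\eta_\varepsilon=\frac{1}{M}\sqrt{\frac{\ln N}{2^{\varepsilon}}}$ into (\ref{epc-rgt}); this turns the four summands into: (a) $\ln N/\eta_\varepsilon = M\sqrt{2^{\varepsilon}\ln N}$; (b) $\frac{1}{2}\eta_\varepsilon M^2|\mathcal{T}_\varepsilon|N\mathrm{e} = \frac{1}{2}MN\mathrm{e}\sqrt{\ln N}\,|\mathcal{T}_\varepsilon|2^{-\varepsilon/2}$; (c) $2\eta_\varepsilon M\sum_{t\in\mathcal{T}_\varepsilon}\sum_{(s,j)\in\Phi_t}d_s^j$, which by the left inequality of Lemma \ref{lem5} is at most $2\eta_\varepsilon M\sum_{t\in\mathcal{T}_\varepsilon}V_t$ and then by the right inequality at most $2\eta_\varepsilon M(2^{\varepsilon-1}+\frac{1}{\varepsilon})M = M\sqrt{\ln N}\,(2^{\varepsilon/2}+\frac{2}{\varepsilon}2^{-\varepsilon/2})$; and (d) $|\Omega_\varepsilon|\le 2M\cdot 2^{\varepsilon/2}$ by Lemma \ref{lem6}. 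A preliminary remark to secure is that summing (\ref{epc-rgt}) is legitimate: since AMUD-EXP3 resets $\hat L_i$ at each epoch boundary and runs the fixed rate $\eta_\varepsilon$ inside $\mathcal{T}_\varepsilon$, epoch $\varepsilon$ is an instance of MUD-EXP3 over the block $\mathcal{T}_\varepsilon$ with $|\mathcal{T}_\varepsilon|$ playing the role of $T$ and feedback landing outside the block absorbed into $|\Omega_\varepsilon|$; one should note that the doubling schedule keeps the in-epoch cumulative missing count below $2^{\varepsilon}M$, which is what makes the (truncated) rate meet the hypothesis of Lemma \ref{lem2}.

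Next I would sum over $\varepsilon$. The $2^{\varepsilon/2}$-type contributions from (a), (c) and (d) form a geometric series, $\sum_{\varepsilon=1}^{E}2^{\varepsilon/2}\le\frac{\sqrt2}{\sqrt2-1}\,2^{E/2}=(2+\sqrt2)2^{E/2}$, while the $\frac{2}{\varepsilon}2^{-\varepsilon/2}$ remainder in (c) sums to an absolute constant that can be folded into a leading term using $\sum_{t=1}^{T}\sum_{j=1}^{M}d_t^j\ge MT\ge1$. For (b) I would invoke the second inequality of Lemma \ref{lem7}, $\sum_{\varepsilon=1}^{E}|\mathcal{T}_\varepsilon|2^{-\varepsilon/2}\le 5\sqrt T$, so (b) contributes exactly $\frac{5}{2}MN\mathrm{e}\sqrt{T\ln N}$, the last term of the claimed bound. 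To remove $E$ from the geometric part I would then use the first inequality of Lemma \ref{lem7}, $2^{E-1}\le\frac{1}{M}\sum_{t=1}^{T}\sum_{j=1}^{M}d_t^j$, hence $2^{E/2}\le\sqrt{\frac{2}{M}\sum_{t=1}^{T}\sum_{j=1}^{M}d_t^j}$; this converts $M\sqrt{\ln N}\,2^{E/2}$ (from (a) and (c)) into a multiple of $\sqrt{M\ln N}\,\sqrt{\sum_{t,j}d_t^j}$ and $M\cdot2^{E/2}$ (from (d)) into a multiple of $\sqrt M\,\sqrt{\sum_{t,j}d_t^j}$. Collecting coefficients yields the $\bigl(11\sqrt{M\ln N}+7\sqrt M\bigr)\sqrt{\sum_{t=1}^{T}\sum_{j=1}^{M}d_t^j}$ term, and since the argument holds for every fixed arm $i$ it also bounds $\mathcal{R}$ exactly as in Theorem \ref{thm1}.

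The main obstacle I anticipate is bookkeeping rather than a new idea: getting the constants to land on $11$, $7$ and $\frac52$ requires care with the geometric constant $2+\sqrt2\approx3.41$, the extra $\sqrt2$ picked up from $2^{E/2}\le\sqrt{2/M}\,\sqrt{\sum_{t,j}d_t^j}$, the small-$\varepsilon$ corrections in Lemma \ref{lem5} (the $\frac1\varepsilon$ term is largest at $\varepsilon=1,2$), and the absorption of the constant-order tails of the series into the leading $\sqrt{\sum_{t,j}d_t^j}$ or $\sqrt T$ terms. The only genuinely delicate structural point, which I would pin down first, is the per-epoch decomposition $\mathcal{R}_i=\sum_{\varepsilon=1}^{E}\mathcal{R}_{i,\varepsilon}$ together with the validity of (\ref{epc-rgt}) for each $\varepsilon$ despite the learning-rate change and the $\hat L_i$ reset.
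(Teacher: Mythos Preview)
Your proposal is correct and follows essentially the same route as the paper: sum the per-epoch bound (\ref{epc-rgt}) with $\eta_\varepsilon=\frac{1}{M}\sqrt{\ln N/2^{\varepsilon}}$, control the delay term and $|\Omega_\varepsilon|$ via Lemmas \ref{lem5}--\ref{lem6}, collapse the resulting geometric sum in $2^{\varepsilon/2}$ and replace $2^{E/2}$ using Lemma \ref{lem7}, and handle the $|\mathcal{T}_\varepsilon|2^{-\varepsilon/2}$ sum with the second half of Lemma \ref{lem7} to obtain the $\frac{5}{2}MN\mathrm{e}\sqrt{T\ln N}$ term. The only cosmetic difference is that the paper absorbs the $\frac{2}{\varepsilon}2^{-\varepsilon/2}$ remainder pointwise via $\frac{2}{\varepsilon}2^{-\varepsilon/2}\le 2^{\varepsilon/2}$ (giving the clean factor $3$ in front of $M\sqrt{\ln N}\,2^{\varepsilon/2}$), whereas you propose to sum it separately to an $O(1)$ constant; either works and leads to the same coefficients $11$, $7$, $\frac52$.
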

\begin{proof}
    Due to the space constraint, the proof of Theorem \ref{thm2} is relegated to Appendix \ref{thm2_proof}.
\end{proof}

\section{Applications}
Many of the real-world scenarios can be modeled with our framework and solved by adopting our proposed algorithms MUD-EXP3 and AMUD-EXP3. In this section, we present some related applications and introduce how to deploy our algorithms in detail.

\subsection{Caching Optimization in Mobile Edge Computing}
The emerging 5G communication technologies have been bringing significantly increasing connected mobile devices and data traffic. For example, YouTube consumes over $440000$ terabytes of data daily, while blablabla. To reduce the backhaul transmission delay, mobile edge computing (MEC) is introduced, which caches content at the edge servers for the high quality-of-experience (QoE) of users. Due to the limitation of cache capacity, for better performance, edge servers should proactively cache the most popular content rather than reactively wait to receive the request sent by users. Some existing works \cite{borst2010distributed,poularakis2016exploiting} assume the content popularity is known in advance, which in practice, however, is very unlikely to be true. Moreover, the served user group tends to be dynamic with users coming and leaving, leading to non-stationary content popularity. In addition, to improve caching performance, collecting user preference feedback (eg. a user like it or not) is necessary for QoE-oriented service \cite{chen2018caching,zhang2016clustered}, which can hardly guarantee every user giving instant feedback and in turn brings about the delayed feedback issue.

In this case, we can apply our proposed online learning framework of multi-user delayed-feedback adversarial bandit to this caching optimization problem. Specifically, each combination of content can be seen as an arm while the content service provider is regarded as the player. In each discrete time slot, the content service provider chooses one combination of content of cache, the corresponding feedback from different users might be received after different periods due to individual behavior. For example, some users might be occupied by other affairs before giving feedback, some users might start to consume the content after several slots of time, and some users might never provide feedback. By employing our proposed algorithm MUD-EXP3 and AMUD-EXP3, the service provider can continuously collect feedback for previous decisions and adaptively learn the latest content popularity to maximize the cumulative total user experience. 

\subsection{Long-Term Traffic Management}
Nowadays the number of vehicles has increased dramatically. For densely populated cities, failing to handle the traffic flow would cause problems such as traffic jamming and road accidents. Hence traffic management is crucial to road safety and traffic flow efficiency. As an important part of a smart city, intelligent traffic management is empowered by machine-learning techniques in terms of predicting optimum routes, reducing traffic congestion, etc. Specifically, the applications of the Internet of Things (IoT) contribute to a great amount of traffic data which can be fed into the trained machine learning models to make predictions regarding the levels of traffic congestion in a particular area of a city. Based on those predictions, the downstream policy model gives suggestions to the traffic management authorities to deploy management policy (eg. traffic lights management). For a new system in an area, there are initially several candidate policy models with different architectures or parameters and unknown in-place performance. The objective is to jointly decrease the average waiting time on the road and increase the satisfaction of residents.

The traffic system we describe above faces an extremely sophisticated situation. With the presence of model prediction error, individual behavior randomness, unpredictable environmental factors, etc, it is unlikely to have the best policy model for all the time. Thus, we can regard it as an adversarial bandit problem and adopt our proposed algorithms to make an online optimization. The policy models can be considered as arms, and the traffic management authority acts as the player to carry out the policy decision and collect residents' satisfactory feedback. Normally, the satisfaction feedback from residents suffers delays to different extents since it is unrealistic to force residents to react instantly.

\section{Numerical Evaluation}\label{sec6}
In this section, we evaluate the performance of the proposed algorithms by comparing them with other state-of-the-art baselines. All of our simulations are programmed by Python 3.11 and run on a Windows 10 platform with CPU of Intel Core i7-11700 and 32GB RAM. The source code can be found at \url{https://github.com/Chubbro/MUD-EXP3}.

\begin{figure*}[!t]
	\centering
    \subfigure{\includegraphics[width=0.328\linewidth]{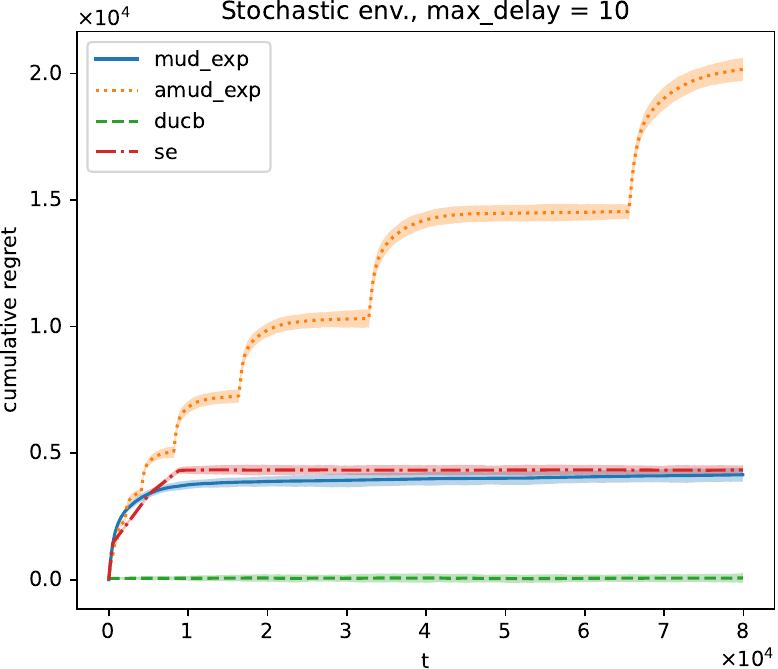}}
    \subfigure{\includegraphics[width=0.323\linewidth]{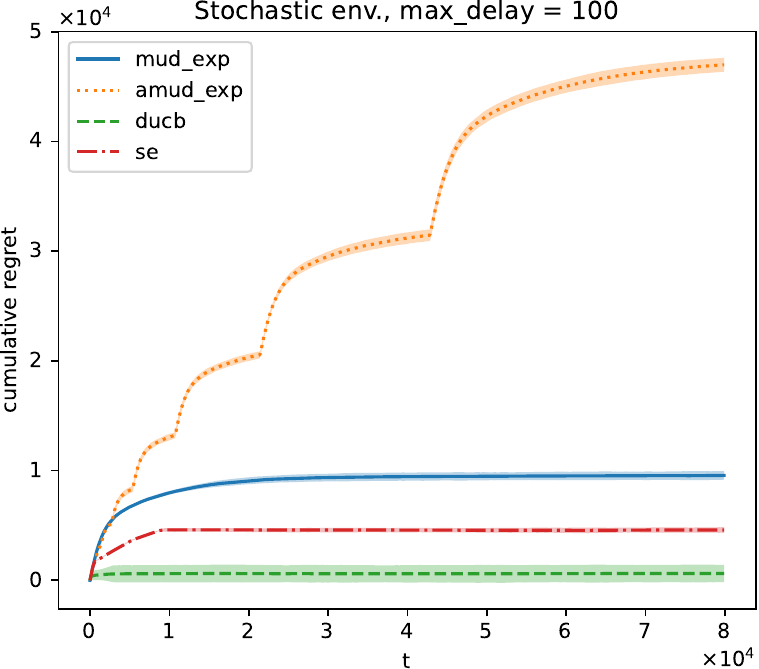}}
    \subfigure{\includegraphics[width=0.332\linewidth]{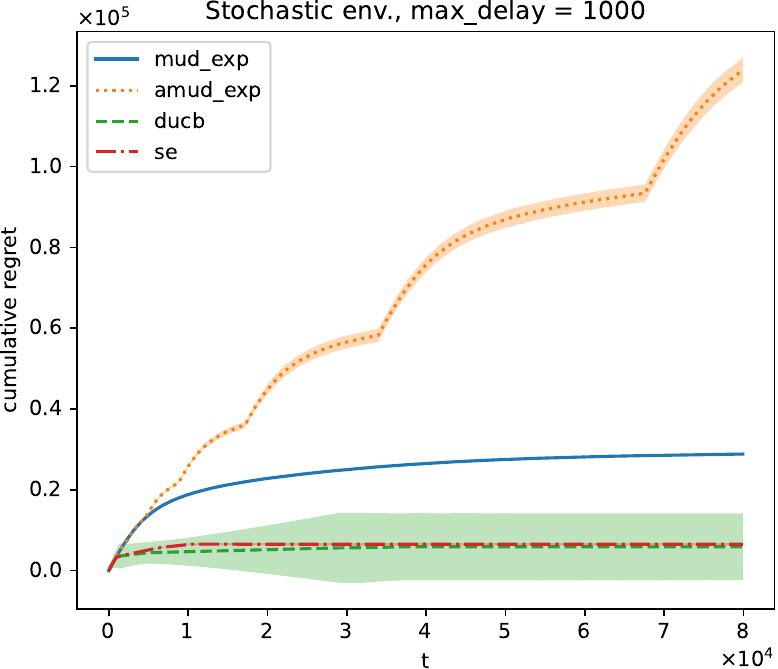}}
	\caption{The cumulative regret under a stochastic bandit environment.}
	\label{fig_regret}
\end{figure*}

\begin{figure*}[!t]
	\centering
    \subfigure{\includegraphics[width=0.328\linewidth]{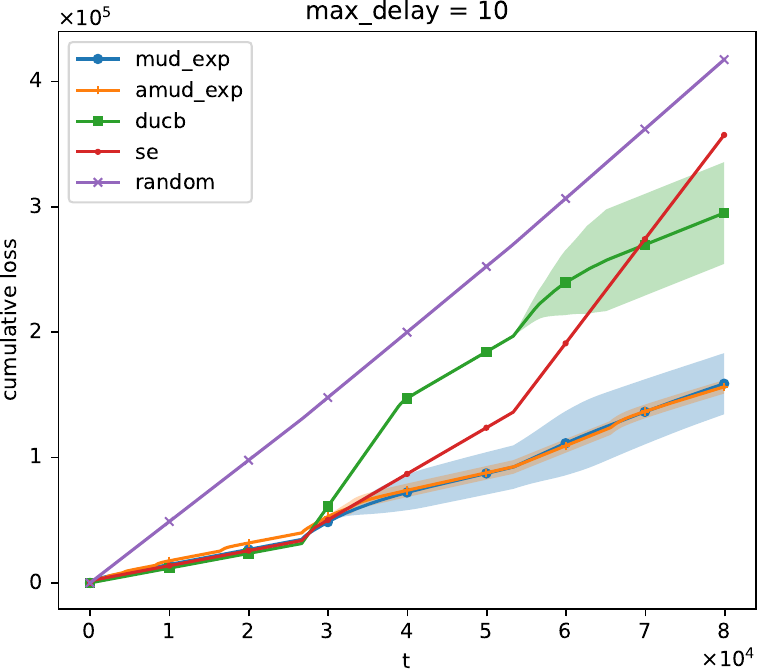}}
    \subfigure{\includegraphics[width=0.328\linewidth]{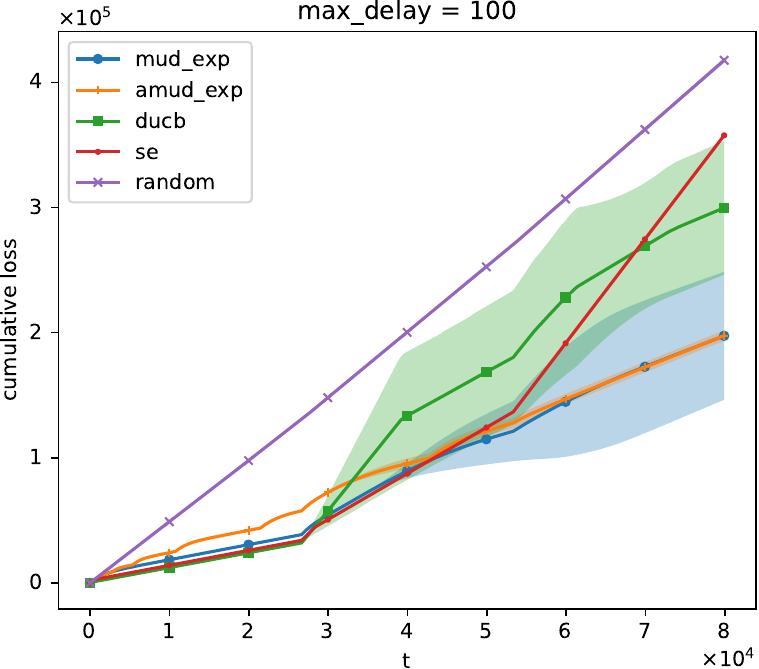}}
    \subfigure{\includegraphics[width=0.328\linewidth]{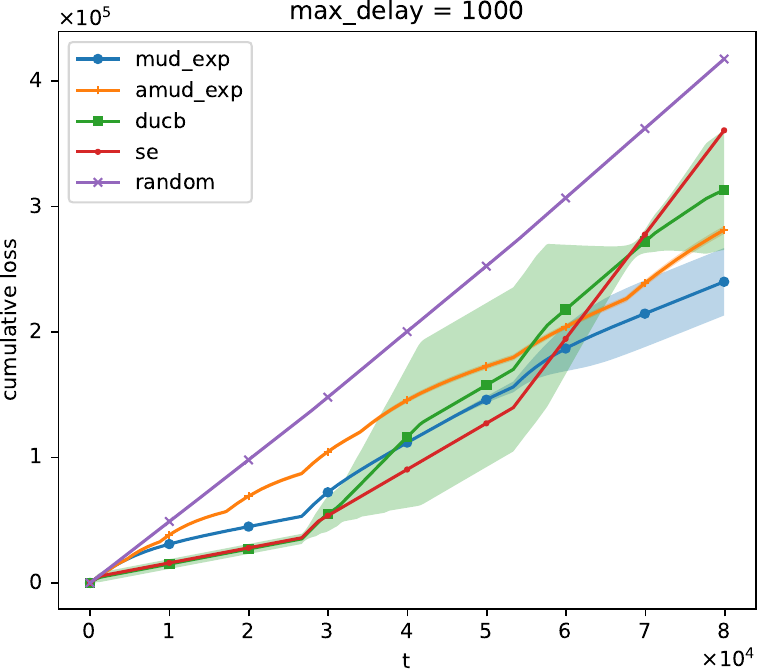}}
	\caption{The cumulative loss with varying maximum delays.}
	\label{fig_loss}
\end{figure*}

\subsection{Adversarial Bandits Setup}
This paper's basic environment of adversarial bandits consists of three different loss distributions for each arm, simulating the distribution transition pervasive in the real world. The number of changes in distribution for one arm is denoted as \textit{tran\_num}, and the interval between two changes is uniformly set as $1/\textit{tran\_num}$ of $T$. The distributions are all generated by truncated Gaussian of range $[0,1]$ with random means in the range $[0,1]$ and random standard deviations in the range $[0.1,0.2]$. We assign $N=10$, $M=10$ and $\textit{tran\_num}=3$ by default and vary the maximum delay $d_{max}=[10,100,1000]$.

The evaluation starts with the results of a stochastic bandit for reference, while the focus is on the subsequent adversarial bandits. We first evaluate the cumulative regrets of candidate algorithms with respect to round $t$ in Figure \ref{fig_regret} under a stochastic bandit environment where the distribution of loss for each arm is fixed all along, then change the environment into our default adversarial one to assess the performance in Figure \ref{fig_loss}. Note that due to the limitation of the oracle setting, i.e. choosing a fixed arm throughout the procedure tends to bring about inferior performance in adversarial bandits, the results are primarily presented with a loss indicator rather than a regret one. Next, in order to investigate the influence of evolving distributions of loss on performance, we compare the total loss among three superior algorithms under adversarial bandit environments with diverse \textit{tran\_num} values in Figure \ref{fig_bar1}-\ref{fig_bar3}, and demonstrate the specific tendency of loss when further increasing \textit{tran\_num} in Figure \ref{fig_num_d10}-\ref{fig_num_d100}. In addition, we also qualitatively show the impacts of arm number $N$ and user number $M$ on the performance of approaches in Figure \ref{fig_n_m}. Note that the shade regions in those figures indicate the variation range with respect to two standard deviations.

\subsection{Benchmarks}
In the experiment, to demonstrate the efficacy, our proposed algorithm is compared with the following benchmarks:
\begin{itemize}
        \item \textit{oracle}: the Oracle algorithm has the prior knowledge of real expected loss for each arm. In every round, the oracle selects the fixed arm with the minimal total expected loss.
        \item \textit{mud}: The first proposed algorithm MUD-EXP3 shown as Algorithm \ref{MUD-EPX3} in this paper.
        \item \textit{amud}: The second proposed algorithm AMUD-EXP3 shown as Algorithm \ref{AMUD-EPX3} in this paper.
        \item \textit{ducb} \cite{vernade2017stochastic}: A state-of-the-art algorithm that adjusts the classic UCB framework to a delayed feedback setting. Here we modified it to adapt to the problem of multiple users.
        \item \textit{se} \cite{lancewicki2021stochastic}: A state-of-the-art algorithm under delayed UCB framework with successive elimination of arms. Here we modified it to adapt to the problem of multiple users.
        \item \textit{random}: A lazy algorithm that selects an arm randomly at each round.
    \end{itemize}

\subsection{Result Analysis}

\begin{figure}[!t]
	\centering
    \includegraphics[width=\linewidth]{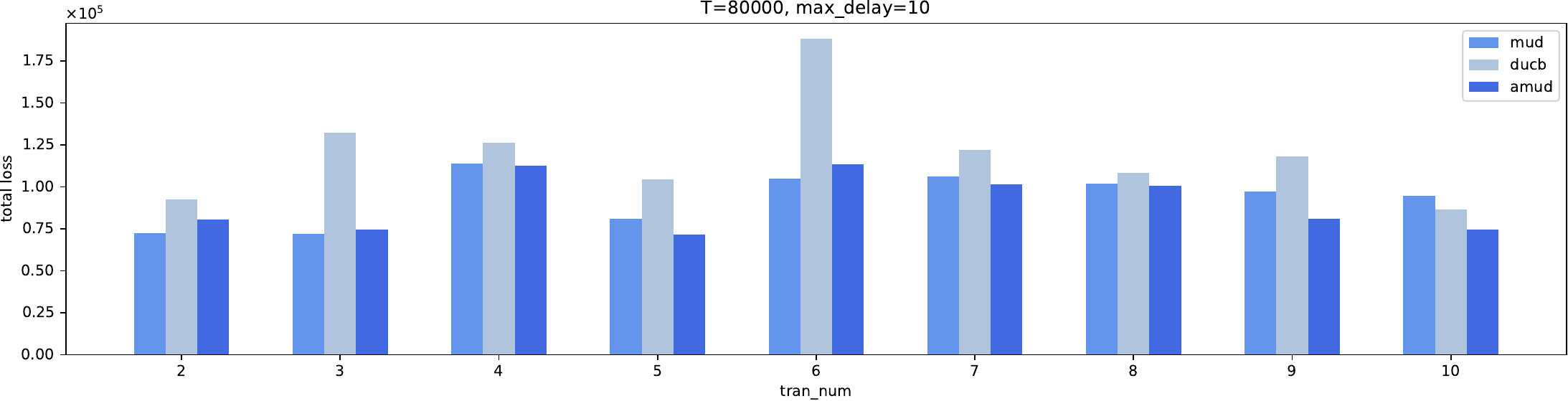}
	\caption{\centering{The total loss in adversarial bandits with varying \textit{tran\_num} and $d_{max}=10$.}}
	\label{fig_bar1}
\end{figure}

\begin{figure}[!t]
	\centering
    \includegraphics[width=\linewidth]{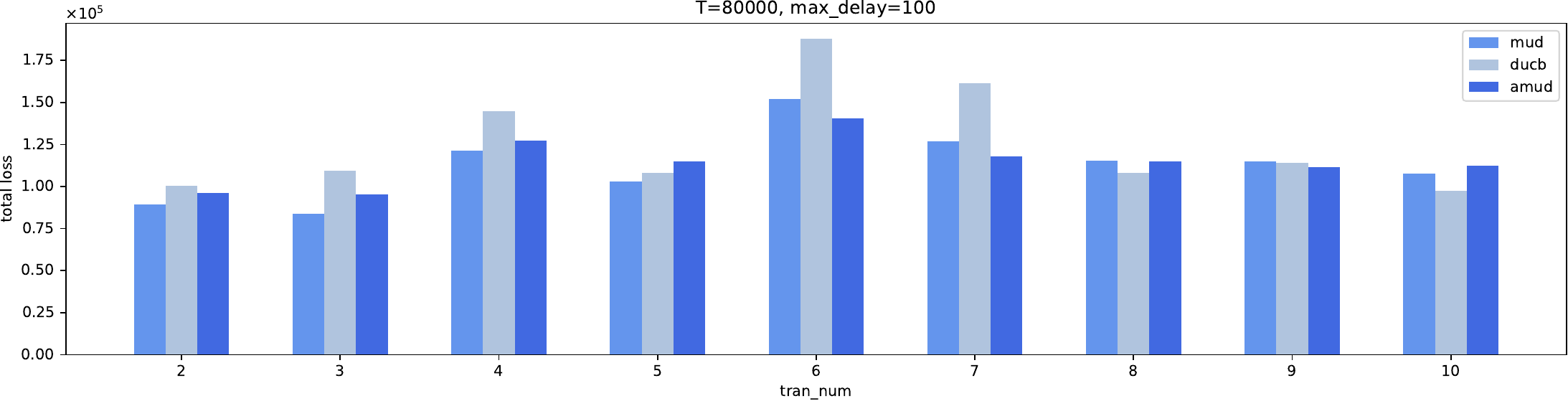}
	\caption{\centering{The total loss in adversarial bandits with varying \textit{tran\_num} and $d_{max}=100$.}}
	\label{fig_bar2}
\end{figure}

\begin{figure}[!t]
	\centering
    \includegraphics[width=\linewidth]{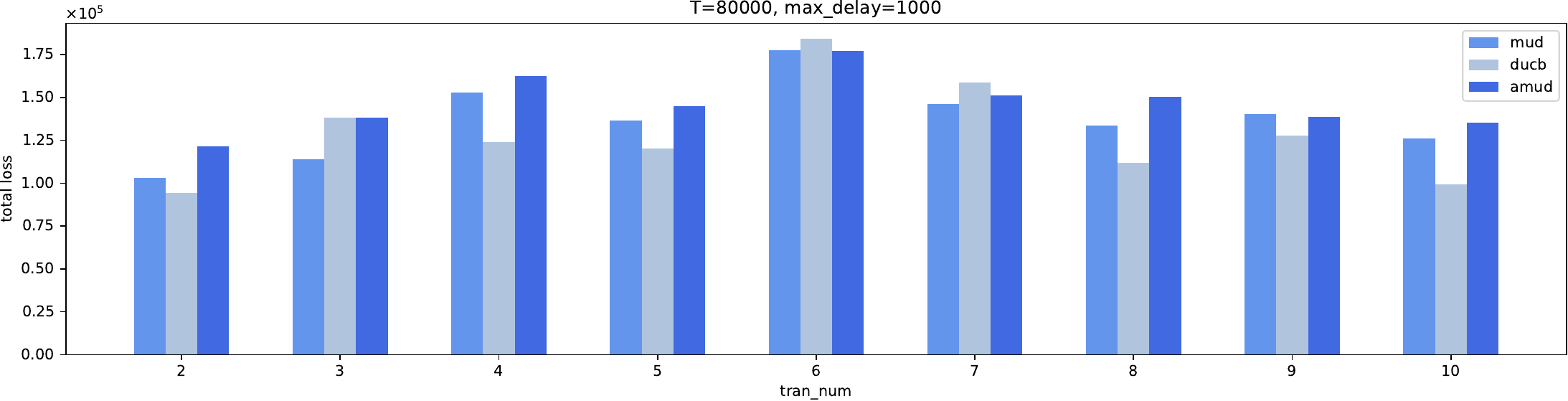}
	\caption{\centering{The total loss in adversarial bandits with varying \textit{tran\_num} and $d_{max}=1000$.}}
	\label{fig_bar3}
\end{figure}

\begin{figure}[!t]
	\centering
    \includegraphics[width=\linewidth]{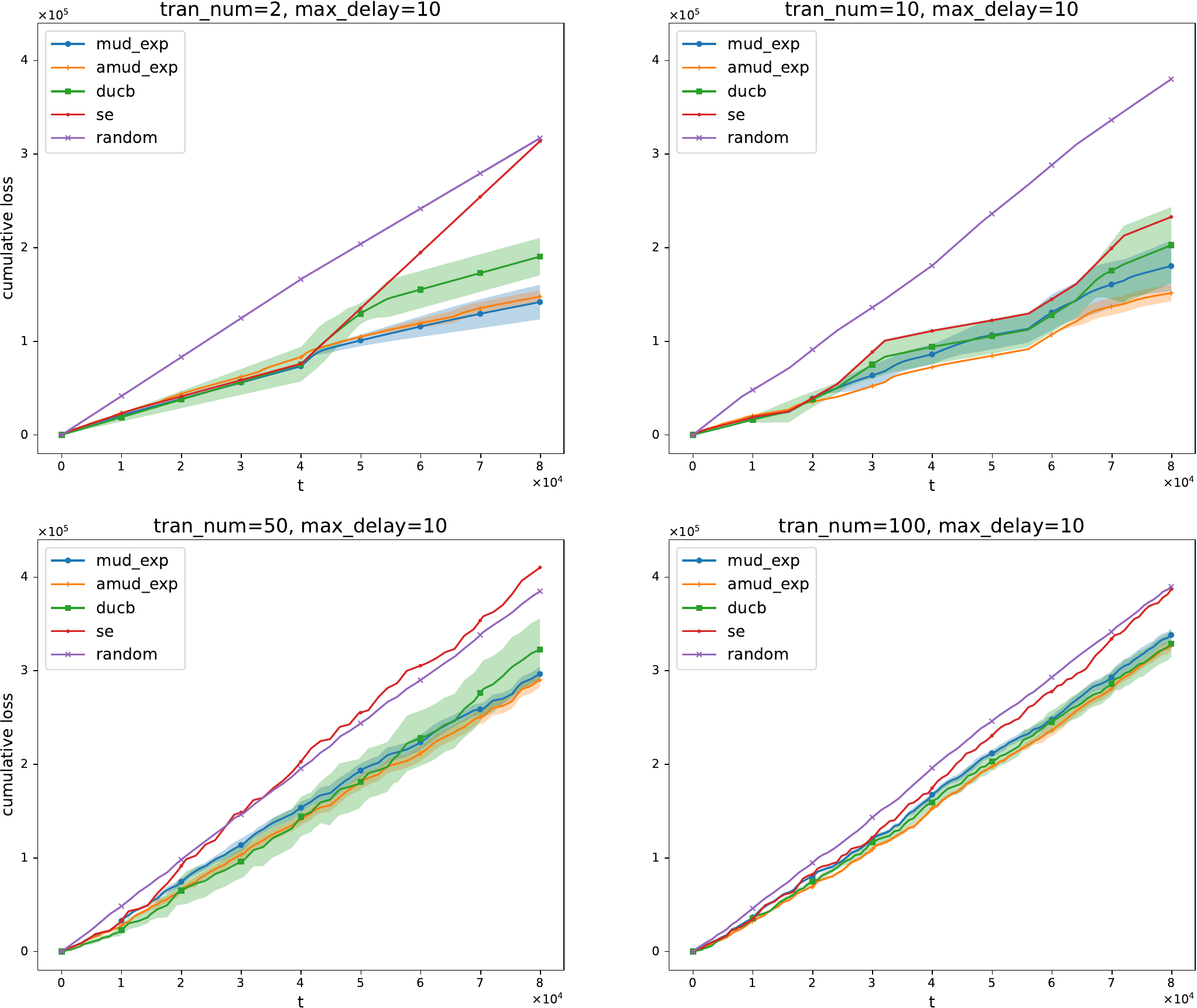}
	\caption{\centering{The cumulative loss in adversarial bandits with $d_{max}=10$ and varying \textit{tran\_num}.}}
	\label{fig_num_d10}
\end{figure}

\begin{figure}[!t]
	\centering
    \includegraphics[width=\linewidth]{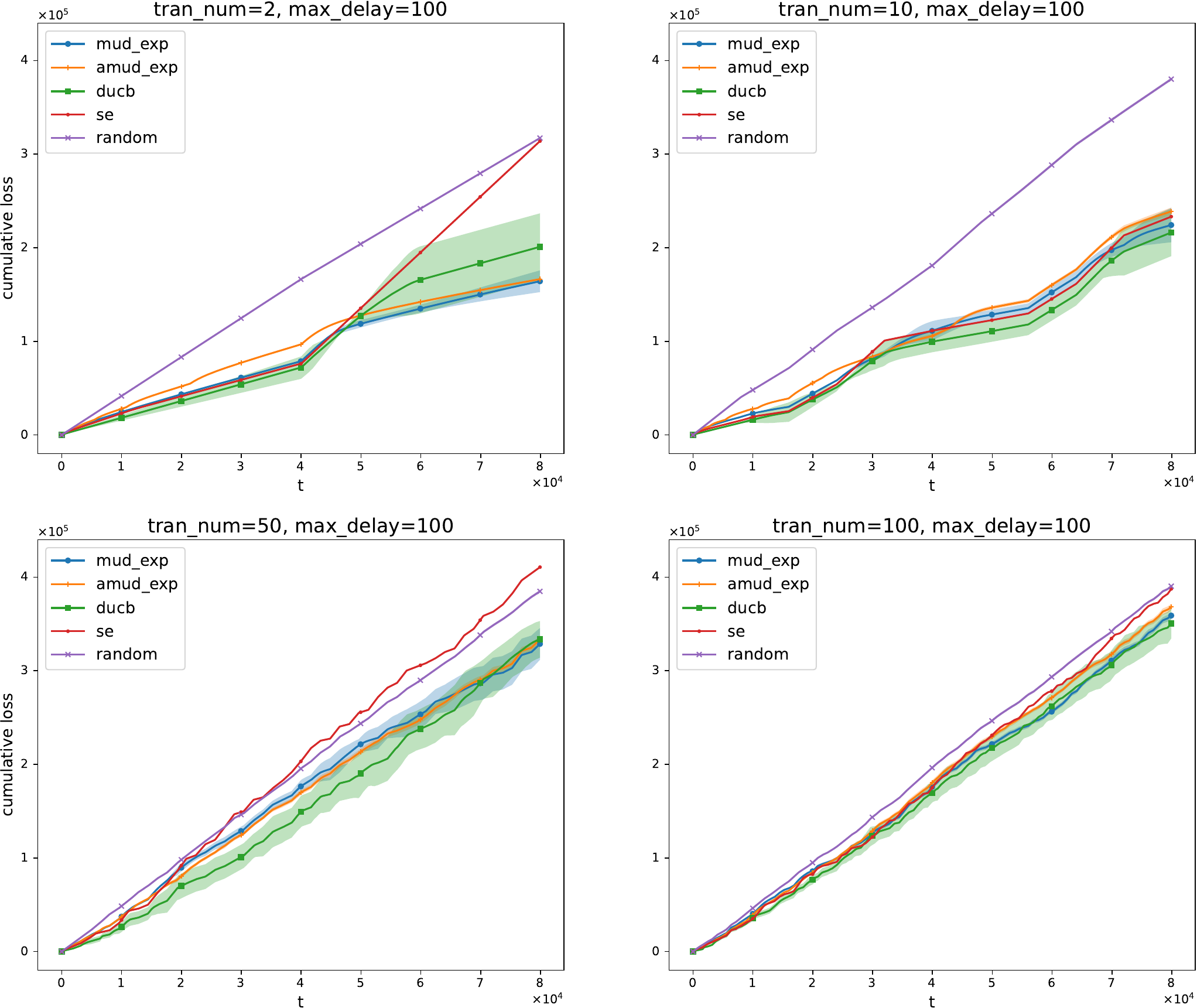}
	\caption{\centering{The cumulative loss in adversarial bandits with $d_{max}=100$ and varying \textit{tran\_num}.}}
	\label{fig_num_d100}
\end{figure}

\begin{figure}[!t]
	\centering
    \includegraphics[width=\linewidth]{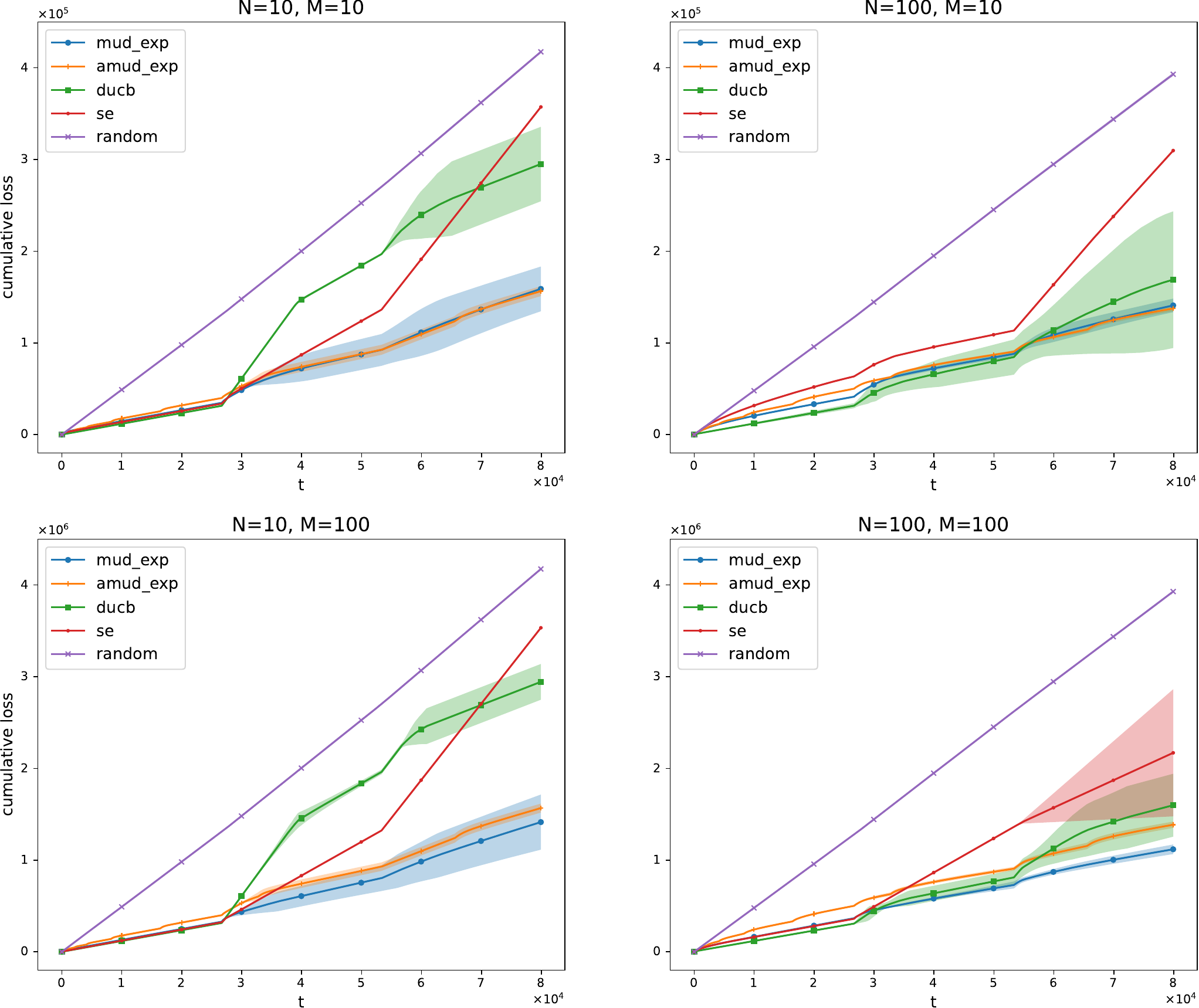}
	\caption{\centering{The cumulative loss with varying $N$ and $M$.}}
	\label{fig_n_m}
\end{figure}

With a stochastic bandit environment, in Figure \ref{fig_regret}, all the methods indicate sublinear regrets. The UCB-based method \textit{ducb} shows obviously better regret than others while \textit{amud} fails spectacularly in this stationary environment with stepped cumulative regret resulting from the evolving learning rates. For low-variance delay (delays in $[1,10]$), \textit{mud} is comparable with \textit{se} but getting surpassed by \textit{se} as increasing the variance of delay (delays in $[1,100]$ and $[1,1000]$). Apart from the superior average regret, however, \textit{ducb} suffer increasingly large variation in regret with exponential delay upper bound. In fact, it will be seen that the stability of \textit{ducb} tends to meet a threat in both stochastic and adversarial bandit environments.

In this paper, we mainly focus on adversarial bandit environments as follows. Figure \ref{fig_loss} depicts the cumulative loss with rounds under the default adversarial environment where $N=10$, $M=10$ and $\textit{tran\_num}=3$. The proposed methods \textit{mud} and \textit{amud} have the overwhelming advantage of \textit{ducb} and \textit{se}. However, this advantage is gradually eroded as the maximum delay enlarges since the proposed methods show degradation that is more sensitive to the variance of delay. Although the two proposed methods exhibit similar expected loss, \textit{amud} has a more stable performance with less variation than \textit{mud} in the results. Between UCB-based methods, although \textit{ducb} averagely outperforms \textit{se} at the end of rounds in these adversarial bandit environments including the non-default situations of the remaining figures, \textit{se} steadily possesses lower loss than \textit{ducb} right between the second and the third transition points of the distributions (note that the first transition point is the beginning as $t=0$) as well as all-time minor variation. The time period between the first and the second transition points is essentially a stochastic bandit environment, in which the performance of each method can refer to Figure \ref{fig_regret}. From the second transition point on (concretely from round $26667$ in the case of \ref{fig_loss}), the environment begins to evolve so that the proposed adversarial bandit algorithms \textit{mud} and \textit{amud} accommodate the policy to the evolution more successfully than others.

In order to investigate the influence of evolving distributions of loss on performance, we increase \textit{tran\_num} from $2$ to $10$, to compare the total loss among the three more powerful algorithms (\textit{mud}, \textit{amud} and \textit{ducb}) in Figure \ref{fig_bar1}-\ref{fig_bar3}. From Figure \ref{fig_bar1}, we can see both the proposed methods show less fluctuation in loss than \textit{ducb} when the environment is evolving, and both outperform \textit{ducb} almost all the time. However, the performance gap is reduced for increasing $d_{max}$ in Figure \ref{fig_bar2} and even gets reversed when $d_{max}$ increases to $1000$ in \ref{fig_bar3}. This is intuitive since a higher delay leads to a larger amount of less-informed rounds. Apparently, the delay of $1000$ rounds is too extreme in the practical scenarios, to be removed for further discussion. 

Moreover, we extend the evolving factor \textit{tran\_num} to $50$ and $100$, and depict the detailed tendency of loss in Figure \ref{fig_num_d10} and Figure \ref{fig_num_d100}. In both figures, there is a superiority for the proposed methods over \textit{se} all the time, but \textit{ducb} approaches when the environment evolves more frequently, even slightly surpasses the proposed ones in the Figure \ref{fig_num_d100} when \textit{tran\_num} is large. Despite the terrible stability of \textit{ducb} with the largest shade region among all the methods, it possesses impressive adaptability in contrast to \textit{se} in Figure \ref{fig_num_d10} and Figure \ref{fig_num_d100} when \textit{ducb} constantly overwhelms \textit{se} on average as the environment starts to evolve. It is interesting that in some circumstances such as $\textit{tran\_num}=50$, \textit{se} even contributes higher loss than \textit{random} method, which suggests very limited adaptability of \textit{se}. The proposed methods \textit{mud} and \textit{amud} show good adaptability as a whole, while \textit{amud} is considered to be a more stable one with less variation than \textit{mud}. Comparing Figure \ref{fig_num_d10} and Figure \ref{fig_num_d100}, it is illustrated that the larger \textit{tran\_num} (i.e. more frequently changed environment) and $d_{max}$ (i.e. less informed decision), the worse those algorithms perform, even approximating the random selection policy.

In addition, considering the increase of arm number $N$ and user number $M$ under the delayed adversarial environment would add complexity, we vary $N$ and $M$ in the default environment with $d_{max}=10$ to briefly inspect the impacts in Figure \ref{fig_n_m}. In general, from the results, the change in $M$ brings limited influence on performance, while the increase of $N$ can obviously improve the performance of \textit{ducb} and \textit{se} to approximate the proposed methods. When $M$ increases from $10$ to $100$, the primary distinction is the average performance of \textit{mud} happens to relatively improve and surpasses \textit{amud}. However, \textit{amud} still has a clear advantage in stability. Although more users could increase the statistical accuracy, this is a fundamental enhancement enjoyed by all the methods, thus no dramatic changes happen to the relative performance. On the other hand, as the number of arms $N$ increases from $10$ to $100$, the methods remain almost unvaried except for \textit{ducb} which greatly improves during the intermediate phase of the entire procedure corresponding to the second set of feedback distributions. This is probably due to more suboptimal arms somewhat filling the upper confidence bound space. From Figure \ref{fig_n_m}, note that the proposed methods \textit{mud} and \textit{amud} display invariant ability with respect to the number of arms and the number of users. 

In summary, the proposed algorithms show distinct advantages over the baselines when there are relatively small upper bounds of delay (less than 100 rounds) or properly low evolving frequency of the environment, otherwise, the state-of-the-arm UCB methods tend to encroach on the superiority, which guides the application of the proposed algorithms in the real world.

\section{Conclusion}\label{sec7}
In conclusion, this study addresses the adversarial multi-armed bandit problem with delayed feedback, where feedback results are obtained from multiple users without any internal distribution restrictions. A modified EXP3 algorithm called MUD-EXP3 is proposed to solve this problem with the oblivious loss and oblivious delay adversary setting. MUD-EXP3 employs the importance-weighted estimator for the received feedback from different users, and selects an arm at each round stochastically according to the amount of the cumulative received loss. Under the assumptions of a known terminal round index, the number of users, the number of arms, and an upper bound on the delay, the study proves a regret bound of $\mathcal{O}(\sqrt{TM^2\ln{N}(N\mathrm{e}+4d_{max}}))$, demonstrating the algorithm's effectiveness. Furthermore, considering the fact that $T$ is normally inaccessible beforehand, we propose an adaptive algorithm named AMUD-EXP3 based on a doubling trick method to accommodate MUD-EXP3. Additionally, experiments are conducted showing the advantage of our algorithms in specific adversarial bandit environments. Overall, this research provides valuable insights for addressing complex real-life scenarios with multi-user delayed feedback.

\section*{Acknowledgment}
This work was supported in part by the National Key R\&D Program of China under Grant No. 2022YFE0201400, the National Natural Science Foundation of China (NSFC) under Grant No. 62202055, the Start-up Fund from Beijing Normal University under Grant No. 310432104, the Start-up Fund from BNU-HKBU United International College under Grant No. UICR0700018-22, and the Project of Young Innovative Talents of Guangdong Education Department under Grant No. 2022KQNCX102.

\bibliographystyle{IEEEtran}
\bibliography{references}

\begin{thebibliography}{10}
\providecommand{\url}[1]{#1}
\csname url@samestyle\endcsname
\providecommand{\newblock}{\relax}
\providecommand{\bibinfo}[2]{#2}
\providecommand{\BIBentrySTDinterwordspacing}{\spaceskip=0pt\relax}
\providecommand{\BIBentryALTinterwordstretchfactor}{4}
\providecommand{\BIBentryALTinterwordspacing}{\spaceskip=\fontdimen2\font plus
\BIBentryALTinterwordstretchfactor\fontdimen3\font minus
  \fontdimen4\font\relax}
\providecommand{\BIBforeignlanguage}[2]{{%
\expandafter\ifx\csname l@#1\endcsname\relax
\typeout{** WARNING: IEEEtran.bst: No hyphenation pattern has been}%
\typeout{** loaded for the language `#1'. Using the pattern for}%
\typeout{** the default language instead.}%
\else
\language=\csname l@#1\endcsname
\fi
#2}}
\providecommand{\BIBdecl}{\relax}
\BIBdecl

\bibitem{li2023Modified}
Y.~Li and J.~Guo, ``A modified exp3 in adversarial bandits with multi-user
  delayed feedback,'' in \emph{Computing and Combinatorics: 29th International
  Conference, COCOON 2023, Hawaii, USA, December 15--17, 2023,
  Proceedings}.\hskip 1em plus 0.5em minus 0.4em\relax Springer Nature, 2023.

\bibitem{zhou2020online}
R.~Zhou, X.~Zhang, S.~Qin, J.~C. Lui, Z.~Zhou, H.~Huang, and Z.~Li, ``Online
  task offloading for 5g small cell networks,'' \emph{IEEE Transactions on
  Mobile Computing}, vol.~21, no.~6, pp. 2103--2115, 2020.

\bibitem{han2021cache}
Y.~Han, L.~Ai, R.~Wang, J.~Wu, D.~Liu, and H.~Ren, ``Cache placement
  optimization in mobile edge computing networks with unaware environment—an
  extended multi-armed bandit approach,'' \emph{IEEE Transactions on Wireless
  Communications}, vol.~20, no.~12, pp. 8119--8133, 2021.

\bibitem{chen2022learning}
Y.~Chen, S.~Zhang, Y.~Jin, Z.~Qian, M.~Xiao, N.~Chen, and Z.~Ma, ``Learning for
  crowdsourcing: Online dispatch for video analytics with guarantee,'' in
  \emph{IEEE INFOCOM 2022-IEEE Conference on Computer Communications}.\hskip
  1em plus 0.5em minus 0.4em\relax IEEE, 2022, pp. 1908--1917.

\bibitem{liu2018contextual}
W.~Liu, S.~Li, and S.~Zhang, ``Contextual dependent click bandit algorithm for
  web recommendation,'' in \emph{Computing and Combinatorics: 24th
  International Conference, COCOON 2018, Qing Dao, China, July 2-4, 2018,
  Proceedings 24}.\hskip 1em plus 0.5em minus 0.4em\relax Springer, 2018, pp.
  39--50.

\bibitem{avadhanula2021stochastic}
V.~Avadhanula, R.~Colini~Baldeschi, S.~Leonardi, K.~A. Sankararaman, and
  O.~Schrijvers, ``Stochastic bandits for multi-platform budget optimization in
  online advertising,'' in \emph{Proceedings of the Web Conference 2021}, 2021,
  pp. 2805--2817.

\bibitem{wan2022bounded}
Z.~Wan, X.~Sun, and J.~Zhang, ``Bounded memory adversarial bandits with
  composite anonymous delayed feedback,'' in \emph{Proceedings of the
  Thirty-First International Joint Conference on Artificial Intelligence,
  {IJCAI} 2022, Vienna, Austria, 23-29 July 2022}, 2022, pp. 3501--3507.

\bibitem{masoudian2022best}
S.~Masoudian, J.~Zimmert, and Y.~Seldin, ``A best-of-both-worlds algorithm for
  bandits with delayed feedback,'' in \emph{Advances in Neural Information
  Processing Systems}, 2022.

\bibitem{bistritz2019online}
I.~Bistritz, Z.~Zhou, X.~Chen, N.~Bambos, and J.~Blanchet, ``Online exp3
  learning in adversarial bandits with delayed feedback,'' \emph{Advances in
  neural information processing systems}, vol.~32, 2019.

\bibitem{cesa2019delay}
N.~Cesa-Bianchi, C.~Gentile, Y.~Mansour \emph{et~al.}, ``Delay and cooperation
  in nonstochastic bandits,'' \emph{Journal of Machine Learning Research},
  vol.~20, no.~17, pp. 1--38, 2019.

\bibitem{thune2019nonstochastic}
T.~S. Thune, N.~Cesa-Bianchi, and Y.~Seldin, ``Nonstochastic multiarmed bandits
  with unrestricted delays,'' \emph{Advances in Neural Information Processing
  Systems}, vol.~32, 2019.

\bibitem{yuan2014generative}
Q.~Yuan, G.~Cong, and C.-Y. Lin, ``Com: a generative model for group
  recommendation,'' in \emph{Proceedings of the 20th ACM SIGKDD international
  conference on Knowledge discovery and data mining}, 2014, pp. 163--172.

\bibitem{li2023online}
Y.~Li, J.~Guo, Y.~Li, T.~Wang, and W.~Jia, ``An online resource scheduling for
  maximizing quality-of-experience in meta computing,'' \emph{arXiv preprint
  arXiv:2304.13463}, 2023.

\bibitem{auer1995gambling}
P.~Auer, N.~Cesa-Bianchi, Y.~Freund, and R.~E. Schapire, ``Gambling in a rigged
  casino: The adversarial multi-armed bandit problem,'' in \emph{Proceedings of
  IEEE 36th annual foundations of computer science}.\hskip 1em plus 0.5em minus
  0.4em\relax IEEE, 1995, pp. 322--331.

\bibitem{cesa1997use}
N.~Cesa-Bianchi, Y.~Freund, D.~Haussler, D.~P. Helmbold, R.~E. Schapire, and
  M.~K. Warmuth, ``How to use expert advice,'' \emph{Journal of the ACM
  (JACM)}, vol.~44, no.~3, pp. 427--485, 1997.

\bibitem{auer2002finite}
P.~Auer, N.~Cesa-Bianchi, and P.~Fischer, ``Finite-time analysis of the
  multiarmed bandit problem,'' \emph{Machine learning}, vol.~47, pp. 235--256,
  2002.

\bibitem{kaufmann2013information}
E.~Kaufmann and S.~Kalyanakrishnan, ``Information complexity in bandit subset
  selection,'' in \emph{Conference on Learning Theory}.\hskip 1em plus 0.5em
  minus 0.4em\relax PMLR, 2013, pp. 228--251.

\bibitem{robbins1952some}
H.~Robbins, ``Some aspects of the sequential design of experiments,''
  \emph{Bulletin of the American Mathematical Society}, vol.~58, no.~5, pp.
  527--535, 1952.

\bibitem{agrawal2012analysis}
S.~Agrawal and N.~Goyal, ``Analysis of thompson sampling for the multi-armed
  bandit problem,'' in \emph{Conference on learning theory}.\hskip 1em plus
  0.5em minus 0.4em\relax JMLR Workshop and Conference Proceedings, 2012, pp.
  39--1.

\bibitem{audibert2009minimax}
J.-Y. Audibert and S.~Bubeck, ``Minimax policies for adversarial and stochastic
  bandits.'' in \emph{COLT}, vol.~7, 2009, pp. 1--122.

\bibitem{maillard2011finite}
O.-A. Maillard, R.~Munos, and G.~Stoltz, ``A finite-time analysis of
  multi-armed bandits problems with kullback-leibler divergences,'' in
  \emph{Proceedings of the 24th annual Conference On Learning Theory}.\hskip
  1em plus 0.5em minus 0.4em\relax JMLR Workshop and Conference Proceedings,
  2011, pp. 497--514.

\bibitem{kaufmann2012bayesian}
E.~Kaufmann, O.~Capp{\'e}, and A.~Garivier, ``On bayesian upper confidence
  bounds for bandit problems,'' in \emph{Artificial intelligence and
  statistics}.\hskip 1em plus 0.5em minus 0.4em\relax PMLR, 2012, pp. 592--600.

\bibitem{auer2002nonstochastic}
P.~Auer, N.~Cesa-Bianchi, Y.~Freund, and R.~E. Schapire, ``The nonstochastic
  multiarmed bandit problem,'' \emph{SIAM journal on computing}, vol.~32,
  no.~1, pp. 48--77, 2002.

\bibitem{zoghi2014relative}
M.~Zoghi, S.~Whiteson, R.~Munos, and M.~Rijke, ``Relative upper confidence
  bound for the k-armed dueling bandit problem,'' in \emph{International
  conference on machine learning}.\hskip 1em plus 0.5em minus 0.4em\relax PMLR,
  2014, pp. 10--18.

\bibitem{li2010contextual}
L.~Li, W.~Chu, J.~Langford, and R.~E. Schapire, ``A contextual-bandit approach
  to personalized news article recommendation,'' in \emph{Proceedings of the
  19th international conference on World wide web}, 2010, pp. 661--670.

\bibitem{allesiardo2014neural}
R.~Allesiardo, R.~F{\'e}raud, and D.~Bouneffouf, ``A neural networks committee
  for the contextual bandit problem,'' in \emph{Neural Information Processing:
  21st International Conference, ICONIP 2014, Kuching, Malaysia, November 3-6,
  2014. Proceedings, Part I 21}.\hskip 1em plus 0.5em minus 0.4em\relax
  Springer, 2014, pp. 374--381.

\bibitem{agrawal2013thompson}
S.~Agrawal and N.~Goyal, ``Thompson sampling for contextual bandits with linear
  payoffs,'' in \emph{International conference on machine learning}.\hskip 1em
  plus 0.5em minus 0.4em\relax PMLR, 2013, pp. 127--135.

\bibitem{valko2013finite}
M.~Valko, N.~Korda, R.~Munos, I.~Flaounas, and N.~Cristianini, ``Finite-time
  analysis of kernelised contextual bandits,'' in \emph{Uncertainty in
  Artificial Intelligence}, 2013.

\bibitem{dudik2011efficient}
M.~Dudik, D.~Hsu, S.~Kale, N.~Karampatziakis, J.~Langford, L.~Reyzin, and
  T.~Zhang, ``Efficient optimal learning for contextual bandits,'' in
  \emph{Proceedings of the 27th Conference on Uncertainty in Artificial
  Intelligence, UAI 2011}, 2011, p. 169.

\bibitem{joulani2013online}
P.~Joulani, A.~Gyorgy, and C.~Szepesv{\'a}ri, ``Online learning under delayed
  feedback,'' in \emph{International Conference on Machine Learning}.\hskip 1em
  plus 0.5em minus 0.4em\relax PMLR, 2013, pp. 1453--1461.

\bibitem{joulani2016delay}
------, ``Delay-tolerant online convex optimization: Unified analysis and
  adaptive-gradient algorithms,'' in \emph{Proceedings of the AAAI Conference
  on Artificial Intelligence}, vol.~30, 2016.

\bibitem{vernade2017stochastic}
C.~Vernade, O.~Capp{\'e}, and V.~Perchet, ``Stochastic bandit models for
  delayed conversions,'' in \emph{Conference on Uncertainty in Artificial
  Intelligence}, 2017.

\bibitem{chapelle2014modeling}
O.~Chapelle, ``Modeling delayed feedback in display advertising,'' in
  \emph{Proceedings of the 20th ACM SIGKDD international conference on
  Knowledge discovery and data mining}, 2014, pp. 1097--1105.

\bibitem{gael2020stochastic}
M.~A. Gael, C.~Vernade, A.~Carpentier, and M.~Valko, ``Stochastic bandits with
  arm-dependent delays,'' in \emph{International Conference on Machine
  Learning}.\hskip 1em plus 0.5em minus 0.4em\relax PMLR, 2020, pp. 3348--3356.

\bibitem{zimmert2020optimal}
J.~Zimmert and Y.~Seldin, ``An optimal algorithm for adversarial bandits with
  arbitrary delays,'' in \emph{International Conference on Artificial
  Intelligence and Statistics}.\hskip 1em plus 0.5em minus 0.4em\relax PMLR,
  2020, pp. 3285--3294.

\bibitem{bubeck2012best}
S.~Bubeck and A.~Slivkins, ``The best of both worlds: Stochastic and
  adversarial bandits,'' in \emph{Conference on Learning Theory}.\hskip 1em
  plus 0.5em minus 0.4em\relax JMLR Workshop and Conference Proceedings, 2012,
  pp. 42--1.

\bibitem{wei2018more}
C.-Y. Wei and H.~Luo, ``More adaptive algorithms for adversarial bandits,'' in
  \emph{Conference On Learning Theory}.\hskip 1em plus 0.5em minus 0.4em\relax
  PMLR, 2018, pp. 1263--1291.

\bibitem{cesa2018nonstochastic}
N.~Cesa-Bianchi, C.~Gentile, and Y.~Mansour, ``Nonstochastic bandits with
  composite anonymous feedback,'' in \emph{Conference On Learning
  Theory}.\hskip 1em plus 0.5em minus 0.4em\relax PMLR, 2018, pp. 750--773.

\bibitem{pike2018bandits}
C.~Pike-Burke, S.~Agrawal, C.~Szepesvari, and S.~Grunewalder, ``Bandits with
  delayed, aggregated anonymous feedback,'' in \emph{International Conference
  on Machine Learning}.\hskip 1em plus 0.5em minus 0.4em\relax PMLR, 2018, pp.
  4105--4113.

\bibitem{wang2021adaptive}
S.~Wang, H.~Wang, and L.~Huang, ``Adaptive algorithms for multi-armed bandit
  with composite and anonymous feedback,'' in \emph{Proceedings of the AAAI
  Conference on Artificial Intelligence}, vol.~35, no.~11, 2021, pp.
  10\,210--10\,217.

\bibitem{zhang2022optimal}
X.~Zhang, S.~Chen, Y.~Zhang, Y.~Im, M.~Gorlatova, S.~Ha, and C.~Joe-Wong,
  ``Optimal network protocol selection for competing flows via online
  learning,'' \emph{IEEE Transactions on Mobile Computing}, 2022.

\bibitem{ma2023reliability}
H.~Ma, R.~Li, X.~Zhang, Z.~Zhou, and X.~Chen, ``Reliability-aware online
  scheduling for dnn inference tasks in mobile edge computing,'' \emph{IEEE
  Internet of Things Journal}, 2023.

\bibitem{lattimore2020bandit}
T.~Lattimore and C.~Szepesv{\'a}ri, \emph{Bandit algorithms}.\hskip 1em plus
  0.5em minus 0.4em\relax Cambridge University Press, 2020.

\bibitem{jensen1906fonctions}
J.~L. W.~V. Jensen, ``Sur les fonctions convexes et les in{\'e}galit{\'e}s
  entre les valeurs moyennes,'' \emph{Acta mathematica}, vol.~30, no.~1, pp.
  175--193, 1906.

\bibitem{borst2010distributed}
S.~Borst, V.~Gupta, and A.~Walid, ``Distributed caching algorithms for content
  distribution networks,'' in \emph{2010 Proceedings IEEE INFOCOM}.\hskip 1em
  plus 0.5em minus 0.4em\relax IEEE, 2010, pp. 1--9.

\bibitem{poularakis2016exploiting}
K.~Poularakis, G.~Iosifidis, V.~Sourlas, and L.~Tassiulas, ``Exploiting caching
  and multicast for 5g wireless networks,'' \emph{IEEE Transactions on Wireless
  Communications}, vol.~15, no.~4, pp. 2995--3007, 2016.

\bibitem{chen2018caching}
B.~Chen and C.~Yang, ``Caching policy for cache-enabled d2d communications by
  learning user preference,'' \emph{IEEE Transactions on Communications},
  vol.~66, no.~12, pp. 6586--6601, 2018.

\bibitem{zhang2016clustered}
X.~Zhang, Y.~Wang, R.~Sun, and D.~Wang, ``Clustered device-to-device caching
  based on file preferences,'' in \emph{2016 IEEE 27th Annual International
  Symposium on Personal, Indoor, and Mobile Radio Communications
  (PIMRC)}.\hskip 1em plus 0.5em minus 0.4em\relax IEEE, 2016, pp. 1--6.

\bibitem{lancewicki2021stochastic}
T.~Lancewicki, S.~Segal, T.~Koren, and Y.~Mansour, ``Stochastic multi-armed
  bandits with unrestricted delay distributions,'' in \emph{International
  Conference on Machine Learning}.\hskip 1em plus 0.5em minus 0.4em\relax PMLR,
  2021, pp. 5969--5978.

\end{thebibliography}
 
\begin{IEEEbiography}[{\includegraphics[width=1in,height=1.25in,clip,keepaspectratio]{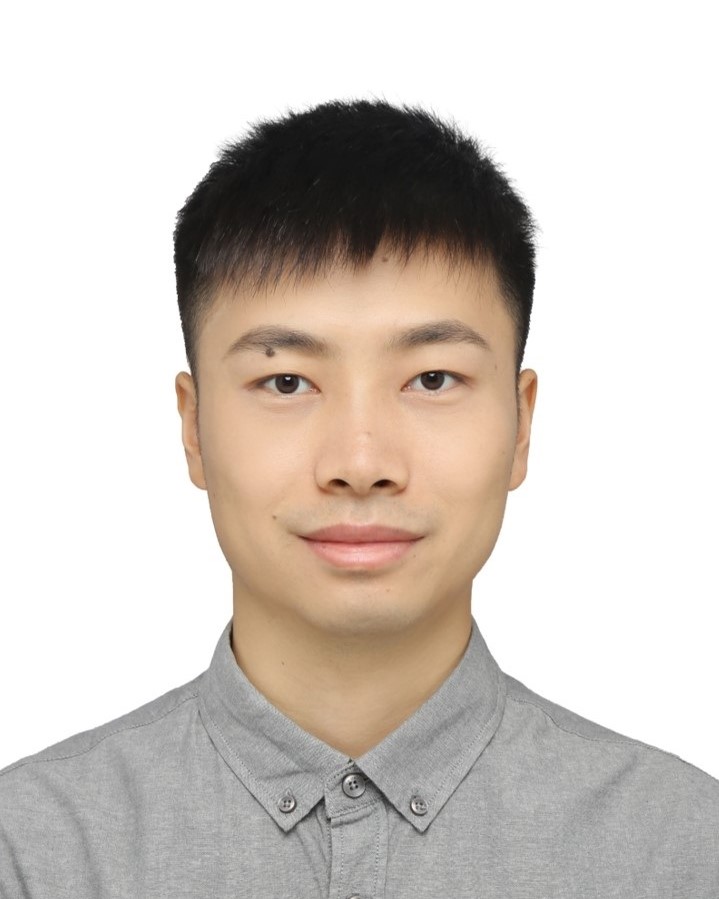}}]{Yandi Li} received his B.E. degree from the School of Optoelectronic Science and Engineering, University of Electronic Science and Technology of China, Chengdu, China, in 2013. He is currently pursuing the M.Phil. degree with the Guangdong Key Lab of AI and Multi-Modal Data Processing, Department of Computer Science, BNU-HKBU United International College, Zhuhai, China. He is supervised by Dr. Jianxiong Guo, and his research interests include social networks, online algorithms, and machine learning.
\end{IEEEbiography}

\begin{IEEEbiography}[{\includegraphics[width=1in,height=1.25in,clip,keepaspectratio]{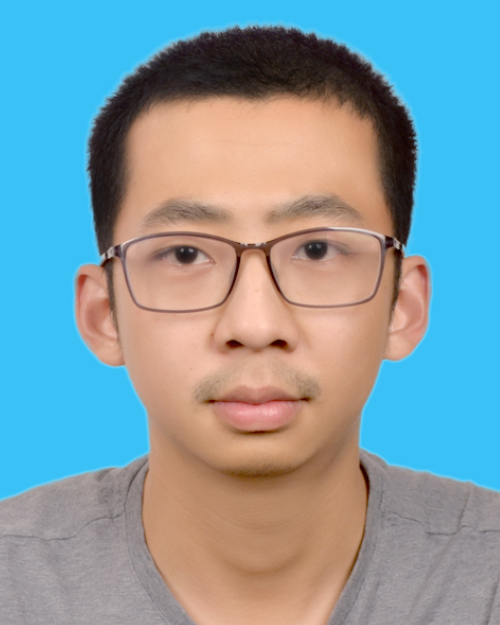}}]{Jianxiong Guo}
    (Member, IEEE) received his Ph.D. degree from the Department of Computer Science, University of Texas at Dallas, Richardson, TX, USA, in 2021, and his B.E. degree from the School of Chemistry and Chemical Engineering, South China University of Technology, Guangzhou, China, in 2015. He is currently an Assistant Professor with the Advanced Institute of Natural Sciences, Beijing Normal University, and also with the Guangdong Key Lab of AI and Multi-Modal Data Processing, BNU-HKBU United International College, Zhuhai, China. He is a member of IEEE/ACM/CCF. He has published more than 50 papers and has been a reviewer in famous international journals/conferences. His research interests include social networks, algorithm design, data mining, IoT applications, blockchain, and combinatorial optimization.
\end{IEEEbiography}

\begin{IEEEbiography}[{\includegraphics[width=1in,height=1.25in,clip,keepaspectratio]{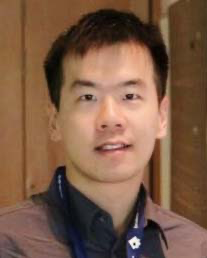}}]{Yupeng Li} (Member, IEEE) received the Ph.D. degree in computer science from The University of Hong Kong. He was with the University of Toronto and is currently with Hong Kong Baptist University. His research interests are in general areas of network science and, in particular, algorithmic decision making and machine learning problems, which arise in networked systems. 
He is also excited about interdisciplinary research that applies algorithmic techniques to edging problems. Recently, he has worked on robust online machine learning for the application of data classification, and he has extended these techniques to modern areas in networking and social media. Dr. Li has been awarded the Rising Star in Social Computing Award by CAAI and the distinction of Distinguished Member of the IEEE INFOCOM Technical Program Committee in 2022. He serves on the technical committees of some top conferences in computer science. His works have been published in prestigious venues, such as \textsc{IEEE INFOCOM}, \textsc{ACM MobiHoc}, \textsc{IEEE Journal on Selected Areas in Communications}, and \textsc{IEEE/ACM Transactions on Networking}. He is a member of ACM and IEEE.
\end{IEEEbiography}

\begin{IEEEbiography}[{\includegraphics[width=1in,height=1.25in,clip,keepaspectratio]{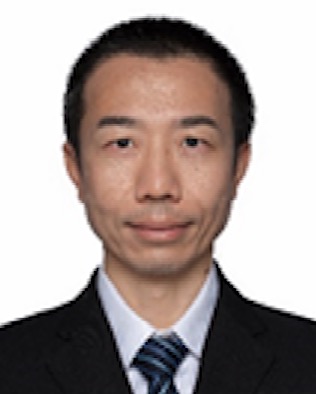}}]{Tian Wang}
	received his BSc and MSc degrees in Computer Science from the Central South University in 2004 and 2007, respectively. He received his PhD degree in City University of Hong Kong in Computer Science in 2011. Currently, he is a professor in the Institute of Artificial Intelligence and Future Networks, Beijing Normal University \& UIC. His research interests include internet of things, edge computing and mobile computing. He has 27 patents and has published more than 200 papers in high-level journals and conferences. He has more than 11000 citations, according to Google Scholar. His H-index is 53. He has managed 6 national natural science projects (including 2 sub-projects) and 4 provincial-level projects.
\end{IEEEbiography}

\begin{IEEEbiography}[{\includegraphics[width=1in,height=1.25in,clip,keepaspectratio]{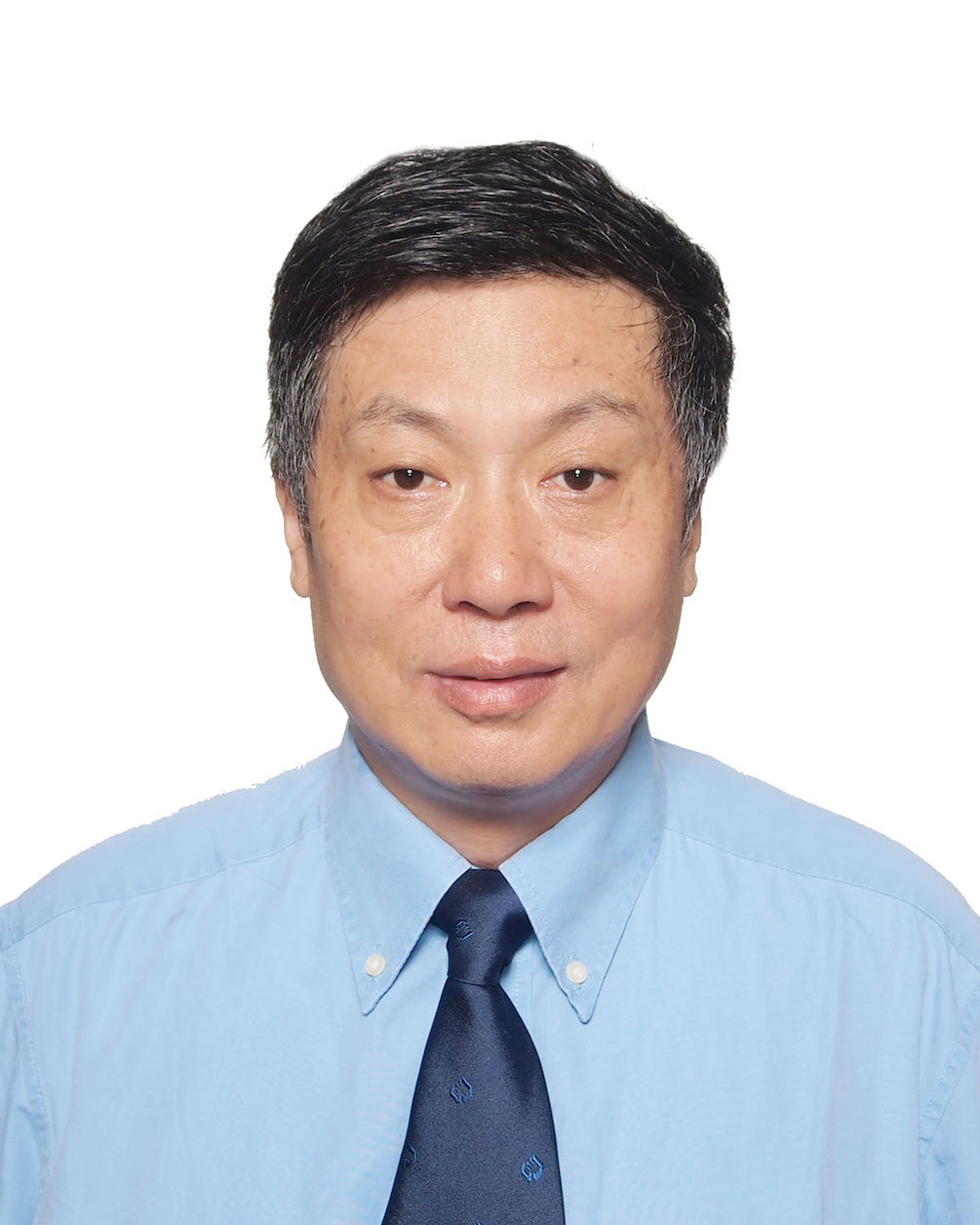}}]{Weijia Jia} (Fellow, IEEE)
    is currently a Chair Professor, Director of BNU-UIC Institute of Artificial Intelligence and Future Networks, Beijing Normal University (Zhuhai) and VP for Research of BNU-HKBU United International College (UIC) and has been the Zhiyuan Chair Professor of Shanghai Jiao Tong University, China. He was the Chair Professor and the Deputy Director of State Kay Laboratory of Internet of Things for Smart City at the University of Macau. He received BSc/MSc from Center South University, China in 82/84 and Master of Applied Sci./PhD from Polytechnic Faculty of Mons, Belgium in 92/93, respectively, all in computer science. From 93-95, he joined German National Research Center for Information Science (GMD) in Bonn (St. Augustine) as a research fellow. From 95-13, he worked in City University of Hong Kong as a professor. His contributions have been recognized as optimal network routing and deployment; anycast and QoS routing, sensors networking, AI (knowledge relation extractions; NLP, etc.), and edge computing. He has over 600 publications in the prestige international journals/conferences and research books and book chapters. He has received the best product awards from International Science \& Tech. Expo (Shenzhen) in 20112012 and the 1st Prize of Scientific Research Awards from the Ministry of Education of China in 2017 (list 2). He has served as area editor for various prestige international journals, chair, and PC member/keynote speaker for many top international conferences. He is the Fellow of IEEE and the Distinguished Member of CCF.
\end{IEEEbiography}

\vfill

\onecolumn
\appendix
\setcounter{thm}{0}
\setcounter{lem}{3}
\setcounter{cor}{0}
\renewcommand{\thesection}{\Alph{section}.\arabic{section}}
\setcounter{section}{0}

\subsection{Proof of Corollary \ref{cor1}} \label{cor1_proof}

\begin{cor}
    From Lemma \ref{lem1}, we can derive the following bound: $\sum_{i=1}^{N}\lvert p_{i}(t+1)-p_{i}(t)\rvert \leq 2\eta'\sum_{k=1}^{N}p_{k}(t)\cdot\ell_{k}(t)$.
\end{cor}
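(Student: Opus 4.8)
The plan is to exploit the fact that both $\boldsymbol p(t)$ and $\boldsymbol p(t+1)$ are probability vectors, so that $\sum_{i=1}^N\bigl(p_i(t+1)-p_i(t)\bigr)=0$. Splitting the index set into $S^{+}=\{i:p_i(t+1)\ge p_i(t)\}$ and $S^{-}=\{i:p_i(t+1)<p_i(t)\}$, this identity forces the total positive deviation to equal the total negative deviation, hence
\begin{equation*}
\sum_{i=1}^N\lvert p_i(t+1)-p_i(t)\rvert
=2\sum_{i\in S^{-}}\bigl(p_i(t)-p_i(t+1)\bigr).
\end{equation*}
So it suffices to bound the negative part by $\eta'\sum_{k}p_k(t)\ell_k(t)$.

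For the negative part I would invoke only the \emph{lower} bound of Lemma~\ref{lem1}, namely $-\eta' p_i(t)\ell_i(t)\le p_i(t+1)-p_i(t)$, which rearranges to $p_i(t)-p_i(t+1)\le \eta' p_i(t)\ell_i(t)$ for every $i$ (in particular for $i\in S^{-}$). Summing over $i\in S^{-}$ gives
\begin{equation*}
\sum_{i\in S^{-}}\bigl(p_i(t)-p_i(t+1)\bigr)\le \eta'\sum_{i\in S^{-}}p_i(t)\ell_i(t)\le \eta'\sum_{k=1}^{N}p_k(t)\ell_k(t),
\end{equation*}
where the last step just extends the sum to all arms using that $p_k(t)\ge 0$ and $\ell_k(t)\ge 0$ (the estimated losses $\hat l_i^j(s)$, hence $\ell_i(t)$, are nonnegative). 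Combining with the doubling identity yields the claim. (Alternatively one could bound the positive part $\sum_{i\in S^{+}}(p_i(t+1)-p_i(t))$ with the \emph{upper} bound of Lemma~\ref{lem1}, using $\sum_{i\in S^{+}}p_i(t+1)\le 1$; either route works.)

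There is no real obstacle here: the only thing to notice is the elementary ``twice the positive (or negative) deviation'' rewriting of the $\ell_1$ distance between two probability vectors, after which the appropriate one-sided estimate from Lemma~\ref{lem1} closes the argument immediately. The mild care needed is to pair the correct sign of deviation with the correct side of the Lemma~\ref{lem1} inequality and to recall nonnegativity of $\ell_k(t)$ so the summation can be freely enlarged.
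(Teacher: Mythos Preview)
Your proof is correct, and it takes a genuinely different route from the paper's. The paper bounds each term individually by replacing the absolute value with the maximum of the two one-sided bounds from Lemma~\ref{lem1}, then crudely replaces the maximum by the sum,
\[
\lvert p_i(t+1)-p_i(t)\rvert \le \eta'\Bigl(p_i(t)\ell_i(t) + p_i(t+1)\sum_{k} p_k(t)\ell_k(t)\Bigr),
\]
and finally sums over $i$ using $\sum_i p_i(t+1)=1$. Your argument instead uses the total-variation identity $\sum_i \lvert p_i(t+1)-p_i(t)\rvert = 2\sum_{i\in S^{-}}\bigl(p_i(t)-p_i(t+1)\bigr)$ for probability vectors, and then only the \emph{lower} bound in Lemma~\ref{lem1} is needed. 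This is more economical: it shows that the upper bound in Lemma~\ref{lem1} is not actually required for this corollary, and it sidesteps the $\max\le$ sum slack. The paper's approach has the minor advantage of being a direct termwise bound that does not appeal to the structure of probability vectors, but in this setting your route is cleaner.
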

\begin{proof}
Based on Lemma \ref{lem1}, we have
    \begin{align}
        \lvert p_{i}(t+1)-p_{i}(t)\rvert &\leq \max\left\{\eta'p_{i}(t)\cdot\ell_{i}(t),\; \eta'p_{i}(t+1)\sum_{k=1}^{N}p_{k}(t)\cdot\ell_{k}(t)\right\} \notag\\
        &\leq \eta'\left(p_{i}(t)\cdot\ell_{i}(t) + p_{i}(t+1)\sum_{k=1}^{N}p_{k}(t)\cdot\ell_{k}(t)\right).\notag
    \end{align}
    Then, the sum over all arms can be bounded as
    \begin{align}
        \sum_{i=1}^{N} \lvert p_{i}(t+1)-p_{i}(t)\rvert &\leq \eta'\sum_{i=1}^{N}\left(p_{i}(t)\cdot\ell_{i}(t) + p_{i}(t+1)\sum_{k=1}^{N}p_{k}(t)\cdot\ell_{k}(t)\right) \notag\\
        &= \eta'\left(\sum_{i=1}^{N}p_{i}(t)\cdot\ell_{i}(t) + \sum_{i=1}^{N}p_{i}(t+1)\sum_{k=1}^{N}p_{k}(t)\cdot\ell_{k}(t)\right) \notag\\
        &= \eta'\left(\sum_{i=1}^{N}p_{i}(t)\cdot\ell_{i}(t) + \sum_{k=1}^{N}p_{k}(t)\ell_{k}(t)\cdot\sum_{i=1}^{N}p_{i}(t+1)\right) \notag\\
        &\mathop{=}\limits_{(a)} \eta'\left(\sum_{i=1}^{N}p_{i}(t)\cdot\ell_{i}(t) + \sum_{k=1}^{N}p_{k}(t)\cdot\ell_{k}(t)\right)\notag\\
        &= 2\eta'\sum_{k=1}^{N}p_{k}(t)\cdot\ell_{k}(t), \notag
    \end{align}
    where Eq. $(a)$ results from $\sum_{i=1}^{N}p_{i}(t+1)=1$. 
\end{proof}

\subsection{Proof of Lemma \ref{lem4}} \label{lem4_proof}
\begin{lem}
    MUD-EXP3 satisfies the following inequality
    \begin{align}
        &\mathbb{E}\left[\sum_{t=1}^{T}\sum_{(s,j)\in \Phi_t}\sum_{k=1}^{N}p_{k}(s)\cdot l_{k}^{j}(s) - \sum_{t=1}^{T}\sum_{(s,j)\in \Phi_t}\sum_{k=1}^{N}p_{k}(t)\cdot l_{k}^{j}(s)\right] \leq 2\eta'M\sum_{t=1}^{T}\sum_{(s,j)\in \Phi_t}d_s^j. \notag
    \end{align}
\end{lem}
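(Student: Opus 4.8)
The plan is to reduce the claim, one round–user pair at a time, to a statement about how fast the sampling distribution $\boldsymbol{p}(\cdot)$ can drift, and then to invoke Corollary~\ref{cor1}. Fix a pair $(s,j)$ and set $t=s+d_s^j$, so that $(s,j)\in\Phi_t$. Since $l_k^j(s)\in[0,1]$, we have $\sum_{k=1}^N\bigl(p_k(s)-p_k(t)\bigr)l_k^j(s)\le\sum_{k=1}^N\lvert p_k(s)-p_k(t)\rvert$. Writing the difference as the telescoping sum $p_k(s)-p_k(t)=\sum_{\tau=s}^{t-1}\bigl(p_k(\tau)-p_k(\tau+1)\bigr)$ and applying the triangle inequality gives $\sum_{k}\lvert p_k(s)-p_k(t)\rvert\le\sum_{\tau=s}^{t-1}\sum_{k}\lvert p_k(\tau+1)-p_k(\tau)\rvert$, and Corollary~\ref{cor1} bounds each inner term by $2\eta'\sum_{k}p_k(\tau)\ell_k(\tau)$. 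Summing over all $(s,j)\in\Phi_t$ and all $t\le T$ — which is legitimate because the adversary is oblivious, so every $\Phi_t$, and hence the whole index set, is deterministic — yields the pathwise bound $\sum_{t=1}^T\sum_{(s,j)\in\Phi_t}\sum_{k}\bigl(p_k(s)-p_k(t)\bigr)l_k^j(s)\le 2\eta'\sum_{t=1}^T\sum_{(s,j)\in\Phi_t}\sum_{\tau=s}^{t-1}\sum_{k}p_k(\tau)\ell_k(\tau)$.

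Next I would take expectations. The crucial estimate is $\mathbb{E}\bigl[\sum_{k}p_k(\tau)\ell_k(\tau)\bigr]\le M$ for every $\tau$ appearing in the sum. To see this, expand $\ell_k(\tau)=\sum_{(s',j')\in\Phi_\tau}\hat{l}_k^{j'}(s')$ and condition on $\mathcal{F}_{\tau-1}$: both $p_k(\tau)$ and $p_k(s')$ (with $s'<\tau$) are $\mathcal{F}_{\tau-1}$-measurable, and $\hat{l}_k^{j'}(s')$ equals $l_k^{j'}(s')/p_k(s')$ with probability $p_k(s')$ and is zero otherwise, so $\mathbb{E}[\hat{l}_k^{j'}(s')\mid\mathcal{F}_{\tau-1}]=l_k^{j'}(s')$, exactly as in the proof of Lemma~\ref{lem3}. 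Hence $\mathbb{E}\bigl[\sum_{k}p_k(\tau)\ell_k(\tau)\mid\mathcal{F}_{\tau-1}\bigr]=\sum_{(s',j')\in\Phi_\tau}\sum_{k}p_k(\tau)\,l_k^{j'}(s')\le\lvert\Phi_\tau\rvert\le M$, using $l_k^{j'}(s')\le1$ and $\sum_{k}p_k(\tau)=1$. Plugging this back in, the inner sum over $\tau=s,\dots,t-1$ contributes exactly $d_s^j$ copies of the bound $M$, so $\mathbb{E}[\text{LHS}]\le 2\eta'M\sum_{t=1}^T\sum_{(s,j)\in\Phi_t}d_s^j$, which is the claim.

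I expect the expectation step to be the main obstacle, not the algebra: pathwise one only has $\sum_{k}p_k(\tau)\ell_k(\tau)\le MN\mathrm{e}$ (this is essentially Ineq.~(\ref{eq2-2})), which would cost an extra factor of $N\mathrm{e}$; the sharper bound $M$ is available only in expectation, and it hinges on the conditional-unbiasedness identity together with the measurability bookkeeping — in particular that $p_k(s')$ is $\mathcal{F}_{\tau-1}$-measurable for $s'<\tau$, and that every $\tau$ occurring inside satisfies $\tau\le t-1\le T-1$ so that $\Phi_\tau$ is a genuine within-horizon feedback set. Once these points are checked, the remaining steps (telescoping, triangle inequality, counting the $d_s^j$ terms, interchanging finite sums) are routine.
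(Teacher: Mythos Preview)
Your proposal is correct and follows essentially the same route as the paper: bound $l_k^j(s)\le 1$, telescope $p_k(s)-p_k(t)$, apply the triangle inequality and Corollary~\ref{cor1}, then take expectations and use the conditional unbiasedness of $\hat{l}_k^{j'}(s')$ together with $\lvert\Phi_\tau\rvert\le M$ to get the factor $M$ per step, which when summed over $\tau=s,\dots,t-1$ yields $d_s^j$. Your write-up is in fact slightly more careful than the paper's about why the index sets are deterministic (oblivious adversary) and about the measurability needed for the conditioning step.
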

\begin{proof}
    To prove this inequality, we first transform the expression into a form containing $\sum_{i=1}^{N}\lvert p_{i}(t+1)-p_{i}(t)\rvert$, then adopt Corollary \ref{cor1} to upper bound it.
    \begin{align}
        &\quad \mathbb{E}\left[\sum_{t=1}^{T}\sum_{(s,j)\in \Phi_t}\sum_{k=1}^{N}p_{k}(s)\cdot l_{k}^{j}(s) - \sum_{t=1}^{T}\sum_{(s,j)\in \Phi_t}\sum_{k=1}^{N}p_{k}(t)\cdot l_{k}^{j}(s)\right] \notag\\
        &= \mathbb{E}\left[\sum_{t=1}^{T}\sum_{(s,j)\in \Phi_t}\sum_{k=1}^{N}l_{k}^{j}(s)\left(p_{k}(s) - p_{k}(t)\right)\right] \notag\\
        &\leq \mathbb{E}\left[\sum_{t=1}^{T}\sum_{(s,j)\in \Phi_t}\sum_{k=1}^{N}\left(p_{k}(s) - p_{k}(t)\right)\right] \notag\\
        &= \mathbb{E}\left[\sum_{t=1}^{T}\sum_{(s,j)\in \Phi_t}\sum_{k=1}^{N}\sum_{r=s}^{t-1}\left(p_{k}(r) - p_{k}(r+1)\right)\right] \notag\\
        &\leq \mathbb{E}\left[\sum_{t=1}^{T}\sum_{(s,j)\in \Phi_t}\sum_{r=s}^{t-1}\sum_{k=1}^{N}\lvert p_{k}(r) - p_{k}(r+1)\rvert\right] \notag\\
        &\mathop{\leq}\limits_{(a)} \mathbb{E}\left[\sum_{t=1}^{T}\sum_{(s,j)\in \Phi_t}\sum_{r=s}^{t-1} 2\eta'\sum_{k=1}^{N}p_{k}(r)\cdot\ell_{k}(r)\right] \notag\\
        &\mathop{=}\limits_{(b)} 2\eta'\mathbb{E}\left[\sum_{t=1}^{T}\sum_{(s,j)\in \Phi_t}\sum_{r=s}^{t-1}\sum_{k=1}^{N}p_{k}(r)\cdot\mathbb{E}\left[\ell_{k}(r)\vert\mathcal{F}_{r-1}\right]\right] \notag\\
        &= 2\eta'\mathbb{E}\left[\sum_{t=1}^{T}\sum_{(s,j)\in \Phi_t}\sum_{r=s}^{t-1}\sum_{k=1}^{N}p_{k}(r)\cdot\mathbb{E}\left[\sum_{(s',j')\in \Phi_r}\hat{l}_{k}^{j'}(s')\vert\mathcal{F}_{r-1}\right]\right] \notag\\
        &\mathop{=}\limits_{(c)} 2\eta'\mathbb{E}\left[\sum_{t=1}^{T}\sum_{(s,j)\in \Phi_t}\sum_{r=s}^{t-1}\sum_{k=1}^{N}p_{k}(r)\sum_{(s',j')\in \Phi_r}l_{k}^{j'}(s')\right] \notag\\
        &\leq 2\eta'M\sum_{t=1}^{T}\sum_{(s,j)\in \Phi_t}d_s^j, \notag
    \end{align}
    where Ineq. $(a)$ holds by using Corollary \ref{cor1}, Eq. $(b)$ uses $p_k(r)\in \mathcal{F}_{r-1}$, and Ineq. $(c)$ follows the fact that $\hat{l}_{k}^{j'}(s')$ is $l_{k}^{j'}(s')/p_k(s')$ with probability $p_k(s')$ and zero otherwise. 
\end{proof}

\subsection{Proof of Theorem \ref{thm1}} \label{thm1_proof}
\begin{thm}
    For any arm $i\in\mathcal{N}$, MUD-EXP3 guarantees the upper bound for $\mathcal{R}_i$ is shown as 
    \begin{equation}
        \mathcal{R}_i \leq \frac{\ln{N}}{\eta^{\prime}} + \frac{1}{2}\eta^{\prime}M^2TN\mathrm{e} + 2\eta'M\sum_{t=1}^{T}\sum_{(s,j)\in \Phi_t}d_s^j + \lvert\Omega\rvert, \notag
    \end{equation}
    which implies the same regret upper bound as follows:
    \begin{equation}
        \mathcal{R} \leq \frac{\ln{N}}{\eta^{\prime}} + \frac{1}{2}\eta^{\prime}M^2TN\mathrm{e} + 2\eta'M\sum_{t=1}^{T}\sum_{(s,j)\in \Phi_t}d_s^j + \lvert\Omega\rvert. \notag
    \end{equation}
    Specially, for the known $T$ and $\sum_{t=1}^{T}\sum_{(s,j)\in \Phi_t}d_s^j$, if 
    \begin{equation*}
        \eta= \sqrt{\frac{\ln{N}}{M(TMN\mathrm{e}+4\sum_{t=1}^{T}\sum_{(s,j)\in \Phi_t}d_s^j)}} \leq \frac{1}{MN\mathrm{e}(\sum_{t=1}^{T}\sum_{(s,j)\in \Phi_t}d_s^j+1)},
    \end{equation*}
    we have
    \begin{equation}
        \mathcal{R} \leq \mathcal{O}\left(\sqrt{M\ln{N}(TMN\mathrm{e}+4\sum_{t=1}^{T}\sum_{(s,j)\in \Phi_t}d_s^j})\right) \leq \mathcal{O}\left(\sqrt{TM^2\ln{N}\left(N\mathrm{e}+4d_{max}\right)}\right). \notag
    \end{equation}
\end{thm}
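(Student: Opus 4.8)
The plan is to peel $\mathcal{R}_i$ apart into precisely the two expectations already bounded in Lemma~\ref{lem3} and Lemma~\ref{lem4}, plus a remainder of size $|\Omega|$ arising from feedback that has not been received by round $T$. First I would rewrite the player's expected cumulative loss with the tower rule: since $\boldsymbol{p}(t)$ is $\mathcal{F}_{t-1}$-measurable and the losses are oblivious, $\mathbb{E}\big[l_{A_t}^j(t)\mid\mathcal{F}_{t-1}\big]=\sum_{k=1}^N p_k(t)\,l_k^j(t)$, hence $\mathbb{E}\big[\sum_{t=1}^T\sum_{j=1}^M l_{A_t}^j(t)\big]=\mathbb{E}\big[\sum_{t=1}^T\sum_{j=1}^M\sum_{k=1}^N p_k(t)\,l_k^j(t)\big]$.

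Next I would reindex the $(t,j)$-sum with $t\le T$ by the start pair $(s,j)$ and split the index set $\{(s,j):s\le T\}$ into $\bigcup_{t\le T}\Phi_t$ (feedback received within the horizon) and $\Omega$ (feedback due after $T$). Because $\sum_k p_k(s)\,l_k^j(s)\le\sum_k p_k(s)=1$, the $\Omega$-part costs at most $|\Omega|$, so $\mathbb{E}\big[\sum_t\sum_j\sum_k p_k(t)l_k^j(t)\big]\le\mathbb{E}\big[\sum_{t=1}^T\sum_{(s,j)\in\Phi_t}\sum_k p_k(s)l_k^j(s)\big]+|\Omega|$. Now I would insert the intermediate quantity $\sum_{t}\sum_{(s,j)\in\Phi_t}\sum_k p_k(t)l_k^j(s)$: the gap between the $p_k(s)$-version and the $p_k(t)$-version is exactly what Lemma~\ref{lem4} bounds by $2\eta'M\sum_t\sum_{(s,j)\in\Phi_t}d_s^j$, while the gap between the $p_k(t)$-version and $\sum_t\sum_j l_i^j(t)$ is exactly what Lemma~\ref{lem3} bounds by $\ln N/\eta'+\tfrac12\eta'M^2TN\mathrm{e}$. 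Summing the three contributions gives the stated bound on $\mathcal{R}_i$, and since the right-hand side does not depend on $i$, taking $i$ to be the optimal arm in hindsight yields the same bound for $\mathcal{R}$.

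For the explicit rate I would first check that the hypothesis $\eta\le\big(MN\mathrm{e}(\sum_t\sum_{(s,j)\in\Phi_t}d_s^j+1)\big)^{-1}$ makes the truncation of $\eta$ in MUD-EXP3 inactive, so $\eta'=\eta$ (with $\Delta:=\sum_t\sum_{(s,j)\in\Phi_t}d_s^j$, the delay quantity used in Lemma~\ref{lem2}). Substituting $\eta=\sqrt{\ln N/\big(M(TMN\mathrm{e}+4\sum_t\sum_{(s,j)\in\Phi_t}d_s^j)\big)}$ into $\ln N/\eta+\tfrac12\eta M^2TN\mathrm{e}+2\eta M\sum_t\sum_{(s,j)\in\Phi_t}d_s^j$ and using the elementary estimates $M^2TN\mathrm{e}/\sqrt{M(TMN\mathrm{e}+4\sum d)}\le M\sqrt{TN\mathrm{e}}$ and $M\sum d/\sqrt{M(TMN\mathrm{e}+4\sum d)}\le\tfrac12\sqrt{M\sum d}$ shows every term is $\mathcal{O}\big(\sqrt{M\ln N(TMN\mathrm{e}+4\sum_t\sum_{(s,j)\in\Phi_t}d_s^j)}\big)$. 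Finally I would bound the leftover $|\Omega|\le Md_{max}$ (every unreceived pair has start round in $\{T-d_{max}+1,\dots,T\}$) and $\sum_t\sum_{(s,j)\in\Phi_t}d_s^j\le\sum_{t=1}^T\sum_{j=1}^M d_t^j\le TMd_{max}$, collapsing the estimate to $\mathcal{O}\big(\sqrt{TM^2\ln N(N\mathrm{e}+4d_{max})}\big)$, with the $Md_{max}$ term absorbed since $d_{max}=\mathcal{O}(T)$.

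The hard part will be the index bookkeeping in the split between the delayed sums $\sum_{(s,j)\in\Phi_t}$ and the real-time sums $\sum_j$ over $t\le T$: one must verify that the discarded mass is exactly the set $\Omega$ and that each discarded pair contributes at most $1$, so the error term is genuinely $|\Omega|$ and nothing larger, and simultaneously that the prescribed choice of $\eta$ is both small enough to force $\eta'=\eta$ and consistent with the requirement $\Delta\ge d_{max}$ inherited from Lemma~\ref{lem2}. Everything else is routine algebra.
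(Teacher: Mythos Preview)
Your proposal is correct and follows essentially the same route as the paper: rewrite $\mathcal{R}_i$ via the tower rule, split the $(t,j)$-indexing into $\bigcup_{t\le T}\Phi_t$ and $\Omega$, bound the $\Omega$-part by $|\Omega|$, insert the intermediate $p_k(t)$-weighted sum, and apply Lemma~\ref{lem3} and Lemma~\ref{lem4}. Your discussion of the explicit rate and the $|\Omega|\le Md_{max}$ and $\sum d_s^j\le TMd_{max}$ estimates is in fact more detailed than what the paper provides (the paper's proof stops after obtaining the $\eta'$-dependent bound).
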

\begin{proof}
    The expression of $\mathcal{R}_i$ can be transformed in order to approach Lemma \ref{lem3} and Lemma \ref{lem4} for upper bounding, as shown below.
    \begin{align}
        \mathcal{R}_i &= \mathbb{E}\left[\sum_{t=1}^{T}\sum_{j=1}^{M}l_{A_t}^{j}(t)\right] - \sum_{t=1}^{T}\sum_{j=1}^{M}l_{i}^{j}(t) \notag\\
        &= \mathbb{E}\left[\sum_{t=1}^{T}\sum_{j=1}^{M}\mathbb{E}\left[l_{A_t}^{j}(t)\vert\mathcal{F}_{t}\right]\right] - \sum_{t=1}^{T}\sum_{j=1}^{M}l_{i}^{j}(t) \notag\\
        &= \mathbb{E}\left[\sum_{t=1}^{T}\sum_{j=1}^{M}\sum_{k=1}^{N}p_{k}(t)\cdot l_{k}^{j}(t) - \sum_{t=1}^{T}\sum_{j=1}^{M}l_{i}^{j}(t)\right] \notag\\
        &= \mathbb{E}\left[\sum_{t=1}^{T}\sum_{(s,j)\in \Phi_t}\sum_{k=1}^{N}p_{k}(s)\cdot l_{k}^{j}(s) + \sum_{(t,j)\in \Omega}\sum_{k=1}^{N}p_{k}(t)\cdot l_{k}^{j}(t) - \sum_{t=1}^{T}\sum_{j=1}^{M}l_{i}^{j}(t)\right] \notag\\
        &\mathop{\leq}\limits_{(a)} \mathbb{E}\left[\sum_{t=1}^{T}\sum_{(s,j)\in \Phi_t}\sum_{k=1}^{N}p_{k}(s)\cdot l_{k}^{j}(s) - \sum_{t=1}^{T}\sum_{j=1}^{M}l_{i}^{j}(t)\right] + \lvert\Omega\rvert \notag\\
        &= \mathbb{E}\left[\sum_{t=1}^{T}\sum_{(s,j)\in \Phi_t}\sum_{k=1}^{N}p_{k}(t)\cdot l_{k}^{j}(s) - \sum_{t=1}^{T}\sum_{(s,j)\in \Phi_t}\sum_{k=1}^{N}p_{k}(t)\cdot l_{k}^{j}(s)\right] \notag\\
        &\qquad + \mathbb{E}\left[\sum_{t=1}^{T}\sum_{(s,j)\in \Phi_t}\sum_{k=1}^{N}p_{k}(s)\cdot l_{k}^{j}(s) - \sum_{t=1}^{T}\sum_{j=1}^{M}l_{i}^{j}(t)\right] + \lvert\Omega\rvert \notag\\
        &= \mathbb{E}\left[\sum_{t=1}^{T}\sum_{(s,j)\in \Phi_t}\sum_{k=1}^{N}p_{k}(t)\cdot l_{k}^{j}(s) - \sum_{t=1}^{T}\sum_{j=1}^{M}l_{i}^{j}(t)\right] \notag\\
        &\qquad + \mathbb{E}\left[\sum_{t=1}^{T}\sum_{(s,j)\in \Phi_t}\sum_{k=1}^{N}p_{k}(s)\cdot l_{k}^{j}(s) - \sum_{t=1}^{T}\sum_{(s,j)\in \Phi_t}\sum_{k=1}^{N}p_{k}(t)\cdot l_{k}^{j}(s)\right] + \lvert\Omega\rvert \notag\\
        &\mathop{\leq}\limits_{(b)} \frac{\ln{N}}{\eta^{\prime}} + \frac{1}{2}\eta^{\prime}M^2TN\mathrm{e} + 2\eta'M\sum_{t=1}^{T}\sum_{(s,j)\in \Phi_t}d_s^j + \lvert\Omega\rvert, \notag
    \end{align}
    where Ineq. $(a)$ follows since $l_{i}^{j}(t)\leq 1$ and Ineq. $(b)$ results from Lemma \ref{lem3} and Lemma \ref{lem4}. 
\end{proof}

\subsection{Proof of Theorem \ref{thm2}} \label{thm2_proof}

\begin{thm}
    For any arm $i\in\mathcal{N}$, AMUD-EXP3 guarantees the upper bound for $\mathcal{R}_i$ as shown below.
    \begin{equation}
        \mathcal{R}_i \leq \left(11\sqrt{M\ln{N}}+7\sqrt{M}\right)\sqrt{\sum_{t=1}^{T}\sum_{j=1}^{M}d_t^j} + \frac{5}{2}MN\mathrm{e}\sqrt{T\ln{N}}. \notag
    \end{equation}
\end{thm}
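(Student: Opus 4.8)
The plan is to sum the per-epoch bound (\ref{epc-rgt}) over all epochs $\varepsilon=1,\dots,E$ and then eliminate every epoch-indexed quantity in favour of $T$ and $\sum_{t=1}^{T}\sum_{j=1}^{M}d_t^j$ using Lemmas \ref{lem5}--\ref{lem7}. First I would substitute $\eta_\varepsilon=\frac1M\sqrt{\ln N/2^\varepsilon}$ into (\ref{epc-rgt}), which rewrites the four summands as $M\sqrt{\ln N}\,2^{\varepsilon/2}$, $\tfrac12 MN\mathrm e\sqrt{\ln N}\,\lvert\mathcal T_\varepsilon\rvert 2^{-\varepsilon/2}$, $2\sqrt{\ln N}\,2^{-\varepsilon/2}\sum_{t\in\mathcal T_\varepsilon}\sum_{(s,j)\in\Phi_t}d_s^j$, and $\lvert\Omega_\varepsilon\rvert$. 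I would then bound the in-epoch delay sum by Lemma \ref{lem5}, $\sum_{t\in\mathcal T_\varepsilon}\sum_{(s,j)\in\Phi_t}d_s^j\le\bigl(2^{\varepsilon-1}+\tfrac1\varepsilon\bigr)M$, so that the third summand splits into a ``geometric'' part $M\sqrt{\ln N}\,2^{\varepsilon/2}$ and a tail part $2M\sqrt{\ln N}/(\varepsilon 2^{\varepsilon/2})$, and bound $\lvert\Omega_\varepsilon\rvert\le 2M\,2^{\varepsilon/2}$ by Lemma \ref{lem6}. After this step the total regret is a sum over $\varepsilon$ of three kinds of terms: $c\,2^{\varepsilon/2}$ terms with $c\in\{M\sqrt{\ln N},\,2M\}$, one $\lvert\mathcal T_\varepsilon\rvert 2^{-\varepsilon/2}$ term, and the $1/(\varepsilon 2^{\varepsilon/2})$ tail.

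For the $\lvert\mathcal T_\varepsilon\rvert 2^{-\varepsilon/2}$ term I would invoke the second inequality of Lemma \ref{lem7}, $\sum_{\varepsilon=1}^{E}\lvert\mathcal T_\varepsilon\rvert 2^{-\varepsilon/2}\le 5\sqrt T$, which immediately yields the $\tfrac52 MN\mathrm e\sqrt{T\ln N}$ summand of the claimed bound. For each $c\,2^{\varepsilon/2}$ term I would sum the geometric series, $\sum_{\varepsilon=1}^{E}2^{\varepsilon/2}\le(2+\sqrt2)\,2^{E/2}$, and then use the first inequality of Lemma \ref{lem7}, $2^{E-1}\le\frac1M\sum_{t,j}d_t^j$, to replace $2^{E/2}$ by $\sqrt{(2/M)\sum_{t,j}d_t^j}$; this is the step that converts $M\sqrt{\ln N}$ into $\sqrt{M\ln N}$ and $2M$ into $\sqrt M$ while producing the $\sqrt{\sum_{t,j}d_t^j}$ factor. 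The remaining $1/(\varepsilon 2^{\varepsilon/2})$ tail sums to a convergent constant times $M\sqrt{\ln N}$, which I would fold into the same factor using $\sum_{t,j}d_t^j\ge MT\ge M$, i.e. $M\sqrt{\ln N}\le\sqrt{M\ln N}\sqrt{\sum_{t,j}d_t^j}$. Collecting the constants from the three $2^{\varepsilon/2}$ contributions plus this absorbed tail gives the coefficients $11\sqrt{M\ln N}$ and $7\sqrt M$ in front of $\sqrt{\sum_{t,j}d_t^j}$, and adding the unchanged $\tfrac52 MN\mathrm e\sqrt{T\ln N}$ term completes the bound.

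The main obstacle is not any individual inequality but the constant bookkeeping: one must track the geometric-series prefactor, the value of the tail sum $\sum_\varepsilon 1/(\varepsilon 2^{\varepsilon/2})$, and the loss of a factor $\sqrt 2$ from $2^{E-1}\le\frac1M\sum_{t,j}d_t^j$ simultaneously, and make sure the lower-order $M\sqrt{\ln N}$ pieces are absorbed cleanly rather than left dangling, so that everything collapses into exactly the two stated coefficients. A secondary point that warrants a careful sentence is the legitimacy of (\ref{epc-rgt}) itself: it is Theorem \ref{thm1} applied to epoch $\varepsilon$ viewed as a standalone MUD-EXP3 run of horizon $\lvert\mathcal T_\varepsilon\rvert$ with learning rate $\eta_\varepsilon$ (playing the truncated role) and effective delay budget $\sum_{t\in\mathcal T_\varepsilon}\sum_{(s,j)\in\Phi_t}d_s^j$, with $\lvert\Omega_\varepsilon\rvert$ accounting for feedback that spills past $T_\varepsilon$; since AMUD-EXP3 resets $\hat L_i$ at each epoch boundary this decomposition is valid and the per-epoch regrets simply add up to $\mathcal R_i$.
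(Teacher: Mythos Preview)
Your proposal is correct and follows essentially the same route as the paper: sum the per-epoch bound (\ref{epc-rgt}), substitute $\eta_\varepsilon=\frac{1}{M}\sqrt{\ln N/2^\varepsilon}$, apply Lemmas \ref{lem5}--\ref{lem6} to control the in-epoch delay sum and $\lvert\Omega_\varepsilon\rvert$, then use both parts of Lemma \ref{lem7} to convert the $2^{\varepsilon/2}$-geometric sum and the $\lvert\mathcal T_\varepsilon\rvert 2^{-\varepsilon/2}$ sum into the claimed $\sqrt{\sum_{t,j}d_t^j}$ and $\sqrt{T}$ factors. The only cosmetic difference is the treatment of the $2M\sqrt{\ln N}/(\varepsilon\,2^{\varepsilon/2})$ tail: the paper simply bounds $2/(\varepsilon\,2^{\varepsilon/2})\le 2^{\varepsilon/2}$ (valid for $\varepsilon\ge 1$) and merges it into the geometric part to get a clean $3M\sqrt{\ln N}\cdot 2^{\varepsilon/2}$ coefficient, which is slightly simpler than your separate convergent-series-plus-absorption step and makes the final constants $11$ and $7$ fall out directly.
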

\begin{proof}
    Based on Ineq. (\ref{epc-rgt}), the total regret $\mathcal{R}_i$ can be represented as the sum of each epoch regret $\mathcal{R}_{\varepsilon}$. Letting $\eta_{\varepsilon}=\frac{1}{M}\sqrt{\frac{\ln{N}}{2^{\varepsilon}}}$, then we have
    \begin{align}
        \mathcal{R}_i &= \sum_{\varepsilon=1}^{E}\mathcal{R}_{\varepsilon}
        \leq \sum_{\varepsilon=1}^{E}\left[\frac{\ln{N}}{\eta_{\varepsilon}} + \frac{1}{2}\eta_{\varepsilon}M^2\lvert\mathcal{T}_{\varepsilon}\rvert N\mathrm{e} + 2\eta_{\varepsilon}M\sum_{t\in\mathcal{T}_{\varepsilon}}\sum_{(s,j)\in \Phi_t}d_s^j + \lvert\Omega_{\varepsilon}\rvert \right] \notag\\
        &\mathop{\leq}\limits_{(a)} \sum_{\varepsilon=1}^{E}\left[M\sqrt{\ln{N}}\cdot 2^{\frac{\varepsilon}{2}}+\frac{1}{2}MN\mathrm{e}\sqrt{\ln{N}}\lvert\mathcal{T}_{\varepsilon}\rvert 2^{-\frac{\varepsilon}{2}}+2\cdot 2^{-\frac{\varepsilon}{2}}M\sqrt{\ln{N}}\left(2^{\varepsilon-1}+\frac{1}{\varepsilon}\right)+2\cdot 2^{\frac{\varepsilon}{2}}M \right] \notag\\
        &= \sum_{\varepsilon=1}^{E}\left[M\sqrt{\ln{N}}\left(2^{\frac{\varepsilon}{2}}+2\cdot 2^{-\frac{\varepsilon}{2}}\left(2^{\varepsilon-1}+\frac{1}{\varepsilon}\right)\right)+\frac{1}{2}MN\mathrm{e}\sqrt{\ln{N}}\lvert\mathcal{T}_{\varepsilon}\rvert 2^{-\frac{\varepsilon}{2}}+2\cdot 2^{\frac{\varepsilon}{2}}M \right] \notag\\
        &\leq \sum_{\varepsilon=1}^{E}\left[3M\sqrt{\ln{N}}\cdot 2^{\frac{\varepsilon}{2}}+\frac{1}{2}MN\mathrm{e}\sqrt{\ln{N}}\lvert\mathcal{T}_{\varepsilon}\rvert 2^{-\frac{\varepsilon}{2}}+2\cdot 2^{\frac{\varepsilon}{2}}M \right] \notag\\
        &= \left(3M\sqrt{\ln{N}}+2M\right)\sum_{\varepsilon=1}^{E}2^{\frac{\varepsilon}{2}} + \frac{1}{2}MN\mathrm{e}\sqrt{\ln{N}}\sum_{\varepsilon=1}^{E}\lvert\mathcal{T}_{\varepsilon}\rvert 2^{-\frac{\varepsilon}{2}} \notag\\
        &= \left(3M\sqrt{\ln{N}}+2M\right)\frac{2^{\frac{E}{2}}-1}{\sqrt{2}-1} + \frac{1}{2}MN\mathrm{e}\sqrt{\ln{N}}\sum_{\varepsilon=1}^{E}\lvert\mathcal{T}_{\varepsilon}\rvert 2^{-\frac{\varepsilon}{2}} \notag\\
        &= \left(3M\sqrt{\ln{N}}+2M\right)\left(\sqrt{2}+1\right)\left(\sqrt{2}\cdot \sqrt{2^{E-1}}-1\right) + \frac{1}{2}MN\mathrm{e}\sqrt{\ln{N}}\sum_{\varepsilon=1}^{E}\lvert\mathcal{T}_{\varepsilon}\rvert 2^{-\frac{\varepsilon}{2}} \notag\\
        &\mathop{\leq}\limits_{(b)} \left(3M\sqrt{\ln{N}}+2M\right)\left(\sqrt{2}+1\right)\left(\sqrt{2}\cdot\sqrt{\frac{1}{M}\sum_{t=1}^{T}\sum_{j=1}^{M}d_t^j}-1\right) + \frac{1}{2}MN\mathrm{e}\sqrt{\ln{N}}\cdot 5\sqrt{T} \notag\\
        &\leq \left(11\sqrt{M\ln{N}}+7\sqrt{M}\right)\sqrt{\sum_{t=1}^{T}\sum_{j=1}^{M}d_t^j} + \frac{5}{2}MN\mathrm{e}\sqrt{T\ln{N}}, \notag
    \end{align}
    where Ineq. $(a)$ holds due to $\sum_{t\in\mathcal{T}_{\varepsilon}}\sum_{(s,j)\in \Phi_t}d_s^j\leq\left(2^{\varepsilon-1}+\frac{1}{\varepsilon}\right)M$ in Lemma \ref{lem5} and $\lvert\Omega_{\varepsilon}\rvert \leq 2^{\frac{\varepsilon}{2}}\cdot 2M$ in Lemma \ref{lem6}, while Ineq. $(b)$ holds due to Lemma \ref{lem7}. 
\end{proof}

\end{document}